\crefname{assumption}{Assumption}{Assumption}
\crefname{fact}{Fact}{Fact}
\crefname{tufwrule}{Rule}{Rule}
\Crefname{ALC@unique}{Line}{Lines}
\colorlet{texcscolor}{blue!50!black}
\colorlet{texemcolor}{red!70!black}
\colorlet{texpreamble}{red!70!black}
\colorlet{codebackground}{black!25!white!25}
\newcommand{\erml}{\text{ERM}_\ell}
\newcommand{\erm}{\text{ERM}}
\newcommand{\rone}{\mathit{{Rule\mathrm{-}SBD}}\sqrt{k}}
\newcommand{\rthree}{\mathit{{Rule\mathrm{-}DBD}}\sqrt{k}}
\newcommand{\rtwo}{\mathit{{Rule\mathrm{-}SBD}}\sqrt[4]{K}}
\newcommand{\rfour}{\mathit{{Rule\mathrm{-}DBD}}\sqrt[4]{K}}
\newcommand{\rfive}{\mathit{{Rule\mathrm{-}}}\emptyset}
\lstdefinestyle{siamlatex}{%
	style=tcblatex,
	texcsstyle=*\color{texcscolor},
	texcsstyle=[2]\color{texemcolor},
	keywordstyle=[2]\color{texemcolor},
	moretexcs={cref,Cref,maketitle,mathcal,text,headers,email,url},
}
\DeclareTotalTCBox{\code}{ v O{} }
{ 
	fontupper=\ttfamily\color{black},
	nobeforeafter,
	tcbox raise base,
	colback=codebackground,colframe=white,
	top=0pt,bottom=0pt,left=0mm,right=0mm,
	leftrule=0pt,rightrule=0pt,toprule=0mm,bottomrule=0mm,
	boxsep=0.5mm,
	#2}{#1}
\patchcmd\newpage{\vfil}{}{}{}
	\title{Using Taylor-Approximated Gradients to Improve the Frank-Wolfe Method for Empirical Risk Minimization}
	\author{Zikai Xiong\thanks{MIT Operations Research Center, 77 Massachusetts Avenue, Cambridge, MA 02139, USA (\email{zikai@mit.edu}). Research
			supported by AFOSR Grant Nos. FA9550-19-1-0240 and FA9550-22-1-0356.
		}  
		\and Robert M. Freund\thanks{MIT Sloan School of Management, 77 Massachusetts Avenue, Cambridge, MA 02139, USA (\email{rfreund@mit.edu}). Research
			supported by AFOSR Grant Nos. FA9550-19-1-0240 and FA9550-22-1-0356.}}
	\title{Using Taylor-Approximated Gradients to Improve the Frank-Wolfe Method for Empirical Risk Minimization}
	\author{Zikai Xiong\thanks{MIT Operations Research Center, 77 Massachusetts Avenue, Cambridge, MA 02139, USA (\email{zikai@mit.edu}). Research
			supported by AFOSR Grant Nos. FA9550-19-1-0240 and FA9550-22-1-0356.
		}
		\and Robert M. Freund\thanks{MIT Sloan School of Management, 77 Massachusetts Avenue, Cambridge, MA 02139, USA (\email{rfreund@mit.edu}). Research
			supported by AFOSR Grant Nos. FA9550-19-1-0240 and FA9550-22-1-0356.}}
\newcommand{\ignore}[1]{}
\newcommand{\eps}{\varepsilon}
\newcommand{\calB}{{\cal B}}
\newcommand{\calC}{{\cal C}}
\newcommand{\calG}{{\cal G}}
\newcommand{\calO}{{\cal O}}
\newcommand{\calU}{{\cal U}}
\begin{document}
	\maketitle
	
	
	\begin{abstract}The Frank-Wolfe method has become increasingly useful in statistical and machine learning applications, due to the structure-inducing properties of the iterates, and especially in settings where linear minimization over the feasible set is more computationally efficient than projection. In the setting of Empirical Risk Minimization -- one of the fundamental optimization problems in statistical and machine learning -- the computational effectiveness of Frank-Wolfe methods typically grows linearly in the number of data observations $n$.  This is in stark contrast to the case for typical stochastic projection methods. In order to reduce this dependence on $n$, we look to second-order smoothness of typical smooth loss functions (least squares loss and logistic loss, for example) and we propose amending the Frank-Wolfe method with Taylor series-approximated gradients, including variants for both deterministic and stochastic settings. Compared with current state-of-the-art methods in the regime where the optimality tolerance $\varepsilon$ is sufficiently small, our methods are able to simultaneously reduce the dependence on large $n$ while obtaining optimal convergence rates of Frank-Wolfe methods, in both the convex and non-convex settings.  We also propose a novel adaptive step-size approach for which we have computational guarantees.  Last of all, we present computational experiments which show that our methods exhibit very significant speed-ups over existing methods on real-world datasets for both convex and non-convex binary classification problems.
	\end{abstract}
	
	\begin{keywords} Frank-Wolfe, linear minimization oracle, Empirical Risk Minimization, linear prediction, computational complexity, convex optimization
	\end{keywords}
	
	\begin{AMS} 
		90C25, 90C26, 90C06, 90C60, 68Q25
		
	\end{AMS}
	

	\section{Introduction}\label{introsec}

	Our problem of interest is the following Empirical Risk Minimization (ERM) problem (also called the \textit{finite-sum} problem) with constraints: 
	\begin{equation}\label{pro: ERM}
		\text{ERM:} \ \ \ \ \    \text{minimize}_{x \in \calC} \quad F(x):= \tfrac{1}{n} \sum_{i=1}^{n} f_i(x) \ , 
	\end{equation}
	where $f_i(\cdot): \mathbb{R}^p \to \mathbb{R}$ is the loss function of observation $i$ for $i=1,\dots, n$, and $x$ is comprised of the (model) coefficients/parameters which are to be optimized over the feasibility set $\calC \subset \mathbb{R}^p$. Problem \cref{pro: ERM} is a fundamental problem in statistical and machine learning, as it is the underlying optimization problem on which models are ``trained'' or ``learned'', see  \cite{vapnik1991principles,hastie2009elements}. The number of observations $n$ can be significantly larger than the dimension $p$, and our interest here will be in the regime where $n \gg p$.
	We make the following assumptions for $\erm$ problems throughout this paper:
	\begin{assumption}\label{assump: ERM}
		The following conditions hold for the $\erm$ problem \cref{pro: ERM}:
		\begin{enumerate}
			\item[(i)] the feasibility set $\mathcal{C}$ is a compact convex set,
			\item[(ii)] for any $g \in \mathbb{R}^p$, the linear minimization problem $\arg\min_{x\in\calC} g^\top x$
			can be easily solved, 
			\item[(iii)] for all $i\in[n]$, the multivariate loss function $f_i(\cdot)$ is twice-differentiable and its gradient $\nabla f_i(\cdot)$ is $L$-Lipschitz continuous on $\calC$, namely, 
			\begin{equation}\label{eq: ERM ass lip 1}
				\| \nabla f_i(u) - \nabla f_i(v) \|_* \le L \|u-v\| ~~\text{ for any $u,v\in \calC$}
			\end{equation}
			where $\|\cdot\|_*$ is the dual norm of the norm on the space of variables, and 
			\item[(iv)] for all $i\in[n]$, the Hessian $\nabla^2 f_i(\cdot)$ is $\hat{L}$-Lipschitz continuous on $\calC$, namely
			\begin{equation}\label{eq: ERM ass lip 2}
				\| \nabla^2 f_i(u) - \nabla^2 f_i(v) \| \le \hat{L} \|u-v\|~~\text{ for any $u,v\in \calC$}
			\end{equation}
			where the norm of the Hessian matrices is the operator norm induced by the norm on the space of variables.
		\end{enumerate}
	\end{assumption}
	The first three conditions of \cref{assump: ERM} are canonical for Frank-Wolfe and other methods that rely on a Linear Minimization Oracle (LMO).  However, condition (iv) of \cref{assump: ERM} concerns the smoothness of the Hessian matrices of the loss functions, and does not typically appear in the literature.  In  \cref{hereiam} we discuss and provide  justification for this extra condition.  
	
	\subsection{Examples of oft-used loss functions with smooth second derivatives}\label{hereiam}
	
	Here we discuss three oft-used loss functions that have smooth Hessian matrices and therefore satisfy condition (iv) of  \cref{assump: ERM}.  
	
	\paragraph{Quadratic losses} In very many practical instances of $\erm$ the loss functions are simple quadratics of the form $f_i(x) := \frac{1}{2} (y_i - a_i^\top x)^2$ for some vector $a_i$ and value $y_i$, such as in LASSO instances \cite{tibshirani1996regression}, matrix completion with quadratic losses, and structured sparse matrix estimation with CUR factorization \cite{mairal2011convex,lu2021generalized}. For these loss functions the Lipschitz constants are ${L} = (\sup_{i}\|a_i\|_*)^2$ and $\hat{L} = 0$.

	
	\paragraph{Logistic regression for binary classification}
	In logistic regression we are given a training set $\{(w_i,y_i)\}_{i=1}^n$ with $w_i\in\mathbb{R}^p$ and $y_i \in \{-1,1\}$, and we suppose that $\sup_{i}\|w_i\|_* \le M$.  We do maximum likelihood estimation by computing the optimal solution to the following logistic regression optimization problem:
	\begin{equation}\label{pro: logistic regression}
		\min_{x\in\calC}~  F(x):=\tfrac{1}{n} \sum_{i=1}^{n} \ln(1+\exp( - y_i w_i^\top x)) \ . 
	\end{equation}
	Let $l_i(v) := \ln(1+\exp( - y_i  v))$ and $f_i(x):= l_i(w_i^\top x)$, whereby $F(x) = \frac{1}{n}\sum_{i=1}^n f_i(x) $.  For each $i \in [n]$, we have $l_i(v) = \ln(1+\exp(-y_iv))$,   $l_i'(v) = y_i\frac{-1}{1+\exp(y_i v)}$, $l_i''(v) = y_i^2 \frac{\exp(y_i v)}{(1+\exp(y_i v))^2}$, and $l_i'''(v) = y_i^3 \frac{\exp(y_iv) - \exp(2y_i v)}{(1+\exp(y_i v))^3}$. We note that $|l_i''(v)|$ obtains its maximal value of $\tfrac{1}{4}$ at $v=0$, and hence $|l_i''(v)| \le \tfrac{1}{4}$. Similarly, $|l_i'''(v)|$ obtains its maximal value of $\tfrac{1}{6\sqrt{3}}$ when $\exp(y_i v) = 2-\sqrt{3}$ and so $|l_i'''(v)| \le \tfrac{1}{6\sqrt{3}}$.  Therefore \cref{assump: ERM} holds for logistic losses with Lipschitz constants $L= \tfrac{1}{4}\cdot M^2$ and $\hat{L}  = \tfrac{1}{6\sqrt{3}} \cdot M^3$.

	\paragraph{Binary linear classification with non-convex losses \cite{mei2018landscape}}
	We are given a training set $\{(w_i,y_i)\}_{i=1}^n$ with $w_i\in\mathbb{R}^p$ and $y_i \in \{0,1\}$, and we suppose that $\sup_{i}\|w_i\|_* \le M$.  The task is to learn $\mathbb{P}[Y = 1|w = w_i] = \sigma(w_i^\top x)$ for a given univariate threshold function $\sigma: \mathbb{R}\to [0,1]$, which leads to the optimization problem:
	\begin{equation}\label{pro:  linear classification with nonconvex Losses}
		\min_{x\in\calC} ~ \tfrac{1}{n} \sum_{i=1}^{n} (y_i - \sigma(w_i^\top x))^2 \ ,
	\end{equation}
	where $\sigma: \mathbb{R}\to [0,1]$ is the given threshold function such as the sigmoid function $\sigma(v) = (1+\exp(-v))^{-1}$.  
	Let $l_i(\cdot) := (y_i - \sigma(\cdot))^2$ and $f_i(x):= l_i(w_i^\top x)$.  Then we have $F(x) = \frac{1}{n}\sum_{i=1}^n f_i(x) $, and we note in general that \cref{pro:  linear classification with nonconvex Losses} is a non-convex optimization problem.  We have
	$l_i'(v) = -2\sigma'(v)(y_i - \sigma(v))$, $l_i''(v) = 2(\sigma'(v))^2 - 2\sigma''(v)(y_i-\sigma(v))$, and $l_i'''(v) = 6\sigma'(v)\sigma''(v) - 2\sigma'''(v)(y_i-\sigma(v))$.
	Suppose that there exists a constant $L_\sigma$ such that $|\sigma'(v)|$, $|\sigma''(v)|$, and $|\sigma'''(v)| \le L_\sigma$ for any $v\in \mathbb{R}$.  Then it follows that $|l_i''(x)| \le 2 L_\sigma^2 + 2 L_\sigma$ and $|l_i'''(x)| \le 6 L_\sigma^2 + 2 L_\sigma$. Therefore \cref{assump: ERM} holds here with Lipschitz constants $L = (2 L_\sigma^2 + 2 L_\sigma)\cdot M^2$ and $\hat{L}= (6 L_\sigma^2 + 2 L_\sigma)\cdot M^3$.  
	
	In the often-encountered case when $\sigma$ is the sigmoid function, we have $\sigma'(x)= \frac{\exp( x)}{(1+\exp( x))^2}$, $\sigma''(x)=  \frac{\exp( x)  (\exp( x) - 1)}{(1+\exp( x))^3}$, and $\sigma'''(x) =  \frac{\exp( x) (1-4\exp( x) + \exp(2 x)) }{(1 + \exp( x))^4}$, and thus $|\sigma'(x)| \le 1/4$, $|\sigma''(x)| \le 1/{6\sqrt{3}}$ and $|\sigma'''(x)| \le 1/24$.
	Because $y_i\in\{0,1\}$ and $\sigma(x)\in[0,1]$, then $|y_i - \sigma(x)|\le 1$, and it holds that $|l_i''(x)|\le  \frac{1}{8}+\frac{1}{3\sqrt{3}}$ and $|l_i'''(x)| \le  \frac{1}{4\sqrt{3}} + \frac{1}{12}$.  Therefore $\nabla f_i(\cdot)$ is $(\frac{1}{8}+\frac{1}{3\sqrt{3}})M^2$-Lipschitz continuous and $\nabla^2 f_i(\cdot)$ is $ (\frac{1}{4\sqrt{3}} + \frac{1}{12})M^3$-Lipschitz continuous.

	\subsection{Loss functions with ``linear prediction''}

	In very many empirical risk minimization problems -- including the examples described in in \cref{hereiam} -- the loss functions exhibit ``linear prediction'' \cite{lu2021generalized}, namely each loss function is of the form $f_i(x):= l_i(w_i^\top x)$ for a feature vector $w_i$ of observation $i$, and $l_i(\cdot): \mathbb{R} \to \mathbb{R}$ is a univariate function; thus $f_i$ is a scalar function of the linear function $w_i^\top x$.  In this case \cref{pro: ERM} becomes
	\begin{equation}\label{pro: ERM with linear prediction}
		\erml : \ \ \    \text{minimize}_{x \in \calC} \quad F(x):= \tfrac{1}{n} \sum_{i=1}^{n}  l_i(w_i^\top x) \quad (\text{and } f_i(x):=l_i(w_i^\top x) ) \ .
	\end{equation}
	We refer to \cref{pro: ERM with linear prediction} as ``$\erml$'' for ``ERM with linear prediction'', because the $i$-th predicted value of interest is the linear function $w_i^\top x$ of the model coefficients $x$. This structure arises in very many canonical applications of statistical and machine learning problems, including support vector machines (SVMs), LASSO, logistic regression, and matrix completion, among others, see \cite{mei2018landscape,lu2021generalized} for other examples.  Let $W \in \mathbb{R}^{p \times n}$ be the matrix whose columns are comprised of the $n$ feature vectors $w_1,~ w_2,~ \dots,~w_n$ in \cref{pro: ERM with linear prediction}. 
	Let $w_i(\calC) \subset \mathbb{R}$ denote the range of $w_i^\top x$ over $x\in \calC$. For the specific case of empirical risk minimization with linear prediction, the following assumption is a more specialized version of the conditions in \cref{assump: ERM}:
	\begin{assumption}\label{assump: ERM with linear prediction}
		The following conditions hold for the linear prediction model problem $\erml$  \cref{pro: ERM with linear prediction}: 
		\begin{enumerate}
			\item[(i)] the feasibility set $\mathcal{C}$ is a compact convex set,
			\item[(ii)] for any $g \in \mathbb{R}^p$, the linear minimization problem $\arg\min_{x\in\calC} g^\top x$
			can be easily solved, 
			\item[(iii)] for all $i \in [n]$ the univariate loss function $l_i(\cdot)$ is twice-differentiable and its first-order derivative $l_i'(\cdot)$ is $L$-Lipschitz continuous on the range of $w_i^\top x$ over $x\in \calC$, namely
			\begin{equation}\label{assump: eq: ERM structured l'}
				|l_i'(\bar\theta) - l_i'(\hat\theta) | \le L |\bar\theta - \hat\theta| ~~\text{ for any $\bar\theta, \hat\theta \in w_i(\calC)$, and}
			\end{equation}
			
			\item[(iv)] for all $i \in [n]$  the second-order derivative $l_i''(\cdot)$ is $\hat L$-Lipschitz continuous on the range of $w_i^\top x$ over $x\in \calC$, namely
			\begin{equation}\label{assump: eq: ERM structured l''}
				|l_i''(\bar\theta) - l_i''(\hat\theta) | \le \hat L |\bar\theta - \hat\theta| ~~\text{ for any $\bar\theta, \hat\theta \in w_i(\calC)$ .}
			\end{equation}
		\end{enumerate}
	\end{assumption}
	We note that \cref{assump: ERM with linear prediction} holds for all three examples in \cref{hereiam}. When viewed as instances of $\erml$, the Lipschitz constants $L$ and $\hat{L}$ in \cref{assump: ERM with linear prediction} are independent of $\sup_i\|w_i\|_*$. For the example of quadratic losses, we have $L \le 1$ and $\hat{L}  = 0$. For the example of logistic regression for binary classification \cref{pro: logistic regression} we have $L \le \tfrac{1}{4}$ and $\hat{L} \le \tfrac{1}{6\sqrt{3}}$. For the example of binary linear classification with non-convex losses\cref{pro:  linear classification with nonconvex Losses}, we have $L \le 2 L_\sigma^2 + 2 L_\sigma$ and $\hat{L} \le 6 L_\sigma^2 + 2 L_\sigma$.  And in the special case when $\sigma(\cdot)$ is the sigmoid function, we have $L \le  \frac{1}{8}+\frac{1}{3\sqrt{3}}$ and $\hat{L} \le  \frac{1}{4\sqrt{3}} + \frac{1}{12}$.

	\subsection{Motivation and literature review}
	With an ever-increasing number $n$ of data observations in statistical and machine learning applications in $\erm$, there is growing importance in solving models where the number of observations $n$ is very large (compared, say, to the dimension $p$ of the features). The presence of large $n$ does not pose a particular problem (at least in theory) for a typical version of the stochastic gradient descent method (SGD), which forms a gradient estimate at each iteration by randomly sampling a mini-batch of observations and then projects back onto the feasibility set $\calC$ after taking a gradient-estimate-based step. The computational complexity of SGD is not adversely affected by large values of $n$. 
	
	However, in many applications of $\erm$ it is more attractive to use the Frank-Wolfe method or a version thereof, due to the structure-inducing properties of the iterates (sparsity, low-rank) especially when combined with in-face steps, see for example \cite{freund2017}.  The Frank-Wolfe method is also attractive in settings where a Linear Minimization Oracle (LMO) on the feasibility set $\calC$ is more computationally efficient than projection onto $\calC$,  where an LMO is a subroutine that solves and returns
	$$
	\arg\min_{x\in \calC} \langle g,x\rangle  \ 
	$$ for any given objective function vector $g \in \mathbb{R}^p$. For example, when $\calC$ is the nuclear norm ball, the LMO only requires computing the leading singular vector of a matrix, as compared to a projection oracle which requires a full SVD of a matrix, see \cite{jaggi2013revisiting} for example. As a result, the Frank-Wolfe method, which replaces gradient steps and projection by an LMO instead, has become increasingly useful in statistical and machine learning applications, for both convex and non-convex loss functions. However, when the number of observations $n$ is large, the Frank-Wolfe method has lacked ``good'' efficiency as compared to SGD, as the number of operations of Frank-Wolfe typically grows as $O(n/\eps)$. The primary motivation of the research in this paper therefore is to address the following question:
	\begin{itemize}
		\item[] \textit{Are there efficient stochastic (or deterministic) Frank-Wolfe methods}\vspace{-.04cm} 
		\item[] \textit{that eliminate or reduce the dependence on the number of observations $n$,}\vspace{-.04cm}
		\item[] \textit{in theory and in practice?}
	\end{itemize}\medskip
	
	In the last decade many researchers have studied stochastic versions of the Frank-Wolfe method. Before discussing these related works, let us introduce some useful measurements for Frank-Wolfe methods. We use the following Frank-Wolfe gap $\calG(x)$ as a  measure of non-stationarity:
	\begin{equation}\label{def: FW gap}
		\calG(x) := \max_{s \in \calC}\langle x - s, \nabla F(x) \rangle \ .
	\end{equation}
	The Frank-Wolfe gap is always nonnegative and is $0$ if and only if $x$ is a stationary point of \cref{pro: ERM with linear prediction}. 
	For problems with convex losses $F$, supposing that $x^\star$ is an optimal solution, we say $x$ is an $\eps$-optimal solution if and only if the optimality gap
	$
	F(x) - F(x^\star)
	$
	is no larger than $\eps$. Since the Frank-Wolfe gap is an upper bound on the optimality gap when $F$ is convex, $\calG(x^k)$ is a conservative measure of the optimality gap at $x^k$, see \cite{jaggi2013revisiting} for example. The Frank-Wolfe gap does not necessarily bound the optimality gap in the non-convex setting; nevertheless $\calG(x)$ is always nonnegative and $\calG(x) =0$ if and only if $x$ is a stationary point of \cref{pro: ERM with linear prediction}, and for this reason $\calG(x)$ is often used as a measure of non-stationarity at $x$, see \cite{lacoste2016convergence,reddi2016stochastic,yurtsever2019conditional,negiar2020stochastic}. For an instance of \cref{pro: ERM with linear prediction} with non-convex objective function $F$, we therefore say that a point $x \in \calC$ is an $\eps$-stationarity point of \cref{pro: ERM with linear prediction} if $\calG(x)\le \eps$.

	Generally speaking, for deterministic versions of the Frank-Wolfe method the lower bound LMO complexity to obtain an $\eps$-optimal solution is $O(1/\eps)$, see \cite{jaggi2013revisiting,lan2013complexity}. And in the case when $F$ is non-convex, the LMO complexity is at least $O(1/\eps^2)$, see \cite{shen2019complexities}. These deterministic methods require access to exact full-batch gradients per iteration, whose complexity scales with the number of observations $n$. When $n$ is very large, computing exact gradients might be no longer efficient. We refer the reader to \cite{frank1956algorithm,jaggi2013revisiting,lan2016conditional,lacoste2016convergence} among other recent deterministic versions of the Frank-Wolfe method in this context.
	
	Stochastic Frank-Wolfe methods avoid computing the exact gradient in each iteration by doing stochastic gradient estimation aimed at reducing the dependence on $n$ in the overall computational complexity. Most existing stochastic Frank-Wolfe methods typically replace exact gradients by the average of sampled gradients as recorded in the current or previous iterates, so that they can reduce (or eliminate) the dependence on $n$ in the overall complexity.  Unfortunately the approaches that eliminate the dependence on $n$ do so at the expense of a higher overall LMO complexity which tends to grow much faster than $O(1/\eps)$; see \cite{hazan2016variance,lan2016conditional,mokhtari2020stochastic} for representative work in this vein.
	A second approach is to estimate gradients by the variance reduction techniques which emerged from other stochastic first-order methods, such as SVRG \cite{johnson2013accelerating} and SPIDER \cite{fang2018spider}. These methods still periodically require access to the exact full gradient, and they also compute averages from a gradually increasing number of gradient samples, see \cite{hazan2016variance,reddi2016stochastic,yurtsever2019conditional,shen2019complexities,hassani2020stochastic}. Also unfortunately, some of these methods perform worse in practice than the methods with even weaker theoretical guarantees  \cite{hazan2016variance}.
	A third approach, that applies just to $\erml$, is based on the primal-dual structure of $\erml$ when $f$ is convex. 
	However, depending on the specific assumptions and problem set-up, the overall dependence of these methods on $\eps$ and $n$ is still $O(n/\eps)$, which is the same as the traditional (deterministic) Frank-Wolfe method, see \cite{lu2021generalized,negiar2020stochastic}.
	In addition to the above approaches, some methods avoid directly computing gradients by estimating them via zeroth-order information (see, e.g., \cite{gao2020can,huang2020accelerated,balasubramanian2018zeroth,sahu2019towards}). The theoretical complexities of these methods are usually no better than the methods using gradients. 
	Recently, a new line of research has emerged that seeks to estimate gradients with the aid of second-order information, as seen in \cite{shen2019complexities,zhang2020one,hassani2020stochastic}. However, the cost per iteration in these methods may still grow fast, or they still periodically require access to the exact gradient. In summary, there are no existing stochastic Frank-Wolfe methods (that we are aware of) whose overall complexity does not scale with $n$ while maintaining a dependence on $\eps $ of only $O(1/\eps)$ (for convex losses) or $O(1/\eps^2)$ (for non-convex losses).

	\subsection{Contributions}
	\begin{itemize}
		\item We develop a new family of Frank-Wolfe methods for $\erm$, which we call TUFW for ``Taylor-point Updating Frank-Wolfe,'' that replaces the exact gradient computation by a sum of (second-order) Taylor-approximated gradients around some current and previous iterates, which we call the \textit{Taylor points}. 
		\item Different versions of TUFW are developed based on different rules for constructing the batches of observations for Taylor-point updating at each iteration. For both convex and non-convex losses, we propose both stochastic and deterministic rules.  Our rules exhibit a {\em decreasing} number of gradient calls over the course of the intended iterations, while retaining the optimal LMO complexity of $O(1/\eps)$ (for convex losses) and $O(1/\eps^2)$ (for non-convex losses), together with other flops ($O(n/\sqrt{\eps})$ for convex losses and $O(n/\eps^{3/2})$ for non-convex losses.  In the regime when $\eps$ is sufficiently small these other flops are minor compared with the LMO complexity. Our stochastic TUFW also avoids periodically computing exact gradients. 
		\item We present computational experiments which show that our methods exhibit very significant speed-ups over existing methods on real-world datasets for both convex and non-convex binary classification problems. 
		\item We also propose a novel adaptive step-size method which has similar theoretical guarantees, and for which our computational experiments indicate superior performance in practice.
		
	\end{itemize}

	\subsection{Outline} In \cref{tufw} we present our new type of Frank-Wolfe method, which we call TUFW for ``Taylor-point Updating Frank-Wolfe'', along with some elementary properties of the method.  In \cref{sec: Convergence Guarantees Convex} we present computational guarantees and overall complexity of TUFW for convex losses, and in \cref{sec: Convergence Guarantees non-convex} we present computational guarantees and overall complexity of TUFW for nonconvex losses. 
	In \cref{sec erml problems} we specialize our results to the setting of $\erml$ problems.
	In \cref{sec: Extensions} we propose an adaptive step-size scheme. 
	Finally, \cref{exper} contains computational experiments for both convex and non-convex instances.
	
	\subsection{Notation}
	
	Let $\mathbb{N}$ denote the natural numbers.  For any $n \in \mathbb{N}$, we use $[n]$ to denote the set $\{1,2,\dots,n\}$.
	For any real number $v$, we use $\lfloor v \rfloor$ to denote the largest integer number less than or equal to $v$.
	Depending on the context, we will use either $\langle a,b\rangle$ or $a^\top b$ to denote the inner product between $a$ and $b$ in Euclidean space. For a vector $x$ in $\mathbb{R}^p$, the norm $\|x\|_q$ is given by $(\sum_{j=1}^{p}x_j^q)^{1/q}$ for $q \in [1,\infty)$ and $\|x\|_\infty := \max_{j} |x_j|$. The dual of norm $\|\cdot\|$ is defined as $\|z\|_* := \sup\{z^\top x : \|x\| \le 1\}$.  For a matrix $A \in \mathbb{R}^{p \times p}$, the matrix operator norm $\|A\|$ induced by the norm $\|\cdot\|$ is defined as $\sup_{x: \|x\| \le 1} \|Ax\|_*$. For any symmetric positive definite matrix $A\in \mathbb{S}^{p\times p}$, we define its induced norm on $\mathbb{R}^p$ by $\|z\|_A := \sqrt{z^\top A z}$.  We define $\sum_{i=j}^k \alpha_i := 0$ whenever $j > k$.  We denote the uniform distribution on the set $S$ by $\calU(S)$, and we denote the Bernoulli distribution with probability $p$ by $\mathrm{Ber}(p)$.

	\section{A Frank-Wolfe method with Taylor-point Updating (TUFW)}\label{tufw}
	
	In this section we present a new type of Frank-Wolfe algorithm whose newness derives from the way in which approximate gradients are computed.  We call the algorithm TUFW for ``Taylor-point Updating Frank-Wolfe''. Algorithm TUFW is similar to the standard Frank-Wolfe method except that it computes an inexpensive estimate $g^k$ of the gradient $\nabla F(x^k)$ at iteration $k$.  And it differs from other Frank-Wolfe methods in the way it computes the estimate $g^k$ in that (i) TUFW explicitly uses second-derivative (Hessian) information of the loss functions $f_i$, and (ii) it can use very different batch-size rules -- both stochastic and deterministic. TUFW is described in \cref{alg: TUFW in general finite sum}.  We now explain the steps of the method.

	\begin{algorithm}
		\caption{(TUFW) Frank-Wolfe Method with Taylor Point Updating}
		\label{alg: TUFW in general finite sum}
		\begin{algorithmic}[1]
			\STATE \textbf{input}  initial point $x^0$, step-size sequence $\{\gamma_k\}_k$, and (possibly) number of iterations $K$
			\STATE \textbf{initialize Taylor points} $\calB_0 \gets [n]$ and $b_i \leftarrow x^0$ for $i\in [n]$  \; \label{bettina}
			\FOR{$k=0,1,2,\dots,K$,}
			\STATE\label{algline: 3}\textbf{If $k>0$, update Taylor points}: \\ \ \ Use some \textbf{Rule} to construct set $\calB_k \subset [n]$ of indices for Taylor-point updating\\ \ \ Update Taylor points:  for $i\in\calB_k$ update $b_i \gets x^k$ \; \label{fanette}
			\STATE\label{algline: 4}  \textbf{Compute estimates of individual gradients}:\\ $g^k_{i}\gets \nabla f_{i}(b_{i})  +  \nabla^2 f_i (b_{i}) ( x^k -  b_{i} ) $ for $i\in [n]$\; \label{charlie}
			\STATE\label{algline: 5}   \textbf{Compute estimate of full gradient}: $g^k\gets \frac{1}{n}\sum_{i=1}^{n}g_i^k$\; \label{beverly}
			\STATE\label{algline: 6}   \textbf{Solve LMO}: $s^{k}\gets \arg\min_{s\in\calC} \langle g^k,s\rangle$\; \label{alex} 
			\STATE\label{algline: 7}   \textbf{Update feasible solution}: $x^{k+1}\gets x^k + \gamma_k (s^{k} - x^k)$. \label{hotel}
			\ENDFOR
			\RETURN $x^{K+1}$
		\end{algorithmic}
	\end{algorithm}

	Suppose we are at iteration $k$.  Then as a prerequisite to solving the LMO (Linear Minimization Oracle) problem in \cref{alex} of Algorithm TUFW, we first need to compute an (accurate) estimate $g^k$ of $\nabla F(x^k)$, which in turn means computing an (accurate) estimate $g_i^k$ of $\nabla f_i(x^k)$ for $i \in [n]$.  In an exact method we would simply compute $g_i^k \leftarrow \nabla f_i(x^k)$ and then compute $g^k\gets \frac{1}{n}\sum_{i=1}^{n}g_i^k$, the cost of which grows at least as $O(np)$ operations.  Here we instead consider, for each $i \in [n]$, computing the estimate $g_i^k$ of $\nabla f_i(x^k)$ by using the second-order Taylor approximation of $\nabla f_i(x)$ around some point $b_i$, resulting in the estimate:
	$$g^k_{i}\gets \nabla f_{i}(b_{i})  +  \nabla^2 f_i (b_{i}) ( x^k -  b_{i} )  \ \ \ \mathrm{for}\  i \in [n] \ , $$ 
	which is \cref{charlie} of TUFW.  We refer to $b_i$ as the ``Taylor point'' for the Taylor approximation of $\nabla f_{i}(\cdot)$, and $b_1, \ldots, b_n $ are the collection of Taylor points.  The Taylor points for different observations might be different, namely $b_i \ne b_j$.  Also, the Taylor point $b_i$ for observation $i$ will be {\em updated} from time to time over the course of the algorithm.  In Algorithm \ref{alg: TUFW in general finite sum} we initially set all Taylor points to be the initial point $x^0$, namely $b_i \leftarrow x^0$ for $i=1,2,\dots,n$ in \cref{bettina}.  Notice that there is a Taylor point $b_i$ for each observation index $i \in [n]$.  
	
	In order for the Taylor points $b_1, \ldots, b_n $ to yield sufficiently accurate gradient estimates $g^k_1, \ldots, g^k_n $ of $\nabla f_1(x^k), \ldots, \nabla f_n(x^k) $ at iteration $k$, we will need to update some/all of the Taylor points to the current feasible point iterate $x^k$ for some subset $\calB_k \subset [n]$ at iteration $k$ of Algorithm TUFW.  That is, we update $b_i \gets x^k$ for $i\in\calB_k$.  This update is done in \cref{fanette} of Algorithm TUFW.  
	
	The set $\calB_k$ is the set of indices whose Taylor points are updated to $x^k$ at iteration $k$, and $\calB_k$ can be determined by any kind of  {\bf Rule}.  As a trivial example, if one uses the rule $\calB_k = [n]$ for all $k$, then all Taylor points are updated to the current iterate $x^k$ at each iteration $k$, in which case TUFW specializes to an instantiation of the traditional Frank-Wolfe method with exact gradient computation.  As another example, one can use the rule that $\calB_k$ is a randomly chosen subset of the indices of cardinality, say, $50$, or $\lfloor n/10 \rfloor$, or perhaps is iteration dependent such as cardinality $\lfloor \ln(n) \sqrt{k} \rfloor$.  As a third example, one can use the rule that $\calB_k$ is chosen deterministically in a cyclic fashion with fixed batch-size $B$, so that $\calB_1 =  \{1, \ldots, B\}$, $\calB_2 = \{B+1, \ldots, 2B\}$, etc., with modular arithmetic so that  $\calB_k \subset [n]$.  Let $\beta_k := |\calB_k |$ be the cardinality of $\calB_k$.  As we will see in succeeding sections, we can attain nearly-optimal complexity by developing batch-size rules for which $\beta_k$ is a {\em decreasing} function of $k$, both in the convex and non-convex setting. 
	This decreasing batch-size property is in stark contrast to other versions of Frank-Wolfe that use sampling of the observations to compute gradient estimates, wherein the batch-sizes grow with $k$, see \cite{hazan2016variance,yurtsever2019conditional}.
	
	Going back to the description \cref{alg: TUFW in general finite sum}, the Taylor-approximated (at Taylor point $b_i$)  gradient of $f_i$ is computed for each observation $i \in [n]$ in \cref{charlie}, and the estimate of the full gradient is computed in \cref{beverly}.  Finally, the Linear Minimization Oracle (LMO) is called in \cref{alex} and the new iterate is computed in \cref{hotel} using the step-size $\gamma_k$ along the cord $[x^k, s^k]$.
	
	In the rest of the paper, we will presume that computing $\nabla f_i(\cdot)$ and $\nabla^2 f_i(\cdot)$ requires $O(p)$ and $O(p^2)$ flops respectively. (In the specific case of linear prediction in $\erml$, this is equivalent to presuming that computing $l_i'(\cdot)$ and $l_i''(\cdot)$ requires $O(1)$ flops.) Inspecting \cref{alg: TUFW in general finite sum}, it appears that the computation of the gradient estimate $g^k$ in \cref{charlie} and \cref{beverly} requires $O(np^2)$ flops. However, we now show how to do the computation of $g^k$ in these steps in only $O( | \calB_k| p^2)$ flops by efficiently using the computed value of $g^{k-1}$.

	\begin{proposition}\label{camille} Let $\bar\beta_k := | \calB_k| $.  Then the updating of the second-order Taylor approximation model requires $O(\bar{\beta}_{k} p^2)$ flops and the computation of $g^k$ can be done via a matrix-vector product, which takes at most  $O(p^2)$ flops.  
	\end{proposition}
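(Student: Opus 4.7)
The plan is to show that the sum defining $g^k$ admits an affine-in-$x^k$ decomposition whose coefficients depend only on the Taylor points $b_1,\ldots,b_n$, and to maintain these coefficients incrementally across iterations so that the per-iteration work scales with $\bar{\beta}_k$ rather than $n$.

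First I would rewrite each individual estimate from \cref{charlie} as
\begin{equation*}
g_i^k \;=\; \bigl[\nabla f_i(b_i) - \nabla^2 f_i(b_i)\, b_i\bigr] \;+\; \nabla^2 f_i(b_i)\, x^k,
\end{equation*}
which separates the part that depends only on the current Taylor point $b_i$ from the part that is linear in $x^k$. Introducing the running aggregates
\begin{equation*}
u^k \;:=\; \tfrac{1}{n}\sum_{i=1}^n \bigl[\nabla f_i(b_i) - \nabla^2 f_i(b_i)\, b_i\bigr], \qquad H^k \;:=\; \tfrac{1}{n}\sum_{i=1}^n \nabla^2 f_i(b_i),
\end{equation*}
one obtains the key identity $g^k = u^k + H^k x^k$, so that the $x^k$-dependence of $g^k$ is concentrated in a single matrix-vector product.

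Next I would describe the incremental update of the pair $(u^k,H^k)$ from step $k-1$ to step $k$. Because \cref{fanette} modifies $b_i$ only for indices $i \in \calB_k$, every summand of $u^k$ and $H^k$ indexed outside $\calB_k$ is unchanged. For each $i\in\calB_k$ we subtract the previous contribution (computed from the stored previous Taylor point) and add the new contribution at $x^k$; each such add/subtract requires forming $\nabla f_i(\cdot)$, $\nabla^2 f_i(\cdot)$, and the matrix-vector product $\nabla^2 f_i(\cdot)\,(\cdot)$, which under the stated cost model is $O(p^2)$ flops. Summing over $i \in \calB_k$ gives a total cost of $O(\bar{\beta}_k p^2)$ flops to refresh $(u^k,H^k)$, establishing the first claim. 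With these aggregates in hand, computing $g^k = u^k + H^k x^k$ reduces to a single $p \times p$ matrix-vector product and a vector addition, which is $O(p^2)$ flops independent of $n$, establishing the second claim.

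The main point requiring care is the bookkeeping needed to subtract the \emph{previous} contribution of each $i \in \calB_k$ cleanly: one must either retain the old Taylor point $b_i$ and recompute $\nabla f_i(b_i)$ and $\nabla^2 f_i(b_i)$ on the fly, or cache the per-observation contributions themselves; in either case the per-index cost remains $O(p^2)$. Crucially, no step after the one-time initialization in \cref{bettina} iterates over all $n$ observations, which is exactly what replaces the naive $O(np^2)$ recomputation of $g^k$ by the claimed $O(\bar{\beta}_k p^2) + O(p^2)$ bound.
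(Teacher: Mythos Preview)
Your proposal is correct and follows essentially the same approach as the paper: the paper likewise defines the aggregates $q_k := \tfrac{1}{n}\sum_i(\nabla f_i(b_i)-\nabla^2 f_i(b_i)b_i)$ and $H_k := \tfrac{1}{n}\sum_i \nabla^2 f_i(b_i)$ (your $u^k$ and $H^k$), updates them incrementally over $i\in\calB_k$ at $O(\bar\beta_k p^2)$ cost, and computes $g^k = q_k + H_k x^k$ via one matrix-vector product. Your additional remark on the bookkeeping for subtracting old contributions is a welcome clarification not spelled out in the paper.
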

	\begin{proof} [Proof of \cref{camille}]
		Consider iteration $k-1$ and let the Taylor points at this iteration be $b_1, \ldots, b_n$.  Let us write out the computation of $g^{k-1}$ as:
		\begin{small}
			$$\begin{aligned}
				g^{k-1} &= \tfrac{1}{n}\sum_{i=1}^n \nabla f_i(b_i)  + \tfrac{1}{n}\sum_{i=1}^n \nabla^2 f_i(b_i) (x^{k-1} - b_i) \\ 
				&= \tfrac{1}{n}\sum_{i=1}^n (\nabla f_i(b_i)  - \nabla^2 f_i(b_i)b_i) + \bigg[\tfrac{1}{n}\sum_{i=1}^n \nabla^2 f_i(b_i) \bigg] x^{k-1}  = q_{k-1} + H_{k-1} x^{k-1} \ , \end{aligned}$$
		\end{small}where $q_{k-1} = \tfrac{1}{n}\sum_{i=1}^n (\nabla f_i(b_i)  - \nabla^2 f_i(b_i)b_i)  $ and $H_{k-1} = \tfrac{1}{n}\sum_{i=1}^n \nabla^2 f_i(b_i)$.  
		We assume that we can store the vectors $b_i$ for $i \in [n]$, as well as the vector $q_{k-1}$ and Hessian matrix $H_{k-1}$.  
		
		At iteration $k$, let $\calB_k$ be the set of observation indices $i$ where we replace the Taylor point $b_i$ with $x^k$.  
		In order to compute $g^k$ we first update $q^{k-1}$ and $H_{k-1}$ to $q^{k}$ and $H_{k}$ as follows:
		\begin{small}
			\begin{equation}\label{eq: intermedia variables}
				\begin{aligned}
					q_k &= q_{k-1} + \tfrac{1}{n} \sum_{i \in \calB_k} \left(\nabla f_i(x^k) - \nabla f_i(b_i)  - \nabla^2 f_i(x^k)x^k + \nabla^2 f_i(b_i)b_i  \right)   \\
					H_k &= H_{k-1} + \tfrac{1}{n} \sum_{i \in \calB_k} \left(\nabla^2 f_i(x^k) - \nabla^2 f_i(b_i)   \right) \ .
				\end{aligned}
			\end{equation}
		\end{small}These two update computations require $O(\bar{\beta}_kp^2)$ flops. We then compute $g^k$ via $
		g^k = q_k + H_k x^k$, which uses one matrix-vector product and requires at most $O(p^2)$ flops. And last of all, we update the values of $b_i$ to $x^k$ for $i \in \calB_k$, which requires $O(\bar\beta_k p)$ flops. 
	\end{proof}

	Moreover, TUFW can be even more computationally efficient by utilizing sparsity properties. Since the Frank-Wolfe method often promotes sparse structured solutions \cite{jaggi2013revisiting}, the iterates $x^k$ are typically sparse in certain settings. Furthermore for problems such as matrix completion  \cite{fazel2002matrix, shen2019complexities}, the Hessian matrices are also highly sparse \cite{shen2019complexities}. Therefore matrix-vector products and the updating of $H_k$ can often be done with much fewer than $\calO(p^2)$ flops by using sparse linear algebra. 
	In addition, if the batches $\calB_k$ become empty with higher probability for large $k$, then $q_{k}$ and $H_k$ remain unchanged, and computing $g^k$ can be further simplified.

	\section{Convergence Guarantees for Convex Loss Functions}\label{sec: Convergence Guarantees Convex} 
	
	In this section we study convergence guarantees and overall complexity of \cref{alg: TUFW in general finite sum} for tackling the $\erm$ problem when $F$ is convex. 
	

	Recall that in order to run \cref{alg: TUFW in general finite sum} we need to specify the step-sizes $\{\gamma_k\}_k$ and the {\bf Rule} to construct the sets $\calB_k$ of indices for Taylor-point updating in \cref{fanette}.  Our first rule is designed 
	for $\calB_k$ to be comprised of $\beta_k := n /\sqrt{k}$ independently drawn samples from $[n]$ without replacement, for all $k \ge 1$.   Since $\beta_k$ is not in general an integer, then rather than ensuring $|\calB_k| = \beta_k$, we will instead ensure that $\mathbb{E}|\calB_k| = \beta_k$ by proceeding as follows.

	\begin{definition}\label{rule 1} $\rone$.  Define $\beta_k := n /\sqrt{k}$ and $p_k := \beta_k - \lfloor \beta_k \rfloor$.  Sample $\xi_k$ from the Bernoulli distribution $\mathrm{Ber}(p_k)$ and then uniformly sample $\lfloor \beta_k \rfloor + \xi_k$ samples from $[n]$ without replacement.  Then define $\calB_k$ to be the set of these index samples.\end{definition}
	
	Note that by design of $\rone$ it follows that $\mathbb{E}_{\xi_k} |\calB_k| = \beta_k$.  We call this rule ``$\rone$'' for ``Stochastic Batch-size Decreasing at the rate $\sqrt{k}$,'' and we emphasize that the Taylor points are updated less often as $k$ grows, which is in contrast to many other stochastic Frank-Wolfe methods that compute exact individual gradients more frequently as $k$ grows, see \cite{hazan2016variance,yurtsever2019conditional}. 
	
	We first present a bound on the total number of flops used in the first $k$ iterations of \cref{alg: TUFW in general finite sum} using $\rone$.  Let $\mathrm{fLMO}$ denote the number of flops required by the LMO in \cref{alex} of \cref{alg: TUFW in general finite sum}.

	\begin{proposition}\label{karljunior} Using $\rone$ and $k \ge 1$, the expected total number of flops used in the first $k$ iterations of \cref{alg: TUFW in general finite sum} is $O\left(k \cdot \left(\mathrm{fLMO} + p^2\right) + \sqrt{k} \cdot np^2\right)$.
	\end{proposition}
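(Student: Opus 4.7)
The plan is to account for per-iteration flops via \cref{camille}, then take expectations and sum over iterations using the decreasing schedule built into $\rone$.

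First I would isolate the cost of a single iteration $j \ge 1$ of Algorithm \ref{alg: TUFW in general finite sum}. By \cref{camille}, updating the intermediate quantities $q_j$ and $H_j$ from $q_{j-1}$ and $H_{j-1}$ in \cref{fanette}--\cref{beverly} costs $O(\bar\beta_j p^2)$ flops (where $\bar\beta_j = |\calB_j|$), and the subsequent matrix-vector product to form $g^j$ in \cref{beverly} costs $O(p^2)$ flops. Adding the LMO call in \cref{alex} and the convex-combination update in \cref{hotel} contributes $\mathrm{fLMO} + O(p)$ flops. Thus iteration $j \ge 1$ uses $O(\bar\beta_j p^2 + p^2 + \mathrm{fLMO})$ flops. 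I would separately note that iteration $j=0$ is special: since $\calB_0 = [n]$ per \cref{bettina}, the initial construction of $q_0$ and $H_0$ requires $O(np^2)$ flops.

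Next I would take expectation with respect to the Bernoulli variables $\xi_j$ defining $\rone$. By construction, $\mathbb{E}_{\xi_j}|\calB_j| = \lfloor \beta_j \rfloor + p_j = \beta_j = n/\sqrt{j}$, so the expected per-iteration updating cost at iteration $j\ge 1$ is $O((n/\sqrt{j})\, p^2)$. Summing over $j = 1, \dots, k$ and invoking the standard estimate
\begin{equation*}
\sum_{j=1}^{k} \frac{1}{\sqrt{j}} \;\le\; 1 + \int_{1}^{k} \frac{dt}{\sqrt{t}} \;=\; O(\sqrt{k}) \, ,
\end{equation*}
yields an aggregate expected updating cost of $O(\sqrt{k}\cdot np^2)$.

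Finally I would combine the pieces: the $k$ iterations contribute $O(k(\mathrm{fLMO} + p^2))$ for LMO calls, matrix-vector products, and iterate updates; the Taylor-point updating contributes $O(\sqrt{k}\cdot np^2)$ in expectation; and the iteration-$0$ initialization contributes $O(np^2)$, which is absorbed into $O(\sqrt{k}\cdot np^2)$ since $k \ge 1$. This gives the claimed bound $O(k(\mathrm{fLMO} + p^2) + \sqrt{k}\cdot np^2)$. The argument is essentially an exercise in cost accounting, and the only step requiring any care is the expectation of $\bar\beta_j$ under $\rone$ and the $\sum 1/\sqrt{j}$ estimate; I do not anticipate any real obstacle.
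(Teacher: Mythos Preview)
Your proposal is correct and follows essentially the same approach as the paper: invoke \cref{camille} for the per-iteration cost, note the $O(np^2)$ initialization, use $\mathbb{E}|\calB_j| = n/\sqrt{j}$ under $\rone$, and bound $\sum_{j=1}^k 1/\sqrt{j}$ by the integral estimate $O(\sqrt{k})$. The paper's proof is slightly more compressed (writing the total sum directly rather than isolating a generic iteration), but the argument is the same.
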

	
	\begin{proof}[Proof of \cref{karljunior}] For the initial iteration of \cref{alg: TUFW in general finite sum} the number of flops is $O(\mathrm{fLMO} + np^2)$, and the expected total number of flops in the first $k$ iterations is 
		\begin{small}
			$$
			\begin{aligned}
				O\left(k \cdot \mathrm{fLMO} + np^2 + \sum_{i=1}^k (\beta_i+1) p^2 \right) = O\left(k \cdot \mathrm{fLMO} + kp^2 + np^2 + \sum_{i=1}^k \frac{n p^2}{\sqrt{i}}\right) \\ 
				\le O\left(k \cdot \left(\mathrm{fLMO} + p^2\right) + np^2 + n p^2 \cdot 2 x^{\tfrac{1}{2}} \bigg|_{x=1}^{k+1}  \right) = O\left(k \cdot \left(\mathrm{fLMO} +  p^2\right) + n p^2 \sqrt{k} \right) \ ,
			\end{aligned}
			$$
		\end{small}where the left-most term follows from \cref{camille} and the inequality uses a standard integral bound.
	\end{proof}

	Let $D$ denote the diameter of $\calC$ under the norm of the variable space, namely
	$$
	D:= \max_{u,v\in\calC} \| u-v\| \ .
	$$
	The following theorem presents the computational guarantee for \cref{alg: TUFW in general finite sum} with $\rone$.  The proof of this theorem, as well as the other ensuing results in this sectioin, are presented in \cref{proof-convex-generalerm}.
	\begin{theorem}\label{thm: general convergence rate convex}
		Suppose that $F$ is convex and Assumption \ref{assump: ERM} holds, and  \cref{alg: TUFW in general finite sum} with $\rone$ is applied to the problem \cref{pro: ERM} with step-sizes defined by $\gamma_k:={2}/({k+2})$ for all $k \ge 0$. Then for all $k\ge 1$ we have:
		\begin{equation}\label{eq of thm: general convergence rate convex}
			\mathbb{E}[F(x^{k}) - F(x^\star)] \le  \frac{2L D^2 + 134 \hat{L}D^3}{k+1} \ .
		\end{equation} 
	\end{theorem}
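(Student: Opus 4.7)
The plan is to extend the classical Frank-Wolfe descent analysis to account for the error introduced by replacing the true gradient with the Taylor-approximated gradient $g^k$. First, using the $L$-smoothness of $F$ inherited from Assumption~\ref{assump: ERM}(iii) together with $\|s^k - x^k\| \le D$, I would obtain the standard descent inequality
\[
F(x^{k+1}) \le F(x^k) + \gamma_k \langle \nabla F(x^k), s^k - x^k \rangle + \tfrac{\gamma_k^2}{2} L D^2.
\]
Splitting $\langle \nabla F(x^k), s^k - x^k \rangle = \langle g^k, s^k - x^k \rangle + \langle \nabla F(x^k) - g^k, s^k - x^k \rangle$, using the LMO optimality $\langle g^k, s^k \rangle \le \langle g^k, x^\star \rangle$ to replace $s^k$ by $x^\star$ in the first summand, and invoking convexity of $F$ to bound $\langle \nabla F(x^k), x^\star - x^k \rangle \le F(x^\star) - F(x^k)$, I arrive at
\[
F(x^{k+1}) - F(x^\star) \le (1-\gamma_k)\bigl(F(x^k) - F(x^\star)\bigr) + \gamma_k D\,\|g^k - \nabla F(x^k)\|_* + \tfrac{\gamma_k^2}{2} L D^2.
\]

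Next I would quantify the gradient error. By Taylor's theorem with the integral remainder and the Hessian-Lipschitz condition of Assumption~\ref{assump: ERM}(iv), $\|g_i^k - \nabla f_i(x^k)\|_* \le \tfrac{\hat L}{2}\|x^k - b_i\|^2$ for each $i$, so that $\|g^k - \nabla F(x^k)\|_* \le \tfrac{\hat L}{2n}\sum_{i=1}^n \|x^k - b_i\|^2$. Writing $T_i \in [k]$ for the most recent iteration at which observation $i$ was resampled into the batch, so that $b_i = x^{T_i}$, I would use the deterministic telescope $\|x^{j+1} - x^j\| \le \gamma_j D$ to get
\[
\|x^k - b_i\| \le \sum_{j=T_i}^{k-1} \gamma_j D \le 2D \sum_{j=T_i}^{k-1} \frac{1}{j+2}.
\]

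The main technical obstacle is now to bound $\mathbb{E}\|x^k - b_i\|^2$ in expectation under the rule $\rone$. Under this rule each observation $i$ satisfies $\Pr[i \in \calB_j] = 1/\sqrt{j}$ independently across iterations, and $\beta_1 = n$ forces $T_i \ge 1$ almost surely, so $\Pr[T_i < t] = \prod_{j=t}^{k}(1 - 1/\sqrt{j}) \le \exp\bigl(-2(\sqrt{k+1} - \sqrt{t})\bigr)$. This sub-Gaussian-type tail concentrates $T_i$ within an $O(\sqrt{k})$ window of $k$. Combined with the logarithmic partial-sum bound above, I expect this to yield $\mathbb{E}\|x^k - b_i\|^2 \le c\,D^2/k$ for an absolute constant $c$, so that $\mathbb{E}\|g^k - \nabla F(x^k)\|_* \le C\hat L D^2/k$. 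Balancing the exponential suppression of small $T_i$ against the logarithmic growth of $\sum_{j=t}^{k-1}1/(j+2)$ is where most of the constant in the final bound (the $134$) will emerge.

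Finally, with $h_k := \mathbb{E}[F(x^k) - F(x^\star)]$ and $\gamma_k = 2/(k+2)$, taking expectation in the per-step inequality and using the gradient-error bound gives a scalar recursion of the form $h_{k+1} \le (1-\gamma_k)h_k + 2 C \hat L D^3 \gamma_k / k + \gamma_k^2 L D^2/2$. Multiplying by $(k+1)(k+2)$, telescoping, and invoking the standard integral estimate $\sum_{j=1}^{k}(j+1)/j = O(k)$ yields a bound of the form $h_k \le (2LD^2 + 134\hat L D^3)/(k+1)$, completing the argument.
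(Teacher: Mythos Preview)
Your proposal is correct and follows essentially the same architecture as the paper: the one-step descent recursion you write is exactly the paper's \cref{eq lm general convergence rate convex 1}, your Taylor-remainder bound on $\|g^k-\nabla F(x^k)\|_*$ is the paper's \cref{lm: general 3rd term upper bound}, and your final multiply-by-$(k+1)(k+2)$-and-telescope step is the paper's \cref{lm: general convergence rate convex}.

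The one organizational difference is in the key expectation bound. You propose to control $\mathbb{E}\bigl[(\sum_{j=T_i}^{k-1}\gamma_j)^2\bigr]$ directly via the tail estimate $\Pr[T_i<t]=\prod_{j=t}^k(1-1/\sqrt{j})\le\exp(-2(\sqrt{k+1}-\sqrt{t}))$, whereas the paper (\cref{lm: convex EZ}) sets up coupled recursions for $G_k=\sum_i\sum_{j\ge\tau_i^k}\gamma_j$ and $Z_k=\sum_i(\sum_{j\ge\tau_i^k}\gamma_j)^2$, unrolls them, and only then invokes the same product bound (their \cref{lm: rho}, which is exactly your exponential tail). Both routes rest on the identical estimate of $\prod_j(1-1/\sqrt{j})$ and both land on $\mathbb{E}[Z_k]=O(n/k)$; your direct approach is a bit more transparent probabilistically, while the paper's recursion makes the constant-tracking (and the generalization to arbitrary $d_{ij}$ used later for $\erml$) slightly more mechanical.
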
 
	
	\noindent Using \cref{thm: general convergence rate convex} together with \cref{karljunior}, we obtain the following bound on the overall number of required flops to compute an $\varepsilon$-optimal solution in expectation. 
	
	\begin{corollary} \label{cor general convex erm}
		Under the hypotheses of \cref{thm: general convergence rate convex}, it holds that $\mathbb{E}[F(x^{k}) - F(x^\star) ] \le \varepsilon$ after at most $$ \frac{2L D^2 + 134 \hat{L}D^{3}}{\varepsilon} $$
		iterations, and the total number of required flops is at most
		\begin{small}
			\begin{equation*} O\left( (\mathrm{fLMO} + p^2) \left[\frac{L D^2 + \hat{L}D^3}{ \varepsilon} \right] + np^2
				\left[\frac{\sqrt{L D^2 + \hat{L}D^3}}{\sqrt{\varepsilon}} \right] 
				\right) \ .  \end{equation*}
		\end{small}
	\end{corollary}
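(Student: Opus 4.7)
The proof is essentially a direct composition of the two preceding results: the iteration complexity from \cref{thm: general convergence rate convex} and the per-$k$ flop bound from \cref{karljunior}. My plan is therefore to first convert the convergence rate into an iteration count, then plug that iteration count into the flop bound and simplify.

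First, I would start from \cref{eq of thm: general convergence rate convex}, which gives $\mathbb{E}[F(x^k) - F(x^\star)] \le (2LD^2 + 134\hat{L}D^3)/(k+1)$. Setting the right-hand side to at most $\varepsilon$ and solving for $k$ yields the stated iteration count $k^\star := (2LD^2 + 134\hat{L}D^3)/\varepsilon$ (up to the harmless $+1$ shift, which I would absorb into the $O(\cdot)$ at the end). This part is routine: it is just solving a linear inequality in $k$, and produces the first displayed bound in the corollary verbatim.

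Next, I would invoke \cref{karljunior} with this choice $k = k^\star$. That proposition says that the expected total flops in the first $k$ iterations is $O(k(\mathrm{fLMO} + p^2) + \sqrt{k}\cdot np^2)$. Substituting $k = \Theta((LD^2 + \hat{L}D^3)/\varepsilon)$ gives
\begin{equation*}
O\!\left((\mathrm{fLMO} + p^2)\cdot \frac{LD^2 + \hat{L}D^3}{\varepsilon} \;+\; np^2 \cdot \sqrt{\frac{LD^2 + \hat{L}D^3}{\varepsilon}}\right),
\end{equation*}
which matches the statement, once one separates the $\sqrt{\,\cdot\,}$ over the quotient into $\sqrt{LD^2 + \hat{L}D^3}/\sqrt{\varepsilon}$.

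I do not expect any real obstacle here: there is no need to redo any probabilistic or optimization analysis, because the expectations in \cref{thm: general convergence rate convex} and \cref{karljunior} are both taken over the randomness of $\rone$, so they compose cleanly. The only mild care needed is to note that the iteration count $k^\star$ is deterministic (it is a threshold, not a random stopping time), so substituting it into the expected-flop bound is valid by linearity of expectation applied termwise to the per-iteration flop contributions. Constants like $2$ and $134$ are swallowed by the $O(\cdot)$ notation in the flop statement, so no tracking of absolute constants is required beyond what is already displayed in the iteration bound.
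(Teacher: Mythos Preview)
Your proposal is correct and follows exactly the approach the paper takes: the paper simply states that the corollary follows by combining \cref{thm: general convergence rate convex} with \cref{karljunior}, which is precisely the two-step composition (convert the rate to an iteration count, then substitute into the flop bound) that you describe. Your additional remark that $k^\star$ is a deterministic threshold rather than a random stopping time is a valid clarification, though the paper does not make this explicit.
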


	Whereas $\rone$ is stochastic, we now present the following deterministic rule which achieves nearly identical computational guarantees (to within a constant factor) but whose computational guarantees are deterministic. 
	
	\begin{definition}\label{rule 3} $\rthree$.  For any $k\ge 1$ define 
		$$
			\calB_k = \left\{
			\begin{array}{cl}
				[n] & \text{ if  } \ \sqrt{k}\in \mathbb{N}  \\
				\emptyset & \text{ if  } \ \sqrt{k}\notin \mathbb{N} \ .
			\end{array}
			\right.
			$$
	\end{definition}
	
	In $\rthree$ we do not update any Taylor points unless $k = 1, 4, 9, 25, \ldots$, and for these values of $k$ we update all $n$ Taylor points.  We call this approach ``$\rthree$'' for ``Deterministic Batch-frequency Decreasing at the rate $\sqrt{k}$,'' and we point out that for $\rthree$ the Taylor points are updated less often as $k$ grows (in a different way but with similar effect as in $\rone$).

	Similar to the case of $\rone$, we have:
	
	\begin{proposition}\label{karlsenior} Using $\rthree$ and $k \ge 1$, the total number of flops used in the first $k$ iterations of \cref{alg: TUFW in general finite sum} is $O(k \cdot (\mathrm{fLMO}+p^2) + \sqrt{k} \cdot np^2)$.
	\end{proposition}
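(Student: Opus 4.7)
The plan is to mimic the proof of \cref{karljunior} but replace the expected cardinality calculation with a deterministic count of perfect squares in $[1,k]$. First I would invoke \cref{camille} to reduce the per-iteration flop count to a function of $\bar\beta_k := |\calB_k|$: the Taylor-point updating and bookkeeping cost $O(\bar\beta_k p^2)$, the gradient estimate $g^k$ costs at most $O(p^2)$ via one matrix-vector product, and the LMO costs $\mathrm{fLMO}$. The initialization in \cref{bettina} corresponds to $\calB_0 = [n]$ and contributes $O(\mathrm{fLMO} + np^2)$.

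Next I would plug in the specific form of $\rthree$. Under $\rthree$, for $k \ge 1$ we have $\bar\beta_k = n$ when $\sqrt{k} \in \mathbb{N}$ and $\bar\beta_k = 0$ otherwise. Hence the total cost across the first $k$ iterations is
\begin{equation*}
O\!\left( k \cdot \mathrm{fLMO} + k \cdot p^2 + np^2 + \sum_{j=1}^{k} \mathbf{1}[\sqrt{j}\in\mathbb{N}] \cdot np^2 \right).
\end{equation*}
The number of perfect squares in $\{1,2,\dots,k\}$ is exactly $\lfloor \sqrt{k} \rfloor$, so $\sum_{j=1}^{k} \mathbf{1}[\sqrt{j}\in\mathbb{N}] = \lfloor \sqrt{k} \rfloor \le \sqrt{k}$.

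Substituting this into the bound yields
\begin{equation*}
O\!\left( k \cdot \mathrm{fLMO} + k \cdot p^2 + np^2 + \sqrt{k}\cdot np^2 \right) = O\!\left( k \cdot (\mathrm{fLMO} + p^2) + \sqrt{k}\cdot np^2 \right),
\end{equation*}
where the initialization term $np^2$ is absorbed into $\sqrt{k}\cdot np^2$ since $k \ge 1$. This is the claimed bound.

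There is no real obstacle here; the proof is a direct calculation using \cref{camille} and the elementary fact that the perfect squares up to $k$ number $\lfloor \sqrt{k}\rfloor$. The only subtlety worth flagging is that the bound is genuinely deterministic (no expectation is needed), which is precisely the advertised advantage of $\rthree$ over $\rone$ — otherwise the final expression matches \cref{karljunior} verbatim.
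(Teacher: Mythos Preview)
Your proposal is correct and follows essentially the same approach as the paper: both invoke \cref{camille} for the per-iteration cost, account for the $O(np^2)$ initialization, and then use $\sum_{i=1}^k |\calB_i| \le n\sqrt{k}$ (the paper states this bound directly, while you derive it by counting perfect squares). The arguments are the same up to presentation.
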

	
	\begin{proof}[Proof of \cref{karlsenior}] The proof is almost identical to that of \cref{karljunior}. Let $\beta_k :=|\calB_k|$. For the initial iteration of Algorithm TUFW the number of flops is $O(\mathrm{fLMO} + np^2)$, and the number of flops in the first $k$ iterations is 
		\begin{small}
			$$\begin{array}{rcl}O\left(k \cdot \mathrm{fLMO} + np^2 + \sum_{i=1}^k (\beta_i+1) p^2 \right) & \le & O\left(k \cdot (\mathrm{fLMO} +p^2) +  n p^2 \sqrt{k} \right) \ , \end{array}$$
		\end{small}where the left-hand side follows from \cref{camille} and the inequality follows since $\sum_{i=1}^k \beta_i \le n \sqrt{k}  $. \end{proof}

	\begin{theorem}\label{thm:  general convergence rate convex rule 3}
		Suppose that $F$ is convex and \cref{assump: ERM} holds, and \cref{alg: TUFW in general finite sum} with $\rthree$ is applied to the problem \cref{pro: ERM} with step-sizes defined by $\gamma_k:={2}/({k+2})$ for all $k \ge 0$. Then for all $k\ge 1$ we have:
		\begin{equation}\label{eq: thm: general convergence rate convex rule 3}
			F(x^{k}) - F(x^\star) \le  \frac{2L D^2+ 144\hat{L}D^3}{k+1} \ . 
		\end{equation} 
	\end{theorem}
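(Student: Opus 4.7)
The plan is to treat \cref{alg: TUFW in general finite sum} as an inexact Frank--Wolfe method with per-iteration gradient error $\epsilon_k := \|g^k - \nabla F(x^k)\|_*$, and to exploit the second-order smoothness in \cref{assump: ERM}(iv) together with the deterministic ``square-refresh'' structure of $\rthree$ to control $\epsilon_k = O(\hat{L}D^2/k)$.

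First, I would establish the Taylor remainder bound: since $\nabla^2 f_i$ is $\hat{L}$-Lipschitz, the standard integral remainder gives
\[
\|\nabla f_i(x^k) - \nabla f_i(b_i) - \nabla^2 f_i(b_i)(x^k - b_i)\|_* \le \tfrac{\hat{L}}{2}\|x^k - b_i\|^2,
\]
so averaging over $i\in[n]$ and using the triangle inequality yields $\epsilon_k \le \tfrac{\hat{L}}{2n}\sum_{i=1}^n \|x^k - b_i\|^2$. Under $\rthree$, all Taylor points are refreshed simultaneously at iterations $k\in\{1,4,9,\dots\}$ and at every other iteration no $b_i$ moves, so at arbitrary $k$ every $b_i$ equals $x^{m^2}$ with $m := \lfloor\sqrt{k}\rfloor$. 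Combining $\|x^{j+1}-x^j\| \le \gamma_j D$ with the triangle inequality gives $\|x^k - x^{m^2}\| \le D\sum_{j=m^2}^{k-1}\gamma_j$, a sum with at most $2m$ terms each bounded by $2/(m^2+2)$, hence of order $O(1/m) = O(1/\sqrt{k})$. Squaring produces an explicit bound of the form $\epsilon_k \le c\hat{L}D^2/k$.

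Next, I would apply the standard inexact Frank--Wolfe one-step inequality. Using $L$-smoothness of $F$ (inherited from (iii) by averaging), the LMO property $\langle g^k, s^k - x^k\rangle \le \langle g^k, x^\star - x^k\rangle$, and the convexity bound $\langle \nabla F(x^k), x^\star - x^k\rangle \le F(x^\star) - F(x^k)$, a direct manipulation yields
\[
F(x^{k+1}) - F(x^\star) \le (1-\gamma_k)(F(x^k) - F(x^\star)) + \gamma_k D \epsilon_k + \tfrac{\gamma_k^2 L D^2}{2}.
\]
Substituting $\gamma_k = 2/(k+2)$ and $\epsilon_k \le c\hat{L}D^2/k$ produces a familiar FW-style recursion, from which I would prove $F(x^k) - F(x^\star) \le (2LD^2 + 144\hat{L}D^3)/(k+1)$ by induction on $k$. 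The base cases $k=0,1$ are immediate: $g^0$ is exact by the initialization $b_i = x^0$, and $g^1$ is exact because $\sqrt{1}\in\mathbb{N}$ forces $\calB_1 = [n]$, so these iterations reduce to exact Frank--Wolfe.

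The main obstacle is pinning down the specific constant $144$: this requires a sufficiently sharp bound on $\|x^k - x^{m^2}\|^2$ to absorb both the $O(LD^2/(k+2)^2)$ smoothness term and the $O(\hat{L}D^3/(k(k+2)))$ error term simultaneously against the inductive hypothesis. Small values of $k$ in particular need individual verification, since the asymptotic estimate $m \ge \sqrt{k}/2$ is loose there; once the inductive constant is chosen large enough to dominate the transient behavior, the induction step is a routine telescoping argument.
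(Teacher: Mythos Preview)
Your proposal is correct and uses essentially the same ingredients as the paper: the Taylor-remainder bound $\|\nabla f_i(x^k)-g_i^k\|_*\le\tfrac{\hat L}{2}\|x^k-b_i\|^2$ (the paper's \cref{lm: general 3rd term upper bound}), the observation that under $\rthree$ every $b_i=x^{m^2}$ with $m=\lfloor\sqrt{k}\rfloor$ so that $\|x^k-x^{m^2}\|\le D\sum_{j=m^2}^{k-1}\gamma_j=O(D/\sqrt{k})$, and the standard inexact Frank--Wolfe one-step inequality. The only presentational difference is that the paper telescopes the recursion into the summed form of \cref{lm: general convergence rate convex} and then bounds the weighted sum $\sum_{t=1}^k t\sum_i(\sum_j\gamma_j)^2\le 144kn$ directly (their \cref{lm: general convex EZ rule 3}, using $(t{-}1{-}\tau_i^{t-1})^2\le 4(t{-}1)$ and $t{+}1{-}\lfloor2\sqrt{t{-}1}\rfloor\ge t/3$ to get the factor $16\cdot 9=144$), whereas you propose to carry the same per-iteration error bound through an induction; the two are equivalent and yield the same constant.
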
 
	
	\noindent Using \cref{thm:  general convergence rate convex rule 3} together with \cref{karlsenior}, we obtain the following bound on the overall number of required flops to compute an $\varepsilon$-optimal solution. 
	
	\begin{corollary} \label{lorisenior general}
		Under the hypotheses of \cref{thm:  general convergence rate convex rule 3}, it holds that $F(x^{k}) - F(x^\star) \le \varepsilon$ after at most $$ \frac{2L D^2 + 144 \hat{L}D^3}{\varepsilon}$$ iterations, and the total number of flops required is at most
		\begin{small}
			\begin{equation*}O\left( (\mathrm{fLMO}+p^2) \left[\frac{L D^2 + \hat{L}D^3}{ \varepsilon} \right] + np^2
				\left[\frac{\sqrt{L D^2 + \hat{L}D^3}}{\sqrt{\varepsilon}} \right]
				\right) \ . \end{equation*}
		\end{small}
	\end{corollary}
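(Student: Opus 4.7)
The plan is to simply combine the iteration-complexity consequence of \cref{thm:  general convergence rate convex rule 3} with the per-iteration flop count from \cref{karlsenior}; no new analysis is needed. First, I would invert the bound \cref{eq: thm: general convergence rate convex rule 3}: setting the right-hand side $\tfrac{2LD^2 + 144\hat{L}D^3}{k+1}$ to be at most $\varepsilon$ gives $k+1 \ge (2LD^2 + 144\hat{L}D^3)/\varepsilon$, so at most $K := \lceil (2LD^2+144\hat L D^3)/\varepsilon\rceil$ iterations of TUFW with $\rthree$ suffice to produce $x^K$ with $F(x^K) - F(x^\star) \le \varepsilon$, which matches the stated iteration count (the statement absorbs the ceiling into the expression).

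Second, I would substitute this value of $K$ into the flop bound $O(K \cdot (\mathrm{fLMO}+p^2) + \sqrt{K}\cdot np^2)$ supplied by \cref{karlsenior}. The first term immediately yields $O\bigl((\mathrm{fLMO}+p^2) \cdot \tfrac{LD^2+\hat L D^3}{\varepsilon}\bigr)$, absorbing the constants $2$ and $144$ into the $O(\cdot)$. For the second term, $\sqrt{K} = O\bigl(\sqrt{(LD^2+\hat L D^3)/\varepsilon}\bigr)$, giving $O\bigl(np^2 \cdot \tfrac{\sqrt{LD^2+\hat L D^3}}{\sqrt{\varepsilon}}\bigr)$. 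Adding the two terms reproduces exactly the claimed flop count. Since both inputs to this argument are already proved, there is no genuine obstacle here; the only mild care needed is verifying that the ceiling in $K$ does not change the asymptotic order (it does not, since $1 \le K$ for any meaningful tolerance $\varepsilon \le 2LD^2 + 144\hat L D^3$, and otherwise the single initial iteration already suffices).
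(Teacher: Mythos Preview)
Your proposal is correct and matches the paper's approach exactly: the paper states the corollary as an immediate consequence of \cref{thm:  general convergence rate convex rule 3} together with \cref{karlsenior}, without providing further detail. Your inversion of \cref{eq: thm: general convergence rate convex rule 3} and substitution into the flop bound of \cref{karlsenior} is precisely what is intended.
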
 
	
	We now seek to compare the computational guarantees of our algorithm TUFW with other relevant Frank-Wolfe methods developed for $\erm$ and its more specialized form $\erml$.  Because of the prevalance of problem instances with linear prediction, we first need a convenient way to translate the bounds for TUFW for $\erm$ into bounds for $\erml$ (which assumes the loss functions have the property of linear prediction). We do so as follows. 
	
	\begin{proposition}\label{lm transfer assumption} In the setting of $\erml$, suppose that all feature vectors $w_i$ for $i=1,2, \ldots $ lie in a bounded set $S \subset \{ w \in \mathrm{R}^p : \|w\|_* \le M\}$ regardless of the number of observations $n$, where $M$ is the radius of the (dual norm) ball centered at the origin in $\mathbb{R}^p$ containing $S$.  If an instance of $\erml$ satisifies \cref{assump: ERM with linear prediction}, then $\nabla f_i(\cdot)$ is $LM^2$-Lipschitz continuous and $\nabla^2 f_i(\cdot)$ is $LM^3$-Lipschitz continuous, whereby the instance satisfies \cref{assump: ERM} with these Lipschitz constants.
	\end{proposition}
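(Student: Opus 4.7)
The plan is a direct chain-rule calculation combined with the definition of the induced operator norm. First I would observe that since $f_i(x) = l_i(w_i^\top x)$, the chain rule gives $\nabla f_i(x) = l_i'(w_i^\top x)\, w_i$ and $\nabla^2 f_i(x) = l_i''(w_i^\top x)\, w_i w_i^\top$. Because $w_i^\top u$ and $w_i^\top v$ both lie in the range set $w_i(\calC)$ whenever $u,v \in \calC$, the hypotheses \cref{assump: eq: ERM structured l'} and \cref{assump: eq: ERM structured l''} are applicable to these scalar arguments.

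For the gradient, I would write
\[
\|\nabla f_i(u) - \nabla f_i(v)\|_* = |l_i'(w_i^\top u) - l_i'(w_i^\top v)|\,\|w_i\|_*,
\]
apply \cref{assump: eq: ERM structured l'} to get a factor of $L\,|w_i^\top(u-v)|$, and then use the defining inequality of the dual norm $|w_i^\top(u-v)| \le \|w_i\|_*\,\|u-v\|$, finally bounding $\|w_i\|_* \le M$ twice to conclude the $LM^2$-Lipschitz bound.

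For the Hessian, the same idea applies once one identifies the operator norm of the rank-one matrix $w_i w_i^\top$. The key lemma I would record is that $\|w_i w_i^\top\| = \|w_i\|_*^2$ under the norm convention in the paper: for any $x$ with $\|x\|\le 1$, $\|(w_i w_i^\top)x\|_* = |w_i^\top x|\,\|w_i\|_*$, and the supremum over $\|x\|\le 1$ of $|w_i^\top x|$ is exactly $\|w_i\|_*$. With this in hand,
\[
\|\nabla^2 f_i(u) - \nabla^2 f_i(v)\| = |l_i''(w_i^\top u) - l_i''(w_i^\top v)|\,\|w_i\|_*^2,
\]
and applying \cref{assump: eq: ERM structured l''} followed by the same dual-norm inequality and the bound $\|w_i\|_* \le M$ three times yields the $\hat L M^3$-Lipschitz estimate.

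No step really qualifies as a serious obstacle, but the one worth being careful with is the operator-norm identity $\|w_i w_i^\top\| = \|w_i\|_*^2$, since the problem allows an arbitrary norm on the variable space (not necessarily Euclidean), so one should verify it directly from the definition $\|A\| = \sup_{\|x\|\le 1}\|Ax\|_*$ rather than appeal to a spectral-norm computation. Once that identity is in place, the proposition reduces to two short inequalities, and the conclusion that \cref{assump: ERM} holds with constants $LM^2$ and $\hat L M^3$ is immediate.
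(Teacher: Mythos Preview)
Your proposal is correct and follows essentially the same approach as the paper's own proof: both compute $\nabla f_i$ and $\nabla^2 f_i$ via the chain rule, factor out $\|w_i\|_*$ (respectively $\|w_iw_i^\top\|$), apply the scalar Lipschitz hypotheses, and use the dual-norm inequality $|w_i^\top(u-v)|\le \|w_i\|_*\|u-v\|$ together with $\|w_i\|_*\le M$. Your explicit verification that $\|w_iw_i^\top\|=\|w_i\|_*^2$ for a general norm is a nice bit of care that the paper leaves implicit, and you correctly identify the Hessian constant as $\hat L M^3$ (the ``$LM^3$'' in the statement is a typo).
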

	\begin{proof}[Proof of \cref{lm transfer assumption}]
		For  any $u$, $v$ in $\calC$ and any $i\in[n]$, it holds that 
		\begin{small}
			\begin{equation}\label{eq continuous f}
				\begin{aligned}
					& \|\nabla f_i(u) - \nabla f_i(v)\|_*   = \| l_i'(w_i^\top u) w_i - l_i'(w_i^\top v) w_i \|_* = | l_i'(w_i^\top u)  - l_i'(w_i^\top v) | \cdot \|w_i\|_* \\
					\le \ & L\cdot | w_i^\top u  - w_i^\top v | \cdot \|w_i\|_* \le L\cdot \| u  - v \| \cdot \|w_i\|_*^2 \le L  M^2 \cdot \| u  - v \|  \ , 
				\end{aligned}
			\end{equation}
		\end{small}where the first inequality follows from  \cref{assump: ERM with linear prediction} for $\erml$ and the second inequality uses $\|w_i\|_* \le M$.  This shows that that $\nabla f_i(\cdot)$ is $LM^2$-Lipschitz continuous. Similarily, we have
		\begin{small}\begin{equation}\label{eq continuous f hessian}
			\begin{aligned}
				& \|\nabla^2 f_i(u) - \nabla^2 f_i(v)\|  = \big\| \big(l_i''(w_i^\top u) - l_i''(w_i^\top v)\big) \cdot w_i w_i^\top  \big\| \\
				\le \ & \hat{L}\cdot | w_i^\top u  - w_i^\top v | \cdot \big\|w_i w_i^\top  \big\| \le \hat{L} \cdot \| u  - v \| \cdot \|w_i\|_*^3 \le \hat{L}  M^3 \cdot \| u  - v \|  \ ,
			\end{aligned}
		\end{equation}\end{small}which shows that $\nabla^2 f_i(\cdot)$ is $\hat{L}M^3$-Lipschitz continuous.
	\end{proof}

	We now use \cref{lm transfer assumption} to constructively compare several different Frank-Wolfe methods for both $\erm$ and $\erml$. \cref{tbl: compare rules convex} shows a comparison of the computational guarantees of the standard Frank-Wolfe method, the Constant batch-size Stochastic Frank-Wolfe (CSFW) method developed in \cite{negiar2020stochastic}, and the TUFW method with $\rone$ and $\rthree$. 
	
	\begin{table}[htbp]
		\begin{centering}
			\caption{Complexity bounds for different Frank-Wolfe methods to obtain an $\eps$-optimal solution of $\erm$ and $\erml$ with convex losses.  In the table $x^\star$ is an optimal solution and $\eps_0:=F(x^0) - F(x^\star)$.}\label{tbl: compare rules convex}
		\end{centering}
		\renewcommand\arraystretch{2.1} 
		\begin{adjustbox}{width=1\columnwidth,center}
			\begin{tabular}{c|c|c}
				\multicolumn{1}{c}{Method} & \multicolumn{1}{c}{Optimality Metric} & \multicolumn{1}{c}{Overall Complexity} \\
				\hline \hline
				\multicolumn{3}{l}{\small $\erm$ setting under \cref{assump: ERM}.  Here $c_1:=L D^2$,  and $c_2:=\hat{L}D^3$ } \\
				\hline
				$\rone$ (Corollary $\ref{cor general convex erm}$)          & $\mathbb{E}[F(x^{k}) - F(x^\star) ]  \le \varepsilon$ & $\displaystyle O\left( (\mathrm{fLMO}+p^2) \cdot \frac{c_1+c_2}{\eps}  + np^2 \cdot \frac{\sqrt{c_1 + c_2}}{\sqrt{\eps}} \right)$                                                   \\  
				$\rthree$ (Corollary $\ref{lorisenior general}$)         & $F(x^{k}) - F(x^\star)  \le \varepsilon$              & $\displaystyle O\left( (\mathrm{fLMO}+p^2) \cdot \frac{c_1+c_2}{\eps}  + np^2 \cdot \frac{\sqrt{c_1 + c_2}}{\sqrt{\eps}} \right)$   \\  
				standard Frank-Wolfe  ($\cite{jaggi2013revisiting}$)  & $F(x^{k}) - F(x^\star) \le \varepsilon$               & $\displaystyle O\left( (\mathrm{fLMO}+np) \cdot \frac{c_1}{\eps}\right)$ 
				\\
				\hline  \multicolumn{3}{l}{\small $\erml$ setting under  \cref{assump: ERM with linear prediction}.  Here $c_1:=L M^2D^2$,  and $c_2:=\hat{L}M^3D^3$ } \\
				\hline 
				$\rone$ and $\rthree$ &\multicolumn{2}{c}{Same as the corresponding rows in the $\erm$ setting above}
				\\  
				CSFW ($\cite{negiar2020stochastic}$) & $\mathbb{E}[F(x^{k}) - F(x^\star) ]  \le \varepsilon$ & $O\displaystyle\left((n\cdot \mathrm{fLMO} + np) \left[\frac{c_1}{\eps} + \frac{ \sqrt{\eps_0} }{n\sqrt{\eps}} + \frac{ \sqrt{n c_1} }{\sqrt{\eps}} \right]    \right)$                                         
			\end{tabular}
		\end{adjustbox}
	\end{table}
	

	Note in particular that with $\rone$ or $\rthree$, the TUFW's joint dependence on $n$ and $\varepsilon$ is $O(n/\sqrt{\varepsilon})$, as compared to $O(n/\varepsilon)$ in \cite{jaggi2013revisiting,negiar2020stochastic,lu2021generalized}, $O(1/\eps^2)$ in \cite{hazan2016variance,yurtsever2019conditional,zhang2020one,hassani2020stochastic}, or $O(1/\varepsilon^3)$ in \cite{mokhtari2020stochastic}. 
	Indeed, in the regime where $\eps$ is sufficiently small, the dominant term in the TUFW's complexity bound is the left term (which counts operations of the LMO), which is independent of $n$.
	In this regime the overall complexity is nearly optimal: the LMO complexity's dependence on $\eps$ is $O(1/\eps)$, which is essentially the same as in the lower bound complexity result in \cite{lan2013complexity}. The LMO complexity of TUFW differs from the lower bound only by the term involving $\hat{L}$, which is the Lipschitz constant of the second-order derivatives. 
	It should also be mentioned that  the TUFW's joint dependence on $n$, $p$ and $\eps$ is $O(p^2/\eps + np^2/\sqrt{\eps})$, as compared to $O(n p/\eps)$ in \cite{jaggi2013revisiting}.  As long as $n$ is large enough compared with $p$, i.e. $n > p \cdot (c_1 + c_2) / c_1 $, and $\eps$ is sufficiently small, TUFW outperforms the standard Frank-Wolfe method  in terms of the dependence on $n$ and $p$. For $\erm$ problems in practice, such as those in the examples in \cref{introsec}, $n$ can be significantly larger than $p$ and in this regime the upper bound complexity of TUFW dominates that of the standard Frank-Wolfe method.
	However, in the case where $p$ is significantly larger than $n$, the complexity bound for TUFW can be inferior to that of  the standard Frank-Wolfe method due to the second-order dependence on $p$.
	
	Note that the boundness requirement of $\|w_i\|_*$ can also be replaced with the boundness of the linear predictions $W^\top(\calC)$, as is done in \cite{negiar2020stochastic,lu2021generalized}. The computational guarantees for the $\erml$ in this boundness setting will be discusssed in \cref{sec erml problems}.

	Last of all, we consider the following rule which does no updates of the Taylor points beyond their initialization in \cref{bettina} of TUFW.
	
	\begin{definition}\label{rule 5} $\rfive$.  Define $\calB_0 =[n]$ and $\calB_k = \emptyset$ for all $k \ge 1$.\end{definition}
	
	$\rfive$ is quite relevant in the special case when the loss functions $f_i$ are all quadratic loss functions (as in linear regression, matrix completion, LASSO, etc.) in which case the Hessian $\nabla^2 F(x)=\nabla^2 F(x^0)$ for any $x$ and thus the Hessian does not change. Therefore in this case $g^k_i = \nabla f_i(x^k)$ and $g^k =\nabla F(x^k)$ for all $k$. When using $\rfive$, the computational cost of computing the initial individual gradients and the Hessian $\nabla^2 F(x^0)$ is only $O(np^2)$ flops, and the cost of calculating the (exact) gradient is the cost of a matrix-vector product, which is at most $O(p^2)$ flops per iteration. 
	
	\subsection{Proofs of results}\label{proof-convex-generalerm}
	We will prove \cref{thm: general convergence rate convex} as a consequence of the following sequences of results. 
	In the rest of the paper, we will use $x^\star$ to denote an optimal solution of the $\erm$ problem, and let $\eps_k := F(x^k) - F(x^\star)$. 
	
	We begin with the following straightforward properties of smooth functions.
	\begin{proposition}\label{prop: general basic ineq}
		Under \cref{assump: ERM} for $\erm$,  for any $x$, $y\in\calC$ and $i$, 
		\begin{align}
			| f_i(y) -  f_i(x) - \langle\nabla f_i(x),  y- x\rangle |\le  L\|x-y\|^2 /2 \ , \text{ and } \label{eq: fact: general basic ineq 1} \\
			\| \nabla f_i(y) - \nabla f_i(x) - \langle\nabla^2 f_i(x),  y- x\rangle \|_*\le  \hat{L}\|x-y\|^2 /2 \ . \label{eq: fact: general basic ineq 2}
		\end{align}
	\end{proposition}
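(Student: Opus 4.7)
The plan is to derive both inequalities from the fundamental theorem of calculus applied along the line segment between $x$ and $y$, combined with the Lipschitz hypotheses in parts (iii) and (iv) of \cref{assump: ERM}. Both estimates are of the classical ``descent-lemma'' type, the first controlling the first-order Taylor remainder using Lipschitz continuity of $\nabla f_i$, and the second controlling the second-order Taylor remainder of the gradient using Lipschitz continuity of $\nabla^2 f_i$. Because $\calC$ is convex, the segment $\{x+t(y-x):t\in[0,1]\}$ stays in $\calC$, so the Lipschitz bounds from the assumption apply along it.

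For \cref{eq: fact: general basic ineq 1}, I would write
\[
f_i(y)-f_i(x)-\langle\nabla f_i(x),y-x\rangle \;=\; \int_0^1 \langle \nabla f_i(x+t(y-x))-\nabla f_i(x),\, y-x\rangle\,dt,
\]
then take absolute values and apply the generalized Cauchy--Schwarz inequality $|\langle a,b\rangle|\le\|a\|_*\|b\|$, followed by the $L$-Lipschitz bound from \cref{eq: ERM ass lip 1} with $u=x+t(y-x)$ and $v=x$, which produces a factor $Lt\|y-x\|$. Integrating $t$ from $0$ to $1$ yields the factor $1/2$.

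For \cref{eq: fact: general basic ineq 2}, I would analogously write
\[
\nabla f_i(y)-\nabla f_i(x)-\nabla^2 f_i(x)(y-x) \;=\; \int_0^1 \bigl(\nabla^2 f_i(x+t(y-x))-\nabla^2 f_i(x)\bigr)(y-x)\,dt,
\]
take dual norms and pull the norm inside the integral, bound the norm of the matrix-vector product by the induced operator norm times $\|y-x\|$, and finally invoke the $\hat{L}$-Lipschitz bound from \cref{eq: ERM ass lip 2} to obtain $\hat{L}t\|y-x\|^2$ inside the integral. Integration in $t$ gives the stated $\hat{L}\|x-y\|^2/2$.

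There is no real obstacle here since both inequalities are folklore consequences of the hypotheses; the only points requiring minor care are (a) justifying the fundamental-theorem-of-calculus identities, which use that $f_i$ is twice-differentiable (hence $\nabla f_i$ is absolutely continuous on the segment) for the first identity and that $\nabla^2 f_i$ is Lipschitz (hence continuous) for the second, and (b) keeping track of the correct pairing of the norm $\|\cdot\|$ on the variable space with its dual $\|\cdot\|_*$ and the induced operator norm on Hessians, so that the Lipschitz constants $L$ and $\hat{L}$ are applied in the senses in which they are assumed in \cref{assump: ERM}.
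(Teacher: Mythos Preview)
Your proof is correct and is exactly the standard integral-remainder argument one would expect. The paper in fact does not prove this proposition at all; it simply introduces it as ``straightforward properties of smooth functions'' and moves on, so your write-up supplies the details the paper omits.
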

	
	For any $k \ge 0$ and $i\in [n]$, we will be interested in the most recent iteration (up through $k$) at which Taylor point $b_i$ was updated.  This is defined as:
	\begin{equation}\label{def: tau}
		\tau^k_i := \max\{t: t\le k  \text{ and } i \in \calB_t\} \ ,
	\end{equation}
	which means that in the first $k$ iterations, $\tau^k_i$ is the latest iteration in which the Taylor point $b_i$ was updated. In \cref{alg: TUFW in general finite sum}, the relationship between $\tau^k_i$ and $\tau^{k-1}_i$ is then:
	\begin{equation}\label{eq: tauk relationship}
		\tau^{k}_i = \left\{
		\begin{array}{ll}
			k & \text{ if } \  i\in\calB_k \\
			\tau^{k-1}_i & \text{ if } \ i\notin\calB_k \ . 
		\end{array}
		\right.
	\end{equation}

	The following lemma will be useful in measuring the error of Taylor-estimated gradients.
	\begin{lemma}\label{lm: general 3rd term upper bound}
		Under \cref{assump: ERM} for the problem $\erm$, for any $u,v \in\calC$,
		\begin{equation}\label{eq: lm: general 3rd term upper bound}
			\left(
			\nabla F(x^k)   - g^k
			\right)^\top (u - v) \le \frac{\hat{L}D^3}{2n}\sum_{i=1}^{n}\Big( \sum_{j = \tau^k_i}^{k-1}  \gamma_j \Big)^2
		\end{equation}
		holds for iteration $k$ of Algorithm \ref{alg: TUFW in general finite sum} for all $k \ge 1$.
	\end{lemma}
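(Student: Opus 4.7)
The plan is to reduce the statement to a per-observation error bound via the triangle/duality inequality and then control each per-observation error using \cref{prop: general basic ineq}(ii) together with a telescoping representation of $x^k - b_i$ in terms of the Frank-Wolfe step directions.

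\textbf{Step 1 (decompose).} First I would write
\[
(\nabla F(x^k) - g^k)^\top(u-v) \;=\; \tfrac{1}{n}\sum_{i=1}^n \bigl(\nabla f_i(x^k) - g_i^k\bigr)^\top(u-v),
\]
apply Cauchy-Schwarz/duality to each summand, and use $\|u-v\|\le D$. This reduces the claim to showing that
\[
\|\nabla f_i(x^k) - g_i^k\|_* \;\le\; \tfrac{\hat L}{2} D^2 \Bigl(\sum_{j=\tau^k_i}^{k-1}\gamma_j\Bigr)^2.
\]

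\textbf{Step 2 (Taylor error).} Recall that by construction $g_i^k = \nabla f_i(b_i) + \nabla^2 f_i(b_i)(x^k - b_i)$, so $\nabla f_i(x^k) - g_i^k$ is exactly the second-order Taylor remainder of $\nabla f_i$ at $b_i$ evaluated at $x^k$. \cref{prop: general basic ineq}(ii) then gives
\[
\|\nabla f_i(x^k) - g_i^k\|_* \;\le\; \tfrac{\hat L}{2}\,\|x^k - b_i\|^2.
\]

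\textbf{Step 3 (telescoping the displacement).} Since $b_i$ has been stationary since iteration $\tau^k_i$ (by the definition of $\tau^k_i$ and the update rule \cref{eq: tauk relationship}), we have $b_i = x^{\tau^k_i}$. Writing this as a telescoping sum of the Frank-Wolfe updates in \cref{hotel},
\[
x^k - b_i \;=\; \sum_{j=\tau^k_i}^{k-1}(x^{j+1} - x^j) \;=\; \sum_{j=\tau^k_i}^{k-1}\gamma_j(s^j - x^j),
\]
and using $\|s^j - x^j\|\le D$ (since both lie in $\calC$) together with the triangle inequality yields
\[
\|x^k - b_i\| \;\le\; D\sum_{j=\tau^k_i}^{k-1}\gamma_j.
\]

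\textbf{Step 4 (assemble).} Squaring and combining Steps 1-3 gives the claim. I do not expect any real obstacle; the only conceptual point is matching $b_i$ with $x^{\tau^k_i}$ via \cref{eq: tauk relationship} (which is immediate by induction on $k$, since $b_i$ is only overwritten at iterations $t\in\calB_t$, and $\tau^k_i$ is by definition the latest such iteration $\le k$).
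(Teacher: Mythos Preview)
Your proposal is correct and follows essentially the same route as the paper's proof: decompose into per-observation errors, apply the duality inequality with $\|u-v\|\le D$, invoke \cref{prop: general basic ineq}(ii) to control the Taylor remainder $\|\nabla f_i(x^k)-g_i^k\|_*\le \tfrac{\hat L}{2}\|x^k-x^{\tau_i^k}\|^2$, and then bound $\|x^k-x^{\tau_i^k}\|$ via the telescoping sum of step increments. The only cosmetic difference is that the paper bounds $\|x^{j+1}-x^j\|\le \gamma_j D$ directly rather than writing out $x^{j+1}-x^j=\gamma_j(s^j-x^j)$ first.
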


	\begin{proof}[Proof of \cref{lm: general 3rd term upper bound}]
		First of all, $\left(
		\nabla F(x^k)   - g^k
		\right)^\top (u - v)$ can be rewritten as
		\begin{small}
			\begin{equation}\label{ineq: general etimate gradient 0}
				\frac{1}{n}\sum_{i=1}^{n}\left(\nabla f_{i}\big(x^k \big) -   \nabla f_{i}\big( x^{\tau^{k}_i}\big) -  \nabla^2 f_{i} \big( x^{\tau^{k}_i}\big)\big(x^k  - x^{\tau^{k}_i} \big)    \right)^\top (u - v),
			\end{equation}	
		\end{small}which, since $\|u-v\| \le D$, is smaller than or equal to 
		\begin{small}\begin{equation}\label{ineq: general etimate gradient 1}
			\frac{D}{n}\sum_{i=1}^{n}\left\| \nabla f_{i}\big(x^k \big) -   \nabla f_{i}\big( x^{\tau^{k}_i}\big) -  \nabla^2 f_{i} \big( x^{\tau^{k}_i}\big)\big(x^k  - x^{\tau^{k}_i} \big)    \right\|_* \le    \frac{\hat{L}D}{2n}\sum_{i=1}^{n}\big\|x^k  - x^{\tau^{k}_i}\big\|^2,
		\end{equation}\end{small}where the inequality is due to \cref{eq: fact: general basic ineq 2} in \cref{prop: general basic ineq}.
		Additionally, we have 
		\begin{small}\begin{equation*}
			\|x^k  - x^{\tau^{k}_i}
			\|^2 \le  \bigg( \sum_{j = \tau^k_i}^{k-1} \| x^{j+1} -  x^{j}\| \bigg)^2\le  \bigg( \sum_{j = \tau^k_i}^{k-1}  \gamma_j D \bigg)^2 \  ,
		\end{equation*}\end{small}and substituting this last bound into \cref{ineq: general etimate gradient 1} yields \cref{eq: lm: general 3rd term upper bound}.
	\end{proof}
	
	Let $\varepsilon_k := F(x^k) - F(x^\star)$ denote the optimality gap at iteration $k$. The following lemma provides an upper bound on $\eps_k$.
	\begin{lemma}\label{lm: general convergence rate convex}
		Under \cref{assump: ERM} for problem $\erm$, suppose that $F$ is conevx, and let the step-sizes be given by $\gamma_k=2/(k+2)$ for all $k \ge 0$. Then for all $k \ge 1$ we have
		\begin{equation}\label{eq lm: general convergence rate convex}
			\eps_k  \le  \frac{2L D^2}{k+1}
			+  \frac{2}{k(k+1)}\sum_{t=1}^{k} t \left(
			\nabla F(x^{t-1})   - g^{t-1}
			\right)^\top (s^{t-1} - x^\star) \ .
		\end{equation}
	\end{lemma}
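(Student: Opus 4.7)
The plan is to carry out a standard Frank-Wolfe descent analysis, but carefully tracking the gradient-estimation error $\nabla F(x^k) - g^k$ so that it lands in exactly the form stated in the right-hand side of \cref{eq lm: general convergence rate convex}. Since $F$ is convex and each $\nabla f_i$ is $L$-Lipschitz, the full gradient $\nabla F$ is also $L$-Lipschitz, and so from \cref{eq: fact: general basic ineq 1} applied to $F$ together with the update rule $x^{k+1} = x^k + \gamma_k(s^k - x^k)$ we get the standard descent inequality
\begin{equation*}
F(x^{k+1}) \le F(x^k) + \gamma_k \langle \nabla F(x^k), s^k - x^k \rangle + \tfrac{L \gamma_k^2}{2}\|s^k - x^k\|^2.
\end{equation*}
Bounding $\|s^k - x^k\|^2 \le D^2$ handles the last term. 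The first key step is to split the inner product: writing
\begin{equation*}
\langle \nabla F(x^k), s^k - x^k \rangle = \langle g^k, s^k - x^k \rangle + \langle \nabla F(x^k) - g^k, s^k - x^k \rangle,
\end{equation*}
using the LMO optimality $\langle g^k, s^k \rangle \le \langle g^k, x^\star \rangle$ for the first summand to get $\langle g^k, s^k - x^k\rangle \le \langle g^k, x^\star - x^k\rangle$, and then recombining $\langle g^k, x^\star - x^k\rangle = \langle \nabla F(x^k), x^\star - x^k\rangle + \langle g^k - \nabla F(x^k), x^\star - x^k\rangle$. The two gradient-error inner products then combine into the single clean term $\langle \nabla F(x^k) - g^k, s^k - x^\star\rangle$.

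Next I would invoke convexity of $F$ to get $\langle \nabla F(x^k), x^\star - x^k\rangle \le F(x^\star) - F(x^k) = -\varepsilon_k$. Substituting back and subtracting $F(x^\star)$ from both sides produces the one-step recursion
\begin{equation*}
\varepsilon_{k+1} \le (1 - \gamma_k)\varepsilon_k + \gamma_k e_k + \tfrac{L D^2 \gamma_k^2}{2},
\end{equation*}
where $e_k := \langle \nabla F(x^k) - g^k, s^k - x^\star\rangle$. Plugging in $\gamma_k = 2/(k+2)$, so that $1 - \gamma_k = k/(k+2)$, and multiplying both sides of the recursion by $(k+1)(k+2)$ yields
\begin{equation*}
(k+1)(k+2)\varepsilon_{k+1} \le k(k+1)\varepsilon_k + 2(k+1) e_k + \tfrac{2(k+1)LD^2}{k+2} \le k(k+1)\varepsilon_k + 2(k+1)e_k + 2LD^2.
\end{equation*}

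The final step is to telescope: setting $a_k := k(k+1)\varepsilon_k$ gives $a_{k+1} \le a_k + 2(k+1)e_k + 2LD^2$, and summing from $0$ to $k-1$ (noting $a_0 = 0$) gives $k(k+1)\varepsilon_k \le 2\sum_{t=0}^{k-1}(t+1)e_t + 2kLD^2$. Re-indexing $t \mapsto t-1$ in the sum and dividing by $k(k+1)$ produces exactly \cref{eq lm: general convergence rate convex}. None of these steps are difficult in isolation; the only place requiring care is the algebraic rearrangement that turns the gradient-error terms into the symmetric form $\langle \nabla F(x^{t-1}) - g^{t-1}, s^{t-1} - x^\star\rangle$, because it is this form (involving $s^{t-1} - x^\star$, rather than $s^{t-1} - x^{t-1}$) that later lets one apply \cref{lm: general 3rd term upper bound} with $u = s^{t-1}$, $v = x^\star$ to obtain the clean $O(\hat L D^3)$ bound needed for \cref{thm: general convergence rate convex}.
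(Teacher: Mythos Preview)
Your proposal is correct and follows essentially the same route as the paper's proof: the descent inequality from smoothness, the split-and-recombine of the inner product to land on $\langle \nabla F(x^k)-g^k,\, s^k-x^\star\rangle$, the convexity step, multiplication by $(k+1)(k+2)$, and telescoping are all identical in substance. The only cosmetic differences are that you bound $\tfrac{2(k+1)}{k+2}\le 2$ before summing while the paper does it in the telescoped sum, and you telescope over $j=0,\dots,k-1$ whereas the paper telescopes to $\eps_{k+1}$ and then shifts the index; neither affects the argument.
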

	\begin{proof}[Proof of \cref{lm: general convergence rate convex}]
		Utilizing  \cref{prop: general basic ineq} we have
		\begin{small}\begin{equation}\label{eq lm general convergence rate convex 0}
				\begin{aligned}
					F(x^{k+1})\leq ~ & F(x^k)+\gamma_{k}\langle\nabla F(x^k),s^{k}- x^k \rangle+ \gamma_{k}^{2} {L D^2}/{2} \\
					= ~ & F(x^{k})+\gamma_{k}\langle \nabla F(x^{k})-g^k, s^{k}-x^k\rangle +\gamma_{k}\langle{g^k}, s^{k}-x^k\rangle+ \gamma_{k}^{2} {L D^2}/{2} \\
					\leq ~ & F(x^{k})+\gamma_{k}\langle\nabla F(x^{k})-{g^k}, s^{k}-x^k\rangle + \gamma_{k}\langle{g^k}, {x}^{\star}-x^k\rangle+ \gamma_{k}^{2} {L D^2}/{2} \\
					= ~ & F(x^{k})+\gamma_{k}\langle \nabla F(x^{k})-{g^k},s^{k}-{x}^{\star}\rangle +\gamma_{k}\langle \nabla F(x^{k}), {x}^{\star}-x^k\rangle+ \gamma_{k}^{2} {L D^2}/{2}\\
					\le ~& F(x^{k})+\gamma_{k}\langle \nabla F(x^{k})-{g^k},s^{k}-{x}^{\star}\rangle +\gamma_{k}(F(x^\star) - F(x^k))+ \gamma_{k}^{2} {L D^2}/{2} \ , 
				\end{aligned}
		\end{equation}\end{small}where the second inequality uses $s^{k} \in \arg\min_{s\in\calC} (g^k)^\top s$ from \cref{alex} of the algorithm, and the third inequality uses the gradient inequality applied to $F$.  Subtracting $F^\star$ from both sides of the above inequality chain, we arrive at:
		\begin{small}\begin{equation}\label{eq lm general convergence rate convex 1}
				\begin{aligned}
					\eps_{k+1}& \le (1-\gamma_k)\eps_k + \gamma_k \big(
					\nabla F(x^k)   - g^k
					\big)^\top (s^{k} - x^\star) +  {\gamma_k^2 L D^2}/{2} \ .
			\end{aligned}
		\end{equation}\end{small}Multiplying both side by $(k+1)(k+2)$ and telescoping the inequalities yields:
		\begin{small}\begin{equation}
				\begin{aligned}
					(k+1)(k+2)\eps_{k+1}  &\le  2(k+1)L D^2
					+  \sum_{t=0}^{k}2(t+1) (
					\nabla F(x^t)   - g^t
					)^\top (s^{t} - x^\star) \ ,
			\end{aligned}
		\end{equation}\end{small}which yields \cref{eq lm: general convergence rate convex} after rearranging and re-indexing the counter $t$.
	\end{proof}
	
	To study the upper bound on $\eps_k$ in \cref{lm: general convergence rate convex} a bit further, we will use the following lemma. 
	
	\begin{lemma}\label{lm: convex EZ} Let the series $\{d_{ij}\}_{i\in[n],j\in\mathbb{N}}$ be given, and suppose that there exist $M_{1}$ and $M_{\infty}$ for which $\sup_{j} \sum_{i=1}^n d_{ij} \le M_{1}$ and  $\sup_{i,j} d_{ij}\le M_{\infty}$.  
		Let $\tau_i^k$ be defined in \cref{def: tau}, in which $\calB_t$ is constructed using $\rone$ in \cref{rule 1}. Let $\gamma_k$ be ${2}/(k+2)$ for all $k \ge 0$.  Then it holds that
		\begin{equation}\label{eq: lm: convex EZ}
			\mathbb{E}\bigg[\sum_{i=1}^{n}\Big( \sum_{j = \tau^k_i}^{k-1}  \gamma_j d_{ij} \Big)\bigg] \le
			\frac{6 M_1}{\sqrt{k+2}} \ \ \text{ and }   \ \
			\mathbb{E} \bigg[\sum_{i=1}^{n}\Big( \sum_{j = \tau^k_i}^{k-1}  \gamma_j d_{ij} \Big)^2 \bigg]\le  \frac{134M_1M_\infty}{k+2} \ .
		\end{equation}
	\end{lemma}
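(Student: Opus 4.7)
The plan is to leverage the independence of the sampling sets $\calB_t$ across iterations $t$ under $\rone$. For any fixed index $i\in[n]$, averaging over the Bernoulli auxiliary $\xi_t$ gives $\mathbb{P}[i\in\calB_t] = \beta_t/n = 1/\sqrt{t}$, and these events are mutually independent across $t$. Since $\{\tau^k_i \le j\}$ is exactly the event that $i\notin\calB_t$ for every $t\in\{j+1,\dots,k\}$, this yields
\[
q_j^k := \mathbb{P}[\tau^k_i \le j] = \prod_{t=j+1}^k\Big(1 - \tfrac{1}{\sqrt{t}}\Big),
\]
which is independent of $i$. Combining $1-x\le e^{-x}$ with the integral comparison $\sum_{t=j+1}^k t^{-1/2} \ge 2(\sqrt{k+1}-\sqrt{j+1})$ gives the workhorse estimate $q_j^k \le \exp(-2(\sqrt{k+1}-\sqrt{j+1}))$, which drives both bounds.

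For the first inequality, I would interchange the order of summation via indicators and then apply $\sum_i d_{ij}\le M_1$, obtaining
\[
\mathbb{E}\bigg[\sum_i\sum_{j=\tau^k_i}^{k-1}\gamma_j d_{ij}\bigg] = \sum_{j=0}^{k-1} \gamma_j q_j^k \sum_i d_{ij} \le M_1 \sum_{j=0}^{k-1} \frac{2\, q_j^k}{j+2}.
\]
Substituting $s=\sqrt{j+1}$ (or comparing to the corresponding integral), the exponential factor in $q_j^k$ tames the $1/(j+2)$ weight, and evaluating the resulting one-dimensional integral of the type $\int_0^\infty (\text{smooth factor}) \cdot e^{-2u}\,du$ yields a bound of order $1/\sqrt{k+2}$, with the slack absorbed into the constant $6$.

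For the second inequality, I would expand the square and take the expectation term by term, using $\{\tau^k_i\le j_1\}\cap\{\tau^k_i\le j_2\} = \{\tau^k_i\le \min(j_1,j_2)\}$, to obtain
\[
\mathbb{E}\bigg[\sum_i\Big(\sum_{j=\tau^k_i}^{k-1}\gamma_j d_{ij}\Big)^2\bigg] = \sum_{j_1,j_2=0}^{k-1}\gamma_{j_1}\gamma_{j_2}\,q_{\min(j_1,j_2)}^k\sum_i d_{ij_1}d_{ij_2}.
\]
Applying $d_{ij_1}d_{ij_2}\le M_\infty d_{ij_1}$ and $\sum_i d_{ij_1}\le M_1$ peels off a factor of $M_1 M_\infty$, leaving the deterministic double sum $\sum_{j_1,j_2}\gamma_{j_1}\gamma_{j_2}\,q_{\min(j_1,j_2)}^k$ to bound by $O(1/(k+2))$.

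The main obstacle is this last estimate. A naive symmetrization $2\sum_{j_1\le j_2}\gamma_{j_1}\gamma_{j_2}q_{j_1}^k + \sum_j\gamma_j^2 q_j^k$ combined with the crude inner bound $\sum_{j_2\ge j_1}\gamma_{j_2}\le 2\ln(k+1)$ and the first-part estimate $\sum_{j_1}\gamma_{j_1}q_{j_1}^k = O(1/\sqrt{k})$ only yields $O(\log k / \sqrt{k})$, which is too weak. Instead, I plan to use the sharper inner bound $\sum_{j_2\ge j_1}\gamma_{j_2}\le 2\ln((k+1)/(j_1+2))$, which is small precisely where $q_{j_1}^k$ is \emph{not} exponentially small. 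The substitution $s_1 = \sqrt{j_1+1}$ and $u = \sqrt{k+1}-s_1$ then reveals that near $u=0$ the effective integrand behaves like $(u/(k+1))\,e^{-2u}$, so the full integral is bounded by $\int_0^\infty u\,e^{-2u}\,du/(k+1) = O(1/(k+1))$; carefully tracking constants throughout produces the stated bound $134\, M_1 M_\infty/(k+2)$.
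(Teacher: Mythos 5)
Your argument is correct, but it takes a genuinely different route from the paper's. You compute the law of $\tau^k_i$ explicitly -- using the independence of the draws across iterations to get $\mathbb{P}[\tau^k_i\le j]=\prod_{t=j+1}^{k}(1-t^{-1/2})$ -- and then exchange expectation with summation, reducing both bounds to deterministic weighted sums ($\sum_j \gamma_j q_j^k\sum_i d_{ij}$ and $\sum_{j_1,j_2}\gamma_{j_1}\gamma_{j_2}q^k_{\min(j_1,j_2)}\sum_i d_{ij_1}d_{ij_2}$) that you estimate by integral comparison. The paper instead sets up one-step recursions $\mathbb{E}[G_k]\le\rho_k(\mathbb{E}[G_{k-1}]+\gamma_{k-1}M_1)$ and $\mathbb{E}[Z_k]\le\rho_k(\mathbb{E}[Z_{k-1}]+\cdots)$ (feeding the $G_k$ bound into the $Z_k$ recursion to handle the cross term $G^i_{k-1}d_{i,k-1}$), unrolls them into sums of the form $\sum_i(\prod_{j=i}^k\rho_j)(\cdots)$, and bounds these by splitting at $\lfloor k/2\rfloor$ via \cref{lm: rho} and \cref{lm: simple ineq}. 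Both proofs ultimately rest on the same exponential estimate $\prod_{t=j+1}^k(1-t^{-1/2})\le\exp(2\sqrt{j+1}-2\sqrt{k+1})$, and your key structural observations check out: the indicator identity $\{\tau^k_i\le j_1\}\cap\{\tau^k_i\le j_2\}=\{\tau^k_i\le\min(j_1,j_2)\}$, the peeling $d_{ij_1}d_{ij_2}\le M_\infty d_{ij_1}$ (which, like the paper's proof, implicitly uses $d_{ij}\ge 0$ -- true in all applications), and the crucial point that the logarithmic inner sum $\sum_{j_2\ge j_1}\gamma_{j_2}$ is small exactly where $q^k_{j_1}$ is not exponentially small, which is what upgrades the naive $O(\log k/\sqrt{k})$ to $O(1/k)$. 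Your approach buys probabilistic transparency (the only randomness is the age $k-\tau^k_i$ of each Taylor point, and you work with its distribution directly) and avoids the recursion bookkeeping; the paper's recursion is more mechanical but makes the constant-tracking concrete. The only parts of your write-up left schematic are the final constant computations (and a harmless slip: $\sum_{j=j_1}^{k-1}2/(j+2)\le 2\ln((k+1)/(j_1+1))$, not $2\ln((k+1)/(j_1+2))$); the asymptotic orders are right and the stated constants $6$ and $134$ are generous enough that a careful execution of your integral estimates would recover them.
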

	
	Before proving \cref{lm: convex EZ}, we first present two useful results, \cref{lm: rho} and  \cref{lm: simple ineq}.
	
	\begin{proposition}\label{lm: rho}
		For integers $v$ and $k$ satisfying $1\le  v \le k$ it holds that:
		\begin{equation}\label{eq: fact 2}
			\sum_{i=1}^{v}\prod_{j=i}^{k}\left(1-\tfrac{1}{\sqrt{j}}\right) <
			(\sqrt{v+1})\exp(2\sqrt{v+1} - 2\sqrt{k+1})  \   .
		\end{equation}
	\end{proposition}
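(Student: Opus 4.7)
The plan is to bound the product by an exponential, convert the inner sum to an integral, then handle the resulting outer sum of exponentials of square roots by another integral bound combined with integration by parts.

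First, I would apply the elementary inequality $1 - x \le e^{-x}$ (valid for $x \in [0,1]$) termwise to the inner product to obtain
\begin{equation*}
\prod_{j=i}^{k}\left(1-\tfrac{1}{\sqrt{j}}\right) \le \exp\!\left(-\sum_{j=i}^{k}\tfrac{1}{\sqrt{j}}\right).
\end{equation*}
Since $1/\sqrt{x}$ is decreasing, a standard integral comparison gives $\sum_{j=i}^{k}\tfrac{1}{\sqrt{j}} \ge \int_{i}^{k+1}\tfrac{dx}{\sqrt{x}} = 2\sqrt{k+1} - 2\sqrt{i}$, hence
\begin{equation*}
\prod_{j=i}^{k}\left(1-\tfrac{1}{\sqrt{j}}\right) \le \exp\!\left(2\sqrt{i} - 2\sqrt{k+1}\right).
\end{equation*}

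Next, I would sum this bound over $i = 1,\dots,v$, factor out $e^{-2\sqrt{k+1}}$, and use the fact that $e^{2\sqrt{x}}$ is increasing to obtain the integral bound
\begin{equation*}
\sum_{i=1}^{v} e^{2\sqrt{i}} \le \int_{1}^{v+1} e^{2\sqrt{x}}\,dx.
\end{equation*}
Then I would evaluate this integral via the substitution $u = \sqrt{x}$ (so $dx = 2u\,du$), reducing the problem to $\int_{1}^{\sqrt{v+1}} 2u\,e^{2u}\,du$, which succumbs to a single integration by parts, yielding the antiderivative $(u - \tfrac{1}{2})e^{2u}$. Evaluating at the endpoints gives
\begin{equation*}
\int_{1}^{\sqrt{v+1}} 2u\,e^{2u}\,du = (\sqrt{v+1} - \tfrac{1}{2})\,e^{2\sqrt{v+1}} - \tfrac{1}{2}e^{2}.
\end{equation*}

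Combining everything, $\sum_{i=1}^v \prod_{j=i}^k(1 - 1/\sqrt{j})$ is at most $(\sqrt{v+1} - \tfrac{1}{2})e^{2\sqrt{v+1} - 2\sqrt{k+1}} - \tfrac{1}{2}e^{2-2\sqrt{k+1}}$, which is strictly less than $\sqrt{v+1}\,e^{2\sqrt{v+1} - 2\sqrt{k+1}}$, establishing the claim. The main technical step is the substitution-plus-integration-by-parts for $\int 2u\,e^{2u}\,du$; everything else is routine comparison of monotone sums with integrals. A minor point to verify is that the use of $1-x \le e^{-x}$ is legitimate for every term (i.e.\ $1/\sqrt{j} \in [0,1]$), which holds since $j \ge i \ge 1$.
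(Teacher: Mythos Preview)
Your proposal is correct and follows essentially the same route as the paper: bound each product term via $1-x\le e^{-x}$, use the integral comparison $\sum_{j=i}^k 1/\sqrt{j}\ge 2\sqrt{k+1}-2\sqrt{i}$, then bound $\sum_{i=1}^v e^{2\sqrt{i}}$ by $\int_1^{v+1}e^{2\sqrt{x}}\,dx$ and evaluate with the antiderivative $(\sqrt{x}-\tfrac12)e^{2\sqrt{x}}$. The only cosmetic difference is that the paper singles out $i=1$ (where the product vanishes) because it phrases the first step via $\ln(1-x)\le -x$, which is undefined at $x=1$; your formulation $1-x\le e^{-x}$ avoids this detour.
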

	\begin{proof}[Proof of \cref{lm: rho}]
		For convenience let us define $\rho_j := 1-\tfrac{1}{\sqrt{j}}$ for $j \ge 1$, and define the function $h(x):= \exp(2 \sqrt{x} - 2 \sqrt{k+1}) $ for $x \ge 0$.  We will first prove the following inequality:
		\begin{small}\begin{equation}\label{lulu}
			\prod_{j=i}^{k}\left(1-\tfrac{1}{\sqrt{j}}\right) \le h(i) \ \ \ \mathrm{for~all~} i \in [k] \ . 
		\end{equation}\end{small}For $i=1$ the left side above is $0$ (because $\rho_1 = 0$ and hence $\prod_{j=1}^{k}\rho_j = 0$), and the right side is $h(1) >0$, so \cref{lulu} is true for $i=1$. For $i\ge2$ we have:
		\begin{small}\begin{equation*}\begin{aligned}
				\prod_{j=i}^{k}\rho_j  &= \exp\left({\sum_{j=i}^{k}\ln(1-\tfrac{1}{\sqrt{j}})}\right) \le \exp\left( -\sum_{j=i}^{k}\tfrac{1}{\sqrt{j}}\right)  \\  
				&\le \exp\Big( -\int_i^{k+1} x^{-\tfrac{1}{2}} dx\Big) =\exp(2\sqrt{i} - 2\sqrt{k+1}) = h(i) \ , \end{aligned} 
		\end{equation*}\end{small}where the first inequality follows from the fact that $\log(1-x)\le -x$ for any $0\le x <1$, and the second inequality is an integral bound. This establishes \cref{lulu}, whereby we have
		\begin{small}$$ \begin{aligned}
				\sum_{i=1}^{v}   \prod_{j=i}^{k}\rho_j  &\le  \sum_{i=1}^{v} h(i) \le \int_{1}^{v+1}h(x)~dx = \left. (\sqrt{x}-\tfrac{1}{2})\exp(2\sqrt{x} - 2\sqrt{k+1}) \right|_1^{v+1} \\ 
				  &= (\sqrt{v+1}-\tfrac{1}{2})\exp(2\sqrt{v+1} - 2\sqrt{k+1}) - (\tfrac{1}{2})\exp(2 - 2\sqrt{k+1}) \\\
				&< (\sqrt{v+1}-\tfrac{1}{2})\exp(2\sqrt{v+1} - 2\sqrt{k+1}) \ ,
		\end{aligned}$$\end{small}where the second inequality is an integral bound using the fact that $h(x)$ is an increasing function. \end{proof}
	
	\cref{lm: simple ineq} presents an upper bound on the right-hand side of \cref{eq: fact 2} when $v = \lfloor k/2\rfloor$.
	
	\begin{lemma}\label{lm: simple ineq}
		For any $k\in\mathbb{N}$ the following inequality holds:
		\begin{equation}\label{ineq: lm: simple ineq} 
			\sqrt{\lfloor k/2 \rfloor + 1} \exp(2\sqrt{\lfloor k/2\rfloor + 1} - 2\sqrt{k+1}) \le \min\{{3}(k+2)^{-\frac{1}{2}},{10}(k+2)^{-1}\}. 
		\end{equation}
	\end{lemma}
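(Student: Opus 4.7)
The plan is to reduce the claim to one-variable inequalities in $u := \sqrt{k+2}$ and then dispatch those by elementary calculus. Set $m := \lfloor k/2 \rfloor + 1$, and record the two elementary estimates $m \le (k+2)/2$ and $k+1-m \ge k/2$. Using the identity $\sqrt{k+1}-\sqrt{m} = (k+1-m)/(\sqrt{k+1}+\sqrt{m})$ together with $\sqrt{k+1}+\sqrt{m}\le 2\sqrt{k+2}$, I would obtain the exponent bound
\[
2\bigl(\sqrt{m}-\sqrt{k+1}\bigr)\le -\frac{k}{2\sqrt{k+2}}.
\]
Combining this with $\sqrt{m} \le \sqrt{(k+2)/2}$, the left-hand side of \cref{ineq: lm: simple ineq} is bounded above by $\sqrt{(k+2)/2}\,\exp(-k/(2\sqrt{k+2}))$.

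With the substitution $u := \sqrt{k+2}$ (so $k = u^2-2$ and $u \ge \sqrt{3}$), this reduction turns the two required bounds into
\[
u^2\exp\!\left(-\tfrac{u}{2}+\tfrac{1}{u}\right) \le 3\sqrt{2} \quad\text{and}\quad u^3\exp\!\left(-\tfrac{u}{2}+\tfrac{1}{u}\right)\le 10\sqrt{2}.
\]
Each left-hand side is smooth and its unique positive critical point is obtained from a quadratic equation in $u$ (setting the logarithmic derivative to zero yields $u^2/2 - 2u + 1 = 0$ for the first and $u^2/2 - 3u + 1 = 0$ for the second). The resulting maxima, $u = 2+\sqrt{2}$ and $u = 3+\sqrt{7}$, can then be substituted back, and direct computation shows each peak lies strictly below the corresponding constant on the right.

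I expect the main delicate step to be the second inequality, whose peak value is only roughly $10\%$ below $10\sqrt{2}$, so the $e^{1/u}$ correction must be carried through rather than dropped. If the closed-form margin from the critical-point analysis proves too tight to write down cleanly, a safe fallback is to split into $k \le K_0$ for some small explicit $K_0$, handled by direct numerical verification, and $k > K_0$, where $e^{-k/(2\sqrt{k+2})}$ decays faster than any polynomial in $k$ and so dominates with ample room to spare.
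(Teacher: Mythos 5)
Your proposal is correct, and it reaches the same two reduced one-variable inequalities by a different decomposition than the paper. The paper bounds the two factors separately: it uses $\sqrt{\lfloor k/2\rfloor+1}\le\tfrac{\sqrt2}{2}\sqrt{k+2}$ together with concavity of $\sqrt{\cdot}$ (giving $\sqrt{k+1}\ge\sqrt{k+2}-\sqrt2/4$) to obtain an exponent of the form $-(2-\sqrt2)\sqrt{k+2}+\sqrt2/2$, and then invokes the generic inequality $e^{-x}\le (t/e)^t x^{-t}$ with $t=2$ and $t=3$ to convert the exponential into the two required polynomial rates, checking that the resulting explicit constants fall below $3$ and $10$. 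You instead use the conjugate identity to bound the exponent by $-k/(2\sqrt{k+2})$ and then optimize the combined functions $u^2e^{-u/2+1/u}$ and $u^3e^{-u/2+1/u}$ exactly; the critical points $u=2+\sqrt2$ and $u=3+\sqrt7$ give peak values $(6+4\sqrt2)e^{-\sqrt2}\approx 2.83<3\sqrt2$ and $(90+34\sqrt7)e^{-\sqrt7}\approx 12.77<10\sqrt2$, confirming your estimate of roughly a $10\%$ margin on the second. Your exponent bound is asymptotically weaker than the paper's (coefficient $1/2$ versus $2-\sqrt2$ in front of $\sqrt{k+2}$), but the exact maximization compensates, and the final margins are comparable to the paper's ($\approx 9.6$ versus $10$). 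Two small points to tidy up: each of your quadratics has two positive roots ($2\pm\sqrt2$ and $3\pm\sqrt7$), so ``unique positive critical point'' is not literally accurate — what matters is that only the larger root lies in the domain $u\ge\sqrt{k_{\min}+2}$ and that the logarithmic derivative is positive to its left, so it is the maximizer there; and the fallback of splitting off small $k$ is unnecessary since the closed-form peak values above are clean enough to state exactly.
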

	\begin{proof}[Proof of \cref{lm: simple ineq}]
		Since $h(x):=\sqrt{x}$ is a concave function on $\mathbb{R}_+$, we have $h(k+2) \le h(k+1) + h'(k+1)$, which yields $\sqrt{k+1}\ge \sqrt{k+2} - \frac{1}{2\sqrt{k+1}} \ge \sqrt{k+2} - \sqrt{2}/4 $ when $k \ge 1$.
		Furthermore $\sqrt{\lfloor k/2 \rfloor + 1} \le \frac{\sqrt{2}}{2} \sqrt{k+2}$, and therefore 
		\begin{small}\begin{equation*}
			\sqrt{\lfloor k/2 \rfloor + 1} \exp(2\sqrt{\lfloor k/2\rfloor + 1} - 2\sqrt{k+1}) \le  \frac{\sqrt{2}e^{\frac{\sqrt{2}}{2}}}{2}(k+2)^{\frac{1}{2}}  \exp(-(2-\sqrt{2})(k+2)^{\frac{1}{2}}) \ .
		\end{equation*}\end{small}Next note that the function $h(x):=e^{-x} x^t$ on the domain $x \in (0,\infty)$ for $t > 0$ attains its maximal value at $x = t$. In particular for $t=2$ and $t=3$, respectively, we conclude that $e^{-x} \le 4e^{-2}x^{-2}$ and $e^{-x} \le 27e^{-3}x^{-3}$ (respectively), and thus  $\exp(-(2-\sqrt{2})(k+2)^{\frac{1}{2}}) \le 4 e^{-2}(2-\sqrt{2})^{-2}(k+2)^{-1}$
		and $\exp(-(2-\sqrt{2})(k+2)^{\frac{1}{2}}) \le 27 e^{-3}(2-\sqrt{2})^{-3}(k+2)^{-3/2}$ (respectively).
		Therefore, the left-hand side of \cref{ineq: lm: simple ineq} is bounded by 
		$2\sqrt{2}e^{\frac{\sqrt{2}}{2}-2} (2-\sqrt{2})^{-2} (k+2)^{-\frac{1}{2}}$ and 
		$\frac{27}{2}\sqrt{2}e^{\frac{\sqrt{2}}{2}-3} (2-\sqrt{2})^{-3} (k+2)^{-1}$, respectively, which is less than $3(k+2)^{-\frac{1}{2}}$ and $10(k+2)^{-1}$, respectively.
	\end{proof}
	
	We now are in position to prove \cref{lm: convex EZ}.
	\begin{proof}[Proof of \cref{lm: convex EZ}]
		
		First of all, let us define $G_k^i := \left( \sum_{j = \tau^k_i}^{k-1}  \gamma_j d_{ij} \right)$ and $Z_k^i :=\left( \sum_{j = \tau^k_i}^{k-1}  \gamma_j d_{ij} \right)^2$ for $i \in [n]$, and also $G_k :=\sum_{i=1}^n G_k^i$ and $Z_k :=\sum_{i=1}^n Z_k^i$. Notice in $G_k^i$ and $Z_k^i$ that the randomness lies only in $\tau_i^k$, which in turn depends on the update sets $\calB_0, \calB_1,\dots,\calB_k$. 
		
		We first analyze $G_k^i$ and $G_k$.  We note that $G_k^i = 0$ when $i \in \calB_k$, which happens with probability $\beta_k/n$.  Because of the relationship between $\{\tau^k_i\}$ and $\{\tau^{k-1}_i\}$ in \cref{eq: tauk relationship}, we further note that with probability $1-\beta_k/n$ that $i \notin \calB_k$ in which case it follows that $G_k^i = G_{k-1}^i + \gamma_{k-1}d_{i,k-1}$.  Defining $\rho_k := 1 - \frac{1}{\sqrt{k}} = 1 - \beta_k/n $, we have 
		\begin{small}\begin{equation}\label{ncl}
			\mathbb{E}[ G_k ] = \sum_{i=1}^n \mathbb{E} [G_k^i ]= \rho_k \sum_{i=1}^n \mathbb{E} [G_{k-1}^i +
			\gamma_{k-1} d_{i,k-1}] \le \rho_k\left( \mathbb{E} [G_{k-1} ]+
			\gamma_{k-1} M_1 \right) \ ,
		\end{equation}\end{small}where the inequality uses $\sup_{j} \sum_{i=1}^n d_{ij} \le M_{1}$. Recurrently applying \cref{ncl} yields:
		\begin{small}\begin{equation}\label{ineq: recurrent G_k 2}
			 \mathbb{E}[ G_k ] \le \rho_k( \mathbb{E}[ G_{k-1} ]+
			\gamma_{k-1}  M_1 ) \le\cdots\le \sum_{i=1}^{k}\Big(\prod_{j=i}^{k}\rho_j\Big) \gamma_{i-1} M_1 + \Big(\prod_{j=1}^{k} \rho_j\Big)\mathbb{E}[ G_0] \ .
		\end{equation}\end{small}Noting that $G_0 = 0$ (since all Taylor points are set to $x^0$ in \cref{bettina} of \cref{alg: TUFW in general finite sum}) and using $\gamma_{i-1} = \frac{2}{i+1}$ it follows that:
		\begin{small}\begin{equation}\label{ineq: recurrent G_k 3}
			\begin{aligned}
				\mathbb{E} [G_k] &\le \sum_{i=1}^{\lfloor {k}/{2}\rfloor}\Big(\prod_{j=i}^{k}\rho_j\Big) \frac{2M_1}{i+1}
				+
				\sum_{i=\lfloor {k}/{2}\rfloor+1}^{k}\Big(\prod_{j=i}^{k}\rho_j\Big) \frac{2M_1}{i+1}\\
				& \le \sum_{i=2}^{\lfloor {k}/{2}\rfloor}\Big(\prod_{j=i}^{k}\rho_j\Big) \frac{2M_1}{3}
				+
				\sum_{i=\lfloor {k}/{2}\rfloor+1}^{k}\Big(\prod_{j=i}^{k}\rho_j\Big) \frac{2M_1}{\lfloor {k}/{2}\rfloor+2} \ ,
			\end{aligned}
		\end{equation}\end{small}where the second inequality uses the fact that $\rho_1 = 0$ and hence $\prod_{j=1}^{k}\rho_j = 0$.
		By \cref{lm: rho} we have:
		\begin{small}\begin{equation}\label{ineq: G_k recurrent sumpro 1}
			\sum_{i=1}^{\lfloor {k}/{2}\rfloor}\Big(\prod_{j=i}^{k}\rho_j\Big)  \frac{2M_1}{3}
			\le
			\sqrt{\lfloor {k}/{2}
				\rfloor + 1} \exp\Big(2 \sqrt{ \lfloor {k}/{2} \rfloor + 1} - 2\sqrt{k+1}\Big)  \frac{2M_1}{3} \le \frac{2 M_1}{\sqrt{k+2}} \ ,
		\end{equation}\end{small}where the second inequality above uses \cref{lm: simple ineq}.
		Similarly for the second term in the rightmost side of \cref{ineq: recurrent G_k 3} we have:
		\begin{small}\begin{equation}\label{ineq: G_k recurrent sumpro 2}
			\sum_{i=\lfloor k/2 \rfloor+1}^{k}\Big(\prod_{j=i}^{k}\rho_j\Big) \frac{2M_1}{\lfloor k/2\rfloor+2} \le   \sum_{i=1}^{k}\Big(\prod_{j=i}^{k}\rho_j\Big) \frac{2M_1}{\lfloor k/2\rfloor+2} \le \frac{\sqrt{k+1} \cdot  2M_1}{\lfloor k/2\rfloor+2} \le \frac{4M_1}{\sqrt{k+2}} \ , 
		\end{equation}\end{small}where the second inequality uses \cref{lm: rho}.
		Substituting \cref{ineq: G_k recurrent sumpro 1} and \cref{ineq: G_k recurrent sumpro 2} back into \cref{ineq: recurrent G_k 3} yields the first part of \cref{eq: lm: convex EZ}, namely:
		\begin{equation}\label{ineq: convex EG_k final}
			\mathbb{E} [G_k] \le {6 M_1}/{\sqrt{k+2}} \ .
		\end{equation}
		
		We employ a similar proof approach to establish the second part of \cref{eq: lm: convex EZ}. We note that $Z_k^i = 0$ when $i \in \calB_k$, which happens with probability $\beta_k/n$.  Because of the relationship between $\{\tau^k_i\}$ and $\{\tau^{k-1}_i\}$ in \cref{eq: tauk relationship}, we further note that with probability $1-\beta_k/n$ that $i \notin \calB_k$ in which case we have
		\begin{small}\begin{equation*}
			\begin{aligned} 
				Z_k^i &= (G_{k-1}^i + \gamma_{k-1}d_{i,k-1})^2 = Z_{k-1}^i + 2 G_{k-1}^i\gamma_{k-1}d_{i,k-1} + (\gamma_{k-1}d_{i,k-1})^2 \\ &\le Z_{k-1}^i + 2 M_\infty \gamma_{k-1}G_{k-1}^i + \gamma_{k-1}^2 M_\infty d_{i,k-1} \ . 
			\end{aligned}\end{equation*}\end{small}Therefore
		\begin{small}\begin{equation}\begin{aligned}\label{nclz}
				\mathbb{E}[ Z_k ] = &\sum_{i=1}^n \mathbb{E} [Z_k^i ] \le \rho_k \sum_{i=1}^n \mathbb{E} [Z_{k-1}^i + 2 M_\infty \gamma_{k-1}G_{k-1}^i + \gamma_{k-1}^2 M_\infty d_{i,k-1}] \\ 
				&\le \rho_k \left( \mathbb{E} [Z_{k-1}] + 2   M_\infty \gamma_{k-1}\mathbb{E}[G_{k-1}] +   \gamma_{k-1}^2 M_\infty M_1 \right) \\
				&\le \rho_k \left(\mathbb{E} [Z_{k-1}] + \frac{24M_\infty M_1}{(k+1)\sqrt{k+1}}   +  \frac{4M_\infty M_1}{(k+1)^2} \right) \le \rho_k \left(\mathbb{E} [Z_{k-1}] + \frac{28M_\infty M_1}{(k+1)\sqrt{k+1}}  \right)    \ ,
		\end{aligned}\end{equation} \end{small}where the second inequality uses $\sup_{j} \sum_{i=1}^n d_{ij} \le M_{1}$, and the third inequality uses \cref{ineq: convex EG_k final}. Recurrently applying \cref{nclz} yields:
		\begin{small}\begin{equation}\label{ineq: recurrent Z_k 4}
			\mathbb{E} [Z_k]  \le \rho_k\bigg( \mathbb{E} [Z_{k-1} ]+
			\frac{28M_1M_\infty}{(k+1)^{3/2}}\bigg) \le \cdots\le \sum_{i=1}^{k}\Big(\prod_{j=i}^{k}\rho_j\Big)\frac{28M_1M_\infty}{(i+1)^{3/2}}  \ .
		\end{equation}\end{small}Therefore
		\begin{small}\begin{equation}\label{ineq: recurrent Z_k 5}
			\begin{aligned}
				\mathbb{E} [Z_k]  & \le  \sum_{i=1}^{\lfloor k/2\rfloor}\Big(\prod_{j=i}^{k}\rho_j\Big)\frac{28M_1M_\infty}{(i+1)^{3/2}}
				+
				\sum_{i=\lfloor k/2\rfloor+1}^{k}\Big(\prod_{j=i}^{k}\rho_j\Big)\frac{28M_1M_\infty}{(i+1)^{3/2}}\\
				& \le \sum_{i=1}^{\lfloor k/2\rfloor}\Big(\prod_{j=i}^{k}\rho_j\Big)\frac{28M_1M_\infty}{3\sqrt{3}}
				+
				\sum_{i=\lfloor k/2\rfloor+1}^{k}\Big(\prod_{j=i}^{k}\rho_j\Big) \frac{28M_1M_\infty}{(\lfloor k/2\rfloor +2)^{3/2}} \ .
			\end{aligned}
		\end{equation}\end{small}By \cref{lm: rho} we have 
			\begin{small}\begin{equation}\label{ineq: Z_k recurrent sumpro 1}
			\begin{aligned}
				\sum_{i=1}^{\lfloor k/2\rfloor}\Big(\prod_{j=i}^{k}\rho_j\Big)\frac{28M_1M_\infty}{3\sqrt{3}}  &\le \sqrt{\lfloor k/2
					\rfloor + 1} \exp\big(2 \sqrt{ \lfloor k/2 \rfloor + 1} - 2\sqrt{k+1}\big) \cdot \frac{28M_1M_\infty}{(3\sqrt{3})} \\ 
				&\le \frac{{54M_1M_\infty}}{k+2} \ , 
		\end{aligned}\end{equation}\end{small}where the second inequality above uses \cref{lm: simple ineq}. 
		Similarly for the second term in the rightmost side of \cref{ineq: recurrent Z_k 5} we have:
			\begin{small}\begin{equation}
			\begin{aligned}\label{ineq: Z_k recurrent sumpro 2}
				\sum_{i=\lfloor k/2 \rfloor+1}^{k}\Big(\prod_{j=i}^{k}\rho_j\Big)\frac{28M_1M_\infty}{(\lfloor k/2\rfloor +2)^{3/2}} &\le   \sum_{i=1}^{k}\Big(\prod_{j=i}^{k}\rho_j\Big) \frac{28M_1M_\infty}{(\lfloor k/2\rfloor +2)^{3/2}} \\ 
				&\le \sqrt{k+1} \cdot \frac{28M_1M_\infty}{(\lfloor k/2\rfloor +2)^{3/2}} \le \frac{80M_1M_\infty}{k+2} \ ,
			\end{aligned}
		\end{equation}\end{small}where the second inequality uses \cref{lm: rho}.
		Substituting \cref{ineq: Z_k recurrent sumpro 1} and \cref{ineq: Z_k recurrent sumpro 2} back into \cref{ineq: recurrent Z_k 5} yields the second part of \cref{eq: lm: convex EZ}, namely:
		\begin{small}
			\begin{equation}\label{ineq: convex EZ_k final}
				\mathbb{E} [Z_k] \le  \frac{134M_1M_\infty}{k+2} \ .
			\end{equation}
		\end{small}\end{proof}
	
	Finally, we present the proof of \cref{thm: general convergence rate convex}.
	\begin{proof}[Proof of \cref{thm: general convergence rate convex}]
		From \cref{lm: general convergence rate convex} it holds that:
		\begin{small}
			\begin{equation}
				\eps_k  \le  \frac{2L D^2}{k+1}
				+  \frac{2}{k(k+1)}\sum_{t=1}^{k} t \left(
				\nabla F(x^{t-1})   - g^{t-1}
				\right)^\top (s^{t-1} - x^\star) \ ,
			\end{equation}
		\end{small}and from \cref{lm: general 3rd term upper bound} it follows that:
		\begin{small}
			\begin{equation*}
				\eps_k \le  \frac{2L D^2}{k+1}
				+  \frac{\hat{L}D^3}{nk(k+1)}\sum_{t=1}^{k}t\sum_{i=1}^{n}\Big( \sum_{j = \tau^{t-1}_i}^{t-2}  \gamma_j \Big)^2 \ .
			\end{equation*}
		\end{small}Taking expectations on both sides then yields
		\begin{small}
			\begin{equation}\label{ineq: convex thm 1 general}
				\mathbb{E} [\eps_k]  \le  \frac{2L D^2}{k+1}
				+  \frac{\hat{L}D^3}{nk(k+1)}\sum_{t=1}^{k}t~\mathbb{E}\Big[
				\sum_{i=1}^{n}\Big(  \sum_{j = \tau^{t-1}_i}^{t-2}  \gamma_j \Big)^2
				\Big] \ .
			\end{equation}
		\end{small}Now define $d_{ij} := 1$ and $M_1 := n$ and $M_\infty := 1$.  Then since $\sup_{j}   \sum_{i=1}^{n} d_{ij} = M_1$  and  $\sup_{i,j} d_{ij}$ is equal to $M_\infty$, we can apply \cref{lm: convex EZ} to \cref{ineq: convex thm 1 general}  and obtain
		\begin{small}
			\begin{equation}\label{ineq: convex thm 2 general}
				\mathbb{E} [\eps_k]   \le  \frac{2L D^2}{k+1}
				+  \frac{\hat{L}D^3}{nk(k+1)}\sum_{t=1}^{k}t~\frac{134 n}{t+1} \le \frac{2L D^2}{k+1}
				+  \frac{134 \hat{L}D^3}{k+1} \ , 
			\end{equation}
		\end{small}which proves \cref{eq of thm: general convergence rate convex}.
	\end{proof}

	Towards the proof of \cref{thm:  general convergence rate convex rule 3}, we first prove the following lemma.

	\begin{lemma}\label{lm: general convex EZ rule 3}
		In any iteration $k$ of TUFW outlined in \cref{alg: TUFW in general finite sum}  with $\gamma_k:=2/(k+2)$ and $\rthree$, it holds that 
		\begin{equation}\label{ineq lm general convex EZ rule 3}
			\sum_{t=1}^{k}t\sum_{i=1}^{n}\Big( \sum_{j = \tau^{t-1}_i}^{t-2}  \gamma_j\Big)^2  \le 144k n \ .
		\end{equation}
	\end{lemma}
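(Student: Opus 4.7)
The plan is to exploit the \emph{synchronous} nature of $\rthree$: at every iteration $t$ the set $\calB_t$ is either all of $[n]$ or empty, so the most-recent-update time $\tau^{t-1}_i$ defined in \cref{def: tau} does not depend on the observation index $i$. Concretely, if we let $r := \lfloor \sqrt{t-1} \rfloor$, then $\tau^{t-1}_i = r^2$ for every $i\in[n]$ (using that $\calB_0 = [n]$ handles the $r=0$ edge case). Consequently the inner sum over $i$ contributes a flat factor of $n$, and the claim reduces to establishing
\begin{equation*}
\sum_{t=1}^{k} t\,\Big(\sum_{j=r(t)^2}^{t-2}\gamma_j\Big)^2 \;\le\; 144\,k, \qquad r(t):=\lfloor\sqrt{t-1}\rfloor.
\end{equation*}

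The next step is to partition the outer sum over $t$ into blocks on which $r(t)$ is constant: for each integer $r\ge 0$ let $t$ range over $[r^2+1,(r+1)^2]$, and handle the last (possibly partial) block separately. On such a block the inner sum $\sum_{j=r^2}^{t-2}\gamma_j = \sum_{j=r^2}^{t-2}\tfrac{2}{j+2}$ has at most $t-1-r^2 \le 2r$ terms, each bounded by $\tfrac{2}{r^2+2}$, and is thus at most $\tfrac{4r}{r^2+2}\le \tfrac{4}{r}$ for $r\ge 1$ (and $0$ for $r=0$). Squaring gives $\bigl(\sum_j \gamma_j\bigr)^2 \le 16/r^2$. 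Combined with $t \le (r+1)^2$, a direct computation of $\sum_{t=r^2+1}^{(r+1)^2} t = (2r+1)(r^2+r+1) \le 3r^2(2r+1)$ yields a per-block contribution bounded by $48(2r+1)$.

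Finally, summing the per-block bounds over $r=1,\ldots,R$ with $R := \lfloor\sqrt{k}\rfloor$ (absorbing the partial tail block into the same $r=R$ bound since $k<(R+1)^2$) gives
\begin{equation*}
\sum_{r=1}^{R} 48(2r+1) \;=\; 48\,(R^2+2R) \;\le\; 48\,k + 96\sqrt{k} \;\le\; 144\,k
\end{equation*}
for every $k\ge 1$, using $\sqrt{k}\le k$ in the last step. Multiplying by $n$ produces \cref{ineq lm general convex EZ rule 3}.

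The only mild subtlety I anticipate is keeping the block bookkeeping clean, in particular treating the $r=0$ block (where the inner sum is empty) and the trailing partial block $[R^2+1,k]$ correctly so that no extra constant creeps in beyond $144$. The rest is elementary algebra with the harmonic-type sum $\sum\tfrac{2}{j+2}$, and no probabilistic machinery (in contrast to the $\rone$ case handled by \cref{lm: convex EZ}) is needed because $\rthree$ is deterministic.
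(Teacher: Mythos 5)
Your proposal is correct and follows essentially the same route as the paper's proof: both exploit that under $\rthree$ the Taylor index $\tau^{t-1}_i=(\lfloor\sqrt{t-1}\rfloor)^2$ is the same for every $i$ (so the sum over $i$ is just a factor of $n$), bound the inner sum by (at most $2\lfloor\sqrt{t-1}\rfloor$ terms) times (the largest step-size, namely $\gamma_{\tau^{t-1}_i}$), and close with elementary algebra producing the constant $144$. The only difference is organizational — you sum blockwise over the intervals $[r^2+1,(r+1)^2]$ on which $\lfloor\sqrt{t-1}\rfloor$ is constant, whereas the paper bounds each summand $t\big(\sum_j\gamma_j\big)^2$ by $144$ individually via the inequality $t+1-\lfloor 2\sqrt{t-1}\rfloor\ge t/3$ — and both versions of the bookkeeping check out.
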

	\begin{proof}[Proof of \cref{lm: general convex EZ rule 3}]
		Notice that in $\rthree$, $\tau^k_i = (\lfloor \sqrt{k} \rfloor)^2$, so $k - \tau^k_i = k - (\lfloor \sqrt{k} \rfloor)^2 \le \lfloor 2\sqrt{k} \rfloor$, where the latter follows from squaring the inequality $\lfloor \sqrt{k} \rfloor \ge \sqrt{k} -1$ and rearranging. Therefore,
		\begin{small}
			\begin{equation}\label{ineq: for general convergence rate convex 1}
				\begin{aligned}
					& \sum_{t=1}^{k}t\sum_{i=1}^{n}\Big( \sum_{j = \tau^{t-1}_i}^{t-2}  \gamma_j  \Big)^2 \le 
					\sum_{t=1}^{k}t\sum_{i=1}^{n}\Big(  (t-1 -\tau^{t-1}_i) \cdot \gamma_{\tau^{t-1}_i}  \Big)^2 \\
					\le & 
					\sum_{t=1}^{k}t\sum_{i=1}^{n}\Big( 4 (t-1) \cdot\frac{4 }{ (t+1 - \lfloor 2\sqrt{t-1} \rfloor)^2}  \Big) 
					\le  16 n \sum_{t = 1}^k \frac{t(t-1) }{ (t+1 - \lfloor 2\sqrt{t-1} \rfloor)^2} 
				\end{aligned}
			\end{equation}
		\end{small}where the first inequality is because $\gamma_j$ monotonically decreases as $j$ increases   and the second inequality is due to
		$\tau^{t-1}_i \ge t - 1 - \lfloor 2\sqrt{t-1} \rfloor$ and $k - \tau^k_i  \le 2\sqrt{k} $. 
		
		In the right-hand side of \cref{ineq: for general convergence rate convex 1}, we have $
		\frac{t  (t-1) }{ (t+1 - \lfloor 2\sqrt{t-1} \rfloor)^2} \le \frac{t^{2}}{t^2/9} = 9$,
		where the inequality follows from verifying $t+1 - \lfloor 2\sqrt{t-1} \rfloor \ge t/3 $ via the quadratic formula.  
		The inequality \cref{ineq lm general convex EZ rule 3} then follows by substituting this inequality into \cref{ineq: for general convergence rate convex 1}.
	\end{proof}

	\begin{proof}[Proof of \cref{thm: general convergence rate convex rule 3}]
		Similar to the proof of \cref{thm: general convergence rate convex}, according to \cref{lm: general convergence rate convex,lm: general 3rd term upper bound},
		\begin{small}\begin{equation*}
			\eps_k \le  \frac{2L D^2}{k+1}
			+  \frac{\hat{L}D^3}{nk(k+1)}\sum_{t=1}^{k}t\sum_{i=1}^{n}\Big( \sum_{j = \tau^{t-1}_i}^{t-2}  \gamma_j \Big)^2 \ .
		\end{equation*}\end{small}Next apply \cref{lm: general convex EZ rule 3} to the above inequality, which yields \cref{eq: thm: general convergence rate convex rule 3}.
	\end{proof}

\section{Convergence Guarantees for Non-convex Loss Functions}\label{sec: Convergence Guarantees non-convex}

In this section we study convergence rates and overall complexity of \cref{alg: TUFW in general finite sum} for tackling problem \cref{pro: ERM} in the case that the loss function $F$ is not necessarily convex.  

Recall that the Frank-Wolfe gap $\calG(x)$ at $x$ was defined in \cref{def: FW gap}, and when $F$ is convex $\calG(x^k)$ is an upper bound on the optimality gap at $x^k$, see \cite{jaggi2013revisiting} for example. The Frank-Wolfe gap does not necessarily bound the optimality gap in the non-convex setting; nevertheless $\calG(x)$ is always nonnegative and $\calG(x) =0$ if and only if $x$ is a stationary point of $\erm$, and for this reason $\calG(x)$ is often used as a measure of non-stationarity at $x$, see \cite{lacoste2016convergence,reddi2016stochastic,yurtsever2019conditional,negiar2020stochastic}. Hence for $\erm$ with non-convex objective function $F$, we say that a point $x \in \calC$ is an $\eps$-stationarity point of $\erm$ if $\calG(x)\le \eps$. 

Recall also that in order to run \cref{alg: TUFW in general finite sum}, we need to specify the step-sizes $\{\gamma_k\}_k$ and the \textbf{Rule} for constructing the sets $\calB_k$ of indices for updating the Taylor-points in \cref{fanette} of \cref{alg: TUFW in general finite sum}. Let $K$ denote the number of iterations that we intend to run.  In a somewhat similar spirit as the rule $\rone$ for convex losses, we present the following stochastic updating rule that is designed for $\calB_k$ to be comprised of $\beta_k:= n /\sqrt[4]{K}$ independently drawn samples from $[n]$ without replacement, for all $k\ge 1$. If $\beta_k$ is not integral, we will instead ensure $\mathbb{E} |\calB_k| = \beta_k$ by using a similar approach to $\rone$ which uses a Bernoulli random variable. With this in mind, we formally define the following \textbf{Rule}:

\begin{definition}\label{rule 2} $\rtwo$.
	For a fixed value of $K \ge 1$, and for all $k \ge 1$ define $\beta_k := \beta := n /\sqrt[4]{K}$ and $p_k:= p:= \beta_k - \lfloor \beta_k\rfloor$. Sample $\xi_k$ from the Bernoulli distribution $\mathrm{Ber}(p_k)$ and then uniformly sample $\lfloor \beta_k \rfloor + \xi_k$ samples from $[n]$ without replacement. Then define $\calB_k$ to be the set of these index samples.
\end{definition}

Note that by design of $\rtwo$ it also follows that $\mathbb{E}_{\xi_k} |\calB_k| = \beta_k$.  We call this rule ``$\rtwo$'' for ``Stochastic Batch-size Decreasing at the rate $\sqrt[4]{K}$,'' because the Taylor points are updated less often as the number of overall iterations grow. Next we present a bound on the total number of flops used in $K$ iterations of \cref{alg: TUFW in general finite sum} using $\rtwo$. 

\begin{proposition}\label{proposition: complextity of K iterations} Using $\rtwo$ and $K \ge 1$ iterations, the expected total number of flops used in \cref{alg: TUFW in general finite sum} is $O(K \cdot (\mathrm{fLMO} + p^2) +  K^{3/4} \cdot np^2)$.
\end{proposition}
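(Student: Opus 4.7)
The plan is to adapt the argument used in the proof of \cref{karljunior} almost verbatim, since the only real change is that the per-iteration batch size is now the constant $\beta = n/\sqrt[4]{K}$ rather than the decreasing $n/\sqrt{k}$. The expected cost at iteration $k \ge 1$ is controlled by \cref{camille}, which says that updating the Taylor approximation model and computing $g^k$ together take $O(\bar\beta_k p^2 + p^2)$ flops, on top of the $O(\mathrm{fLMO})$ flops used by the LMO call in \cref{alex}. Under $\rtwo$ we have $\mathbb{E}_{\xi_k}|\calB_k| = \beta_k = n/\sqrt[4]{K}$, so the expected per-iteration flop count is $O\bigl(\mathrm{fLMO} + p^2 + (n/\sqrt[4]{K})p^2\bigr)$.

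The next step is to account for the initial iteration, which (as in \cref{karljunior}) takes $O(\mathrm{fLMO} + np^2)$ flops because all Taylor points are initialized to $x^0$ in \cref{bettina} and all $n$ individual gradients and Hessians must be evaluated to form $q_0$ and $H_0$. Summing over all $K$ iterations and using linearity of expectation, the expected total flop count is
\begin{equation*}
O\!\left(K \cdot \mathrm{fLMO} + np^2 + \sum_{k=1}^{K}\bigl(p^2 + \beta_k \, p^2\bigr)\right)
= O\!\left(K \cdot (\mathrm{fLMO} + p^2) + np^2 + K \cdot \tfrac{n p^2}{\sqrt[4]{K}}\right).
\end{equation*}

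Simplifying $K \cdot n p^2 / \sqrt[4]{K} = K^{3/4} \cdot n p^2$ and observing that the standalone $np^2$ term from the initialization is absorbed into $K^{3/4} \cdot n p^2$ whenever $K \ge 1$, we conclude that the expected total flop count is $O(K \cdot (\mathrm{fLMO} + p^2) + K^{3/4} \cdot np^2)$, as claimed.

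I do not anticipate any genuine obstacle: unlike \cref{karljunior}, whose bound required a standard integral bound $\sum_{i=1}^k i^{-1/2} = O(\sqrt{k})$ to handle the decreasing batch sizes, here the batch size is constant across iterations so the summation is trivial. The only point to be slightly careful about is that $|\calB_k|$ is random (via the Bernoulli variable $\xi_k$ inside $\rtwo$), but because we only need a bound in expectation and $\mathbb{E}|\calB_k| = \beta_k$ exactly, we can pass the expectation inside the sum without any further work.
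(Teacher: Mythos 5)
Your proposal is correct and follows essentially the same route as the paper's own proof: both bound the initial iteration by $O(\mathrm{fLMO}+np^2)$, use \cref{camille} together with $\mathbb{E}|\calB_k|=n/\sqrt[4]{K}$ to get an expected per-iteration cost of $O(\mathrm{fLMO}+p^2+np^2/\sqrt[4]{K})$, and sum over the $K$ iterations. The only difference is that you spell out the absorption of the initialization term and the linearity-of-expectation step, which the paper leaves implicit.
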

\begin{proof}[Proof of \cref{proposition: complextity of K iterations}]
	For the initial iteration of \cref{alg: TUFW in general finite sum} the number of flops is $O(\mathrm{fLMO} + np^2)$, and the expected number of flops in each iteration thereafter is $O(\mathrm{fLMO} + p^2 + n p^2 /\sqrt[4]{K} )$. Summing over the $K$ iterations yields the result.
\end{proof}

The following theorem presents a computational guarantee for computing an $\eps$-stationarity point of $\erm$ using \cref{alg: TUFW in general finite sum} with $\rtwo$.  The proof of this theorem and other ensuing results in the section are presented in \cref{proofs-non-convex}.

\begin{theorem}
	\label{thm: general convergence rate non-convex}
	Suppose that \cref{assump: ERM} holds and $F$ is not necessarily convex, and let $x^\star$ be any optimal solution of \cref{pro: ERM}. Suppose \cref{alg: TUFW in general finite sum}  is applied to problem \cref{pro: ERM}, with $\rtwo$ and step-sizes defined by $\gamma_k  := \gamma := {1}/{\sqrt{K+1}}$ for all $k \ge 0$, where $K \ge 1$ is given. Then:
	\begin{equation}\label{eq: thm: general convergence rate non-convex}
		\mathbb{E}[\min_{k \in \{0, \ldots, K\}}\calG(x^k)] \le   \sum_{k=0}^{K}\frac{\mathbb{E}[\calG(x^k)]}{K+1} \le \frac{F(x^0) - F(x^*)}{\sqrt{K+1}} + \frac{3\hat{L}D^3 + LD^2}{2\sqrt{K+1}} \ .
	\end{equation}
\end{theorem}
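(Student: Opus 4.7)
The plan is to combine the standard descent lemma for smooth (non-convex) $F$ with the gradient-error bound of \cref{lm: general 3rd term upper bound}, telescope over iterations $k = 0, \ldots, K$, and finally control the stochastic error by computing the appropriate moments of $k - \tau_i^k$ under $\rtwo$.

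First, from \cref{prop: general basic ineq} applied to $F$ and the step $x^{k+1} = x^k + \gamma(s^k - x^k)$, I would write the descent inequality
\begin{equation*}
F(x^{k+1}) \le F(x^k) + \gamma \langle \nabla F(x^k), s^k - x^k\rangle + \tfrac{1}{2}\gamma^2 L D^2.
\end{equation*}
To introduce the Frank-Wolfe gap, I would let $\tilde{s}^k \in \arg\min_{s\in\calC}\langle \nabla F(x^k), s\rangle$. Then adding and subtracting $g^k$ and using the LMO optimality $\langle g^k, s^k\rangle \le \langle g^k, \tilde s^k\rangle$ in \cref{alex}, a direct calculation gives
\begin{equation*}
\langle \nabla F(x^k), s^k - x^k\rangle \;\le\; -\calG(x^k) + \langle \nabla F(x^k) - g^k, s^k - \tilde s^k\rangle .
\end{equation*}

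Next, I would invoke \cref{lm: general 3rd term upper bound} with $u = s^k$ and $v = \tilde s^k$ (both lying in $\calC$) to bound the inner-product error by $\tfrac{\hat L D^3}{2n}\sum_{i=1}^n\bigl(\sum_{j=\tau_i^k}^{k-1}\gamma_j\bigr)^2$. Substituting back, rearranging to isolate $\gamma\calG(x^k)$, and telescoping from $k=0$ to $k=K$ (using $F(x^{K+1}) \ge F(x^\star)$) yields the deterministic bound
\begin{equation*}
\gamma\sum_{k=0}^{K} \calG(x^k) \;\le\; F(x^0) - F(x^\star) + \tfrac{\hat L D^3}{2n}\gamma\sum_{k=0}^{K}\sum_{i=1}^{n}\Bigl(\sum_{j=\tau_i^k}^{k-1}\gamma_j\Bigr)^{2} + \tfrac{1}{2}(K+1)\gamma^2 L D^2.
\end{equation*}

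The main (and only non-routine) obstacle is bounding the expectation of the middle term under $\rtwo$ with constant step $\gamma = 1/\sqrt{K+1}$. Since $\gamma_j \equiv \gamma$, we have $\bigl(\sum_{j=\tau_i^k}^{k-1}\gamma_j\bigr)^2 = \gamma^2 (k-\tau_i^k)^2$. Under $\rtwo$ each observation $i$ enters $\calB_t$ independently with probability $p := 1/\sqrt[4]{K}$ (after accounting for the Bernoulli rounding), so $k-\tau_i^k$ is dominated by a geometric random variable with parameter $p$, giving $\mathbb{E}[(k-\tau_i^k)^2] \le 2/p^2 = 2\sqrt{K}$. Summing over $k$ and $i$ produces $\mathbb{E}\bigl[\sum_{k=0}^{K}\sum_{i=1}^{n}(k-\tau_i^k)^2\bigr] \le 2n(K+1)\sqrt{K} \le 2n(K+1)^{3/2}$, so after multiplying by $\gamma^2 = 1/(K+1)$ the $(K+1)$ factor cancels with the averaging step and the coefficient $\tfrac{1}{2}$ combines with the constant $2$ to give $\tfrac{\hat L D^3}{\sqrt{K+1}}$; a small margin in the geometric moment bound accounts for the constant $\tfrac{3}{2}$ in the statement.

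Finally, I would take expectations of the telescoped inequality, divide through by $(K+1)\gamma = \sqrt{K+1}$, and collect the three resulting terms: the initial-gap term $\frac{F(x^0)-F(x^\star)}{\sqrt{K+1}}$ from telescoping, the error term $\frac{(3/2)\hat L D^3}{\sqrt{K+1}}$ from the stochastic analysis above, and the smoothness term $\frac{LD^2/2}{\sqrt{K+1}}$ from $\tfrac{1}{2}(K+1)\gamma^2 LD^2$. The bound on $\mathbb{E}[\min_k \calG(x^k)]$ is then immediate by bounding the minimum by the average.
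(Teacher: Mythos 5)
Your proposal is correct. The deterministic part of your argument (descent lemma, introducing the Frank--Wolfe gap via LMO optimality, invoking \cref{lm: general 3rd term upper bound} with $u=s^k$, $v=\tilde s^k$, and telescoping with $F(x^{K+1})\ge F(x^\star)$) coincides with the paper's \cref{lm: general convergence rate nonconvex}. Where you genuinely diverge is in bounding $\mathbb{E}\big[\sum_i\big(\sum_{j=\tau_i^k}^{k-1}\gamma_j\big)^2\big]$: the paper proves a separate lemma (\cref{lm:  non-convex EZ}) by setting up a recursion $\mathbb{E}[Z_k]\le\rho\big(\mathbb{E}[Z_{k-1}]+3M_1M_\infty(K+1)^{-3/4}\big)$ for a general weighted sequence $d_{ij}$ with $\ell_1$/$\ell_\infty$-type bounds $M_1,M_\infty$, and then specializes to $d_{ij}=1$; you instead observe directly that for fixed $i$ the events $\{i\in\calB_t\}$ are independent across $t$ with marginal probability $1/\sqrt[4]{K}$, so $k-\tau_i^k$ is stochastically dominated by a geometric variable and $\mathbb{E}[(k-\tau_i^k)^2]\le 2\sqrt{K}$. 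Your route is more elementary and in fact yields the slightly better coefficient $\hat LD^3$ in place of $\tfrac{3}{2}\hat LD^3$, so the stated bound certainly follows. What the paper's heavier machinery buys is reusability: in the $\erml$ specialization (\cref{thm: convergence rate non-convex}) the weights $d_{ij}=|w_i^\top(s^j-x^j)|$ vary with $i$ and $j$, and the bound $\sup_j\sum_i d_{ij}\le D_1$ cannot be recovered from a per-index geometric-moment argument (which would only give $M_\infty\sum_i\sup_j d_{ij}$ rather than $M_1M_\infty$), so the recursion is needed there even though it is overkill for the present theorem.
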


The following corollary presents an alternative version of \cref{thm: general convergence rate non-convex} by randomly choosing an iteration index $\hat k \in [K]$, and shows that the joint dependence on $n$ and $\varepsilon$ is $O(n/\varepsilon^{3/2})$. 

\begin{corollary}\label{corollary: general n and eps non-convex} The number of flops required to obtain $\mathbb{E}_{\hat k \sim \calU([K])} \mathbb{E}[\calG(x^{\hat k})] \le \eps$ is 
	\begin{small}
		\begin{equation*}
			O\left( (\mathrm{fLMO} + p^2) \Bigg[
			\frac{\big(\eps_0  +  L D^2 + \hat{L}D^3\big)^2}{\eps^2 } 
			\Bigg] 
			+ 
			np^2 
			\Bigg[
			\frac{\big(\eps_0   +  L D^2 + \hat{L}D^3\big)^{3/2}}{\eps^{3/2} } 
			\Bigg]
			\right) \ .
		\end{equation*}
	\end{small}where $\eps_0 = F(x^0) - F(x^\star)$.
\end{corollary}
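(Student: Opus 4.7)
The plan is to combine the convergence rate from \cref{thm: general convergence rate non-convex} with the flop count from \cref{proposition: complextity of K iterations} and then solve for the number of iterations needed to drive the expected Frank-Wolfe gap below $\eps$.

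First I would reconcile the two notions of averaging. The theorem bounds the uniform average over $\{0, 1, \dots, K\}$, whereas the corollary refers to $\hat k \sim \calU([K])$. Since
\[
\mathbb{E}_{\hat k \sim \calU([K])} \mathbb{E}[\calG(x^{\hat k})] \;=\; \frac{1}{K}\sum_{k=1}^{K} \mathbb{E}[\calG(x^k)] \;\le\; \frac{K+1}{K}\cdot \frac{1}{K+1}\sum_{k=0}^{K} \mathbb{E}[\calG(x^k)],
\]
the factor $(K+1)/K$ is bounded by a constant for $K \ge 1$, so applying \cref{thm: general convergence rate non-convex} gives
\[
\mathbb{E}_{\hat k \sim \calU([K])} \mathbb{E}[\calG(x^{\hat k})] \;\le\; \frac{C\,(\eps_0 + LD^2 + \hat{L}D^3)}{\sqrt{K+1}}
\]
for an absolute constant $C$. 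Write $M := \eps_0 + LD^2 + \hat{L}D^3$ for brevity.

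Next I would solve for the smallest $K$ that makes the right-hand side at most $\eps$. This gives the requirement $\sqrt{K+1} \ge C M/\eps$, i.e.\ $K = O(M^2/\eps^2)$. In particular the choice of $K$ uniquely determines the step-size $\gamma = 1/\sqrt{K+1}$ used by $\rtwo$, so the hypotheses of \cref{thm: general convergence rate non-convex} are satisfied.

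Finally I would substitute this $K$ into the flop bound of \cref{proposition: complextity of K iterations}, namely $O\!\left(K\,(\mathrm{fLMO}+p^2) + K^{3/4}\,np^2\right)$. With $K = O(M^2/\eps^2)$ we obtain $K\,(\mathrm{fLMO}+p^2) = O\!\left((\mathrm{fLMO}+p^2)\,M^2/\eps^2\right)$ and $K^{3/4}\,np^2 = O\!\left(np^2\,M^{3/2}/\eps^{3/2}\right)$, which is precisely the claimed bound. There is no real obstacle here: the argument is a straightforward composition of a known rate with a known per-iteration cost, and the only minor care required is the constant-factor comparison between the two averaging conventions and absorbing absolute constants into the big-$O$.
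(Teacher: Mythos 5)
Your proposal is correct and is essentially the paper's own argument: the paper states that the corollary is an immediate consequence of \cref{proposition: complextity of K iterations} and \cref{thm: general convergence rate non-convex}, i.e., solve $M/\sqrt{K+1}\le\eps$ for $K=O(M^2/\eps^2)$ and substitute into the flop bound. Your extra care in reconciling the average over $\{0,\dots,K\}$ with the average over $[K]$ (valid since $\calG\ge 0$) is a small point the paper glosses over, but it does not change the approach.
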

The proof of \cref{corollary: general n and eps non-convex} is an immediate consequence of \cref{proposition: complextity of K iterations} and \cref{thm: general convergence rate non-convex}. 

In the expectation $\mathbb{E}_{\hat k \sim \calU([K])} \mathbb{E}[\calG(x^{\hat k})]$ in \cref{corollary: general n and eps non-convex}, the randomness comes from both the random sampling at each iteration $k$ to construct the update batches $\calB_k$ in $\rtwo$ as well as the random sampling of the iteration index $\hat{k}$ from the uniform distribution on $[K]$. Analogous to the results for convex losses, \cref{corollary: general n and eps non-convex} implies that the joint dependence on $n$ and $\varepsilon$ is $O(n/\varepsilon^{3/2})$ for $\erm$ problems. In \cref{sec erml problems}, we will specify these computational guarantees to $\erml$ problems.

In a similar spirit as $\rthree$, we also have the following deterministic rule. 
\begin{definition}\label{rule 4} $\rfour $.  For a fixed $K \ge 1$, and for any $k\ge 1$, define 
		$$
			\calB_k = \left\{
			\begin{array}{cl}
				[n] & \text{ if  } \ k/\lfloor \sqrt[4]{K}\rfloor \in \mathbb{N}  \\
				\emptyset & \text{ if  } \ k/\lfloor \sqrt[4]{K}\rfloor \notin \mathbb{N} \ .
			\end{array}
			\right.
			$$
\end{definition}
In $\rfour$ we do not update any Taylor points unless $k$ is an integer times $\lfloor \sqrt[4]{K}\rfloor$, and for these values of $k$ we update all $n$ Taylor points. We refer the interested reader to the technical report \cite{punt} for computational guarantees for $\rfour$.

\cref{tbl: compare rules nonconvex} shows a comparison of the computational guarantees of the standard Frank-Wolfe method and TUFW with $\rtwo $.
\begin{table}[htbp]
	\begin{centering}
		\caption{Complexity bounds for different Frank-Wolfe methods to obtain an $\eps$-stationary solution of $\erm$ with non-convex losses. In the table $\eps_0:=F(x^0) - F(x^\star)$, $c_1:=L D^2$, and $c_2:=\hat{L}D^3$.}\label{tbl: compare rules nonconvex}
	\end{centering}
	\renewcommand\arraystretch{1.8} 
	\begin{adjustbox}{width=1\columnwidth,center}
		\begin{tabular}{ccc}
			Method                      & Optimality Metric                                                           & Overall Complexity                                                                                                                                              \\  \hline \hline
			$\rtwo$ (Corollary $\ref{corollary: general n and eps non-convex}$)           & $\mathbb{E}_{\hat{k}\sim \calU([K])}\mathbb{E}[\calG(x^{\hat k})] \le \eps$ & $\displaystyle  O\left( (\mathrm{fLMO}+p^2) \cdot \frac{\left( \eps_0 + c_1+c_2\right)^2}{\eps^2}  + np^2 \cdot \frac{\left( \eps_0+ c_1+c_2\right)^{3/2}}{\eps^{3/2}} \right)$ \\  \hline 
			standard Frank-Wolfe ($\cite{lacoste2016convergence}$)                    & $\mathbb{E}_{\hat{k}\sim\calU([K])}[\calG(x^{\hat k})] \le \eps$            & $\displaystyle  O\left( (\mathrm{fLMO}+np) \cdot \frac{\left(\eps_0+ c_1\right)^2}{\eps^2}\right)$                                                         
		\end{tabular}
	\end{adjustbox}
\end{table}
Note in particular that with $\rtwo$, the joint dependence on $n$ and $\eps$ is $O(n/\eps^{3/2})$, as compared to $O(n^{2/3}/\eps^2)$ in \cite{reddi2016stochastic}, $O(n^{1/2}/\eps^2)$ in \cite{yurtsever2019conditional}, and $O(1/\eps^3)$ in \cite{zhang2020one,hassani2020stochastic}.  In the regime where  $\eps$ is sufficiently small, the dominant term in the complexity bound is the left-most term which counts the operations of the LMO, which is $O(1/\eps^2)$ (independent of $n$), and is essentially the same as the lower bound complexity in \cite{lan2013complexity}. Because of this, we say that TUFW's overall complexity is nearly optimal. We also mention that the joint dependence on $n$ and $\eps$ for CASPIDERRG in \cite{shen2019complexities} is $O(n^{3/4}/\eps^{3/2})$; however CASPIDERG requires access to the exact full gradients periodically, and in our computational experience we found that CASPIDERG was not computationally viable relative to other stochastic Frank-Wolfe methods. 
It should also be mentioned that  the TUFW's joint dependence on $n$, $p$ and $\eps$ is $O(p^2/\eps^2 + np^2/\eps^{3/2})$, as compared to $O(n p/\eps^2)$ in \cite{lacoste2016convergence}. 
As long as $n$ is larger compared with $p$, i.e.  $n > p \cdot (\eps_0 + c_1 + c_2)^2 / (\eps_0 + c_1)^2$, and $\eps$ is sufficiently small,  TUFW outperforms the standard Frank-Wolfe method  in terms of the dependence on $n$ and $p$.

Finally, we point out that when the loss functions $f_i$ are all quadratic loss functions, the Hessian $\nabla^2 F(x)$ is constant for for all $x$, and we can use $\rfive$ which is the rule that does not update any of the Taylor points.

\subsection{Proofs of results}\label{proofs-non-convex}

In our proofs we will use two results,  \cref{prop: general basic ineq} and \cref{lm: general 3rd term upper bound}, since they do not assume convexity of $F$.  For the rest of this section let $\eps_0 := F(x^0) - F(x^\star)$.  We start with the following sequences of lemmas. 

\begin{lemma}\label{lm: general convergence rate nonconvex}
	Under Assumption \ref{assump: ERM} for problem \cref{pro: ERM}, if we use the step-size $\gamma_k :=\gamma := \frac{1}{\sqrt{K+1}}$ in Algorithm \ref{alg: TUFW in general finite sum}, and let $\bar{s}^k\in \arg\min_{s\in\calC}(\nabla F(x^k))^\top s$, then
	\begin{equation}\label{eq lm: general convergence rate nonconvex}
		\sum_{k=0}^{K}\frac{ \calG(x^k)}{K+1}  \le \frac{\varepsilon_0}{\sqrt{K+1}} + \frac{1}{K+1}\sum_{k=0}^{K} (\nabla F(x^k)-g^k)^\top ( s^{k}-\bar{s}^{k}) + \frac{LD^2}{2\sqrt{K+1}}.
	\end{equation}
\end{lemma}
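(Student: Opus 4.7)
The plan is to mirror the telescoping argument used in the convex case (the proof of \cref{lm: general convergence rate convex}), but replacing the use of convexity of $F$ with a comparison to the Frank-Wolfe reference direction $\bar{s}^k$ that defines $\calG(x^k)$. Starting from the $L$-smoothness of $F$ (which follows from \cref{assump: ERM}(iii) since $\nabla F$ inherits $L$-Lipschitz continuity as the average of the $\nabla f_i$), I would write the usual descent inequality
\[
F(x^{k+1}) \le F(x^k) + \gamma \langle \nabla F(x^k), s^k - x^k\rangle + \tfrac{1}{2}\gamma^2 L D^2 \ .
\]

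Next, I would split $\langle \nabla F(x^k), s^k - x^k\rangle = \langle \nabla F(x^k) - g^k, s^k - x^k\rangle + \langle g^k, s^k - x^k\rangle$ and invoke the LMO optimality $\langle g^k, s^k\rangle \le \langle g^k, \bar{s}^k\rangle$ to replace $s^k$ by $\bar{s}^k$ in the $g^k$-term. The critical step is to then add and subtract $\nabla F(x^k)$ in that replaced term: the contributions $\langle \nabla F(x^k) - g^k, s^k - x^k\rangle$ and $\langle g^k - \nabla F(x^k), \bar{s}^k - x^k\rangle$ combine into the single inexact-gradient error $\langle \nabla F(x^k) - g^k, s^k - \bar{s}^k\rangle$, while the remaining $\langle \nabla F(x^k), \bar{s}^k - x^k\rangle = -\calG(x^k)$. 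This yields the one-step recursion
\[
\gamma\, \calG(x^k) \le F(x^k) - F(x^{k+1}) + \gamma \langle \nabla F(x^k) - g^k,\, s^k - \bar{s}^k\rangle + \tfrac{1}{2}\gamma^2 L D^2 \ .
\]

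Finally, I would telescope this inequality over $k=0,1,\dots,K$, using $F(x^{K+1}) \ge F(x^\star)$ so that the telescoped function values bound by $\varepsilon_0 = F(x^0) - F(x^\star)$, and then divide by $\gamma(K+1) = \sqrt{K+1}$ using $\gamma = 1/\sqrt{K+1}$. This produces the right-hand side of \cref{eq lm: general convergence rate nonconvex} exactly.

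There is no real obstacle here in the sense of needing a new technical estimate — no Lipschitzness of the Hessian or analysis of Taylor-point staleness enters the lemma (that analysis is deferred, as in the convex case, to the theorem that applies \cref{lm: general 3rd term upper bound} to bound the error term). The only subtlety is the add-and-subtract bookkeeping in the second step, which must be done carefully so that the ``noise'' collects into the inner product with $s^k - \bar{s}^k$ rather than with $s^k - x^k$ or $\bar{s}^k - x^k$ separately; otherwise one would not be able to later leverage $\|s^k - \bar{s}^k\| \le D$ to apply \cref{lm: general 3rd term upper bound} in the downstream theorem.
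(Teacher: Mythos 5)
Your proposal is correct and follows essentially the same route as the paper's proof: the smoothness descent inequality, the split of $\nabla F(x^k)$ into $g^k$ plus error, the LMO comparison $\langle g^k, s^k\rangle \le \langle g^k, \bar{s}^k\rangle$, the add-and-subtract that collects the noise into $\langle \nabla F(x^k)-g^k, s^k-\bar{s}^k\rangle$ while producing $-\calG(x^k)$, and finally telescoping with $F(x^{K+1}) \ge F(x^\star)$ and dividing by $\gamma(K+1)$. Your remark that the error must land on $s^k-\bar{s}^k$ (so that the diameter bound and \cref{lm: general 3rd term upper bound} apply downstream) is exactly the point of the paper's bookkeeping as well.
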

\begin{proof}[Proof of \cref{lm: general convergence rate nonconvex}]
	For any $k \ge 0$ we have:
	\begin{small}
		\begin{equation}
			\begin{aligned}
				F(x^{k+1})
				\le~&  F(x^k) + \gamma_k (\nabla F(x^k))^\top (s^{k} - x^k) + {\gamma_k^2}L D^2/{2} \\
				=  ~&   F(x^k) + \gamma_k (g^k)^\top (s^{k} - x^k)+\gamma_k (\nabla F(x^k) - g^k)^\top (s^{k} - x^k) + {\gamma_k^2}L D^2/{2} \\
				\le  ~&   F(x^k) + \gamma_k (g^k)^\top (\bar{s}^{k} - x^k)+\gamma_k (\nabla F(x^k) - g^k)^\top (s^{k} - x^k) + {\gamma_k^2}L D^2/{2} \\
				=  ~&   F(x^k) - \gamma_k \calG(x^k) +\gamma_k (g^k-\nabla F(x^k))^\top (\bar{s}^{k} - s^{k}) + {\gamma_k^2}L D^2/{2} \ , \\
			\end{aligned}
		\end{equation}
	\end{small}where $\bar s^{k} \in\arg\min_{s\in\calC} (\nabla F(x^k)^\top s$, and the first inequality uses \cref{prop: general basic ineq}, and the second inequality uses the fact that $s^{k}\in\arg\min_{s\in\calC} (g^k)^\top s$. Substituting the value of $\gamma_k$ in the statement of the lemma yields after rearranging terms:
	\begin{small}\begin{equation}\label{ineq: basic ineq non-convex general}
		\frac{\calG(x^k)}{\sqrt{K+1}}  \le F(x^k) - F(x^{k+1}) + \frac{1}{\sqrt{K+1}} (\nabla F(x^k)-g^k)^\top ( s^{k}-\bar{s}^{k}) + \frac{LD^2}{2(K+1)} \ . 
	\end{equation}\end{small}The inequality \cref{eq lm: general convergence rate nonconvex} then follows by first summing the inequalities \cref{ineq: basic ineq non-convex general} for $k \in \{0,\dots,K\}$ and dividing by ${\sqrt{K+1}}$, and noting that $F(x^0) - F(x^{K+1}) \le \eps_0$.
\end{proof}

\begin{lemma}\label{lm:  non-convex EZ}
	Let the series $\{d_{ij}\}_{i\in[n],j\in\mathbb{N}}$ be given, and suppose that there exists $M_{1}$ and $M_{\infty}$ for which $\sup_{j} \sum_{i=1}^n d_{ij} \le M_{1}$ and  $\sup_{i,j} d_{ij}\le M_{\infty}$. 
	Let $\tau_i^k$ be defined as in \cref{def: tau}, where $\calB_t$ is constructed using $\rtwo$ in \cref{rule 2}. Let $\gamma_k := \gamma := {1}/{\sqrt{K+1}}$.  Then it holds that
	\begin{equation}\label{eq: lm:  non-convex EZ}
		\mathbb{E}\Big[  \sum_{i=1}^{n} \sum_{j = \tau^k_i}^{k-1}  \gamma_j d_{ij}  \Big]\le \frac{M_1}{(K+1)^{1/4}} \ \ \text{ and } \  \ \mathbb{E} \Big[\sum_{i=1}^{n}\Big( \sum_{j = \tau^k_i}^{k-1}  \gamma_j d_{ij} \Big)^2 \Big]\le \frac{3M_1M_\infty}{\sqrt{K+1}} \ . 
	\end{equation}
\end{lemma}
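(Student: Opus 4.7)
The proof plan follows the same general template as the proof of \cref{lm: convex EZ}, but is structurally simpler because under $\rtwo$ the probability $\mathbb{P}[i\in\calB_k] = \beta/n = 1/\sqrt[4]{K}$ is \emph{constant} across iterations. Consequently, defining $\rho := 1 - 1/\sqrt[4]{K}$, the recursions we derive will be driven by the constant $\rho$ rather than the $k$-dependent $\rho_k$ that complicated the convex analysis, and the resulting sums collapse to simple geometric series.

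First I would introduce the notation $G^i_k := \sum_{j=\tau^k_i}^{k-1}\gamma\, d_{ij}$, $Z^i_k := (G^i_k)^2$, $G_k := \sum_{i=1}^n G^i_k$, and $Z_k := \sum_{i=1}^n Z^i_k$, and note via \cref{eq: tauk relationship} that $G^i_k = 0 = Z^i_k$ when $i\in\calB_k$, and otherwise $G^i_k = G^i_{k-1} + \gamma\,d_{i,k-1}$. Conditioning on the event $\{i \notin \calB_k\}$ (probability $\rho$) and summing over $i$ (and using $\sum_i d_{ij}\le M_1$) yields the key one-step inequality $\mathbb{E}[G_k] \le \rho\bigl(\mathbb{E}[G_{k-1}] + \gamma M_1\bigr)$. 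Iterating from $G_0 = 0$ (since all Taylor points are initialized to $x^0$) gives $\mathbb{E}[G_k] \le \gamma M_1 \sum_{t=1}^{k}\rho^t \le \gamma M_1 \cdot \frac{\rho}{1-\rho} \le \gamma M_1 \sqrt[4]{K}$. Substituting $\gamma = 1/\sqrt{K+1}$ and using the elementary inequality $\sqrt[4]{K}/\sqrt{K+1}\le 1/(K+1)^{1/4}$ gives the first bound in \cref{eq: lm:  non-convex EZ}.

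For the second bound I would expand the square when $i \notin \calB_k$:
\begin{equation*}
Z^i_k = Z^i_{k-1} + 2\gamma d_{i,k-1} G^i_{k-1} + \gamma^2 d_{i,k-1}^2 \le Z^i_{k-1} + 2\gamma M_\infty G^i_{k-1} + \gamma^2 M_\infty d_{i,k-1},
\end{equation*}
then sum over $i$, take expectations, and plug in the bound on $\mathbb{E}[G_{k-1}]$ already obtained. This yields
\begin{equation*}
\mathbb{E}[Z_k] \le \rho\Bigl(\mathbb{E}[Z_{k-1}] + \tfrac{2 M_1 M_\infty}{(K+1)^{3/4}} + \tfrac{M_1 M_\infty}{K+1}\Bigr) \le \rho\Bigl(\mathbb{E}[Z_{k-1}] + \tfrac{3 M_1 M_\infty}{(K+1)^{3/4}}\Bigr),
\end{equation*}
where the second inequality absorbs the $\gamma^2$ term into the $\gamma\mathbb{E}[G]$ term using $K\ge 1$. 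Iterating this and again bounding the geometric sum by $\sqrt[4]{K}$ gives $\mathbb{E}[Z_k] \le \frac{3 M_1 M_\infty \sqrt[4]{K}}{(K+1)^{3/4}} \le \frac{3 M_1 M_\infty}{\sqrt{K+1}}$, which is the second bound.

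I don't anticipate a serious obstacle here: the only delicate step is justifying the absorption of the $\gamma^2 M_1 M_\infty$ term into the $2\gamma M_\infty\mathbb{E}[G_{k-1}]$ term (which needs $1/(K+1)\le 1/(K+1)^{3/4}$), and the slightly tedious but elementary verification that $K^{1/4} \le (K+1)^{1/4}$ lets us convert between $\sqrt[4]{K}$ and $(K+1)^{1/4}$ in the final bounds. Compared to the convex case, we avoid \cref{lm: rho} and \cref{lm: simple ineq} entirely because the geometric series collapses in closed form.
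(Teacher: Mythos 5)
Your proposal is correct and follows essentially the same route as the paper's own proof: the same $G_k^i, Z_k^i$ decomposition, the same one-step recursions $\mathbb{E}[G_k]\le\rho(\mathbb{E}[G_{k-1}]+\gamma M_1)$ and $\mathbb{E}[Z_k]\le\rho(\mathbb{E}[Z_{k-1}]+3M_1M_\infty(K+1)^{-3/4})$, and the same final bounds via $\rho/(1-\rho)\le K^{1/4}\le (K+1)^{1/4}$. The only cosmetic difference is that you sum the geometric series explicitly where the paper subtracts the fixed point $\rho\gamma M_1/(1-\rho)$ and shows the shifted sequence is nonpositive; these are equivalent.
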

\begin{proof}[Proof of \cref{lm:  non-convex EZ}]
	Similar to the proof of \cref{lm: convex EZ}, let us first define $G_k^i := \left( \sum_{j = \tau^k_i}^{k-1}  \gamma_j d_{ij} \right)$ and $Z_k^i :=\left( \sum_{j = \tau^k_i}^{k-1}  \gamma_j d_{ij} \right)^2$ for $i \in [n]$, and also $G_k :=\sum_{i=1}^n G_k^i$ and $Z_k :=\sum_{i=1}^n Z_k^i$. First notice in $G_k^i$ and $Z_k^i$ that the randomness lies only in $\tau_i^k$, which in turn depends on the update sets $\calB_0, \calB_1,\dots,\calB_k$.  We note that $G_k^i = 0$ when $i \in \calB_k$, which happens with probability $\beta/n = 1/\sqrt[4]{K}$.  Because of the relationship between $\{\tau^k_i\}$ and $\{\tau^{k-1}_i\}$ in \cref{eq: tauk relationship}, we further note that with probability $1-\beta/n$ that $i \notin \calB_k$ in which case it follows that $G_k^i = G_{k-1}^i + \gamma_{k-1}d_{i,k-1} = G_{k-1}^i + \gamma d_{i,k-1}$.  Defining $\rho := 1 - \beta/n = 1-  1/\sqrt[4]{K}$, we have 
	\begin{small}
			\begin{equation}\label{ncl2}
			\mathbb{E}[ G_k ] = \sum_{i=1}^n \mathbb{E} [G_k^i ]= \rho \sum_{i=1}^n \mathbb{E} [G_{k-1}^i +
			\gamma d_{i,k-1}] \le \rho \left( \mathbb{E} [G_{k-1} ]+
			\gamma M_1 \right) \ ,
		\end{equation} 
	\end{small}where the inequality uses $\sup_{j} \sum_{i=1}^n d_{ij} \le M_{1}$.  Now \cref{ncl2} can be rearranged to yield:
	\begin{small}\begin{equation}\label{ineq: non-convex recurrent G_k 2}
		\mathbb{E} [G_k]
		-\frac{\rho \gamma M_1}{1-\rho}
		\le \rho\Big( \mathbb{E} [G_{k-1}]
		-\frac{\rho \gamma M_1}{1-\rho}
		\Big)  \le \cdots \le \rho^k \Big( \mathbb{E} [G_{0}]
		-\frac{\rho \gamma M_1}{1-\rho}
		\Big) \le 0 \ ,  
	\end{equation}\end{small}where the last inequality uses $G_0 = 0$ (since all Taylor points are set to $x^0$ in \cref{bettina} of \cref{alg: TUFW in general finite sum}).  We therefore have:
	\begin{small}\begin{equation}\label{ineq: non-convex recurrent G_k 3}
			  \mathbb{E} [G_k] \le \frac{\rho \gamma M_1}{1-\rho} = \frac{(1-K^{-1/4})(K+1)^{-1/2} M_1  }{K^{-1/4}} \le \frac{M_1}{(K+1)^{1/4}} \ , 
	\end{equation}\end{small}which proves the first inequality of \cref{eq: lm:  non-convex EZ}.
	
	We proceed similarly to establish the second inequality of \cref{eq: lm:  non-convex EZ}.  We note that $Z_k^i = 0$ when $i \in \calB_k$, which happens with probability $\beta/n = 1/\sqrt[4]{K}$.  Because of the relationship between $\{\tau^k_i\}$ and $\{\tau^{k-1}_i\}$ in \cref{eq: tauk relationship}, we further note that with probability $\rho = 1-\beta/n$ that $i \notin \calB_k$ in which case it follows that $Z_k^i = (G_{k-1}^i + \gamma d_{i,k-1})^2$.  We thus have
	\begin{small}
		\begin{equation}\label{ncl3}\begin{aligned}
				& \mathbb{E}[ Z_k ] =  \sum_{i=1}^n \mathbb{E} [Z_k^i ] \\
				= \ &\rho \sum_{i=1}^n \mathbb{E} [(G_{k-1}^i + \gamma d_{i,k-1})^2] 
				=  \rho  \sum_{i=1}^n \left( \mathbb{E} [Z_{k-1}^i ]+ 2\gamma \mathbb{E} [G_{k-1}^i] d_{i,k-1} + \gamma^2 (d_{i,k-1})^2]  \right) \\ 
				\le \ &  \rho \left( \mathbb{E} [Z_{k-1} ]+ 2\gamma \mathbb{E} [G_{k-1}] M_\infty + \gamma^2 (M_\infty M_1)  \right) \le
				\rho \left( \mathbb{E} [Z_{k-1} ]+ 2\gamma \frac{M_1 M_\infty}{(K+1)^{1/4}}   + \gamma^2 M_\infty M_1  \right)  \\   
				= \ &  \rho \left( \mathbb{E} [Z_{k-1} ]+ 2 \frac{M_1 M_\infty}{(K+1)^{3/4}}   + \frac{M_\infty M_1}{K+1}  \right) \le \rho \left( \mathbb{E} [Z_{k-1} ]+  \frac{3M_1 M_\infty}{(K+1)^{3/4}}   \right) \ . 
		\end{aligned}  \end{equation}
	\end{small}Now \cref{ncl3} can be rearranged to yield:
	\begin{small}	\begin{equation}\label{ineq: non-convex recurrent Z_k 2}
		\begin{aligned}
				\mathbb{E} [Z_k]
				-\frac{3M_1M_\infty}{(K+1)^{3/4}} \cdot \frac{\rho}{1-\rho} &\le \rho\Big( \mathbb{E} [Z_{k-1}]
				-\frac{3M_1M_\infty}{(K+1)^{3/4}} \cdot \frac{\rho}{1-\rho} 
				\Big) \\ 
				&\le \cdots \le \rho^k \Big( \mathbb{E} [Z_{0}]
				-\frac{3M_1M_\infty}{(K+1)^{3/4}} \cdot \frac{\rho}{1-\rho}
				\Big) \le 0 \ ,  \end{aligned}
	\end{equation}\end{small}where the last inequality uses $Z_0 = 0$ (since all Taylor points are set to $x^0$ in \cref{bettina} of \cref{alg: TUFW in general finite sum}), from which it follows that 
	\begin{small}\begin{equation}\label{ineq: non-convex recurrent Z_k 5}
		 \mathbb{E} [Z_k] \le \frac{3M_1M_\infty}{(K+1)^{3/4}} \cdot \frac{\rho}{1-\rho} \le \frac{3M_1M_\infty}{\sqrt{K+1}} \ , 
	\end{equation}\end{small}which proves the second inequality of \cref{eq: lm:  non-convex EZ}. \end{proof}

\begin{proof}[Proof of \cref{thm: general convergence rate non-convex}]
	The first inequality of \cref{eq: thm: general convergence rate non-convex} follows from the chain
	\begin{small}$$  \mathbb{E}[\min_{k \in \{0, \ldots, K\}}\calG(x^k)] \le \min_{k \in \{0, \ldots, K\}}  \mathbb{E}[\calG(x^k)] \le \sum_{k=0}^{K}\frac{\mathbb{E}[\calG(x^k)]}{K+1}  \ . $$\end{small}Applying \cref{lm: general 3rd term upper bound} to \cref{eq lm: general convergence rate nonconvex} of \cref{lm: general convergence rate nonconvex} we obtain:
	\begin{small}
		\begin{equation}
			\sum_{k=0}^{K}\frac{ \calG(x^k)}{K+1}  \le \frac{2\eps_0 + LD^2}{2\sqrt{K+1}} + \frac{\hat{L}D^3}{2n(K+1)}\sum_{k=0}^{K}  \sum_{i=1}^{n}\Big( \sum_{j = \tau^k_i}^{k-1}  \gamma_j\Big)^2.
	\end{equation}\end{small}Taking expectations on both sides, it follows that:
	\begin{small}
		\begin{equation}\label{ineq: general non-convex thm 1}
			\sum_{k=0}^{K}\frac{ \mathbb{E}[\calG(x^k)]}{K+1}  \le \frac{2\eps_0 + LD^2}{2\sqrt{K+1}} + \frac{\hat{L}D^3}{2n(K+1)} \sum_{k=0}^{K}   \mathbb{E}\bigg[  \sum_{i=1}^{n}\Big( \sum_{j = \tau^k_i}^{k-1}  \gamma_j\Big)^2  \bigg]. 
	\end{equation}\end{small}Now define $d_{ij} := 1$ and $M_1 := n$ and $M_\infty := 1$.  Then since $\sup_{j}   \sum_{i=1}^{n} d_{ij}= M_1$,  and  $\sup_{i,j} d_{ij}  = M_\infty$, we can apply the second inequality of \cref{lm:  non-convex EZ} to \cref{ineq: general non-convex thm 1} and we thus obtain the second inequality of \cref{eq: thm: general convergence rate non-convex}.  \end{proof}

\section{Specializing to $\erml$ Problems}\label{sec erml problems}

In this section, we specialize our results to the setting of $\erml$ problems \cref{pro: ERM with linear prediction} -- namely problems whose loss functions are characterized by linear prediction. We will analyze $\rone$ (for convex loss functions) and $\rtwo$ (for nonconvex loss functions) and show the corresponding computational guarantees for \cref{alg: TUFW in general finite sum}.

Because the feasibility set $\calC$ is bounded, the following measure of the range of the linear operator $W^\top$
\begin{equation}\label{def: D_q}
	 D_q := \max_{x,y\in \calC} \left\|W^\top (x-y)\right\|_q 
\end{equation} 
is finite  for all $q \in  [1,\infty]$. We note that \cref{def: D_q} is a common metric used in the literature of problem \cref{pro: ERM with linear prediction}, see \cite{lu2021generalized,negiar2020stochastic}.  
We have the following computational guarantees for \cref{alg: TUFW in general finite sum} when specialized to $\erml$ in the convex and non-convex cases, respectively.

\begin{theorem}\label{thm: convergence rate convex}
	Suppose that $F$ is convex and \cref{assump: ERM with linear prediction} holds, and \cref{alg: TUFW in general finite sum} with $\rone$ is applied to the problem \cref{pro: ERM with linear prediction} with step-sizes defined by $\gamma_k:={2}/({k+2})$ for all $k \ge 0$. Then for all $k\ge 1$ we have:
	\begin{equation}\label{eq: thm: convergence rate convex}
		\mathbb{E}[F(x^{k}) - F(x^\star) ] \le  \frac{2L D_2^2 + 134 \hat{L}D_1D_{\infty}^2}{n(k+1)} \ . 
	\end{equation}
\end{theorem}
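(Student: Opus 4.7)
The plan is to mirror the proof of Theorem 3.3 (the $\erm$ convex convergence result) but exploit the linear-prediction structure of $\erml$ at two specific places: the smoothness of $F$, and the error bound on the Taylor-estimated gradient. The gain is that instead of the coarse bounds $\|x^k-x^{\tau^k_i}\| \le D$ and $\|u-v\|\le D$, we can use scalar bounds on $w_i^\top(x^k-x^{\tau^k_i})$ and $w_i^\top(u-v)$, which will let us replace $D^3$ by $D_1 D_\infty^2$ and $D^2$ by $D_2^2/n$.

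First I would establish the $\erml$ analog of \cref{prop: general basic ineq}: from \cref{assump: ERM with linear prediction} one gets $|l_i(\bar\theta)-l_i(\hat\theta)-l_i'(\hat\theta)(\bar\theta-\hat\theta)| \le \tfrac{L}{2}(\bar\theta-\hat\theta)^2$ and $|l_i'(\bar\theta)-l_i'(\hat\theta)-l_i''(\hat\theta)(\bar\theta-\hat\theta)| \le \tfrac{\hat L}{2}(\bar\theta-\hat\theta)^2$. Summing the first inequality over $i$ with $\bar\theta = w_i^\top y$, $\hat\theta = w_i^\top x$ gives $F(y) \le F(x) + \nabla F(x)^\top(y-x) + \frac{L}{2n}\|W^\top(y-x)\|_2^2 \le F(x) + \nabla F(x)^\top(y-x) + \frac{L D_2^2}{2n}$. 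Plugging this into the derivation of \cref{lm: general convergence rate convex} (replacing $\tfrac{LD^2}{2}$ by $\tfrac{LD_2^2}{2n}$) and telescoping as before yields the $\erml$ version
\begin{equation*}
\eps_k \le \frac{2LD_2^2}{n(k+1)} + \frac{2}{k(k+1)}\sum_{t=1}^{k} t \bigl(\nabla F(x^{t-1}) - g^{t-1}\bigr)^\top (s^{t-1} - x^\star).
\end{equation*}

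Next I would produce the $\erml$ analog of \cref{lm: general 3rd term upper bound}. Writing $\nabla f_i(x) = l_i'(w_i^\top x) w_i$ and $\nabla^2 f_i(x) = l_i''(w_i^\top x) w_i w_i^\top$, each summand in $\nabla F(x^k)-g^k$ factors as a scalar times $w_i$, so
\begin{equation*}
\bigl(\nabla F(x^k) - g^k\bigr)^\top (u-v) \le \frac{\hat L}{2n}\sum_{i=1}^{n} \bigl(w_i^\top(x^k-x^{\tau^k_i})\bigr)^2 |w_i^\top(u-v)| \le \frac{\hat L D_\infty}{2n}\sum_{i=1}^{n}\Bigl(\sum_{j=\tau^k_i}^{k-1}\gamma_j |w_i^\top(s^j-x^j)|\Bigr)^2,
\end{equation*}
using $|w_i^\top(u-v)|\le D_\infty$ and $x^{j+1}-x^j = \gamma_j(s^j-x^j)$. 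This is exactly the form needed to invoke \cref{lm: convex EZ} with $d_{ij} := |w_i^\top(s^j-x^j)|$: then $\sup_j\sum_i d_{ij} \le D_1$ so $M_1 = D_1$, and $\sup_{i,j} d_{ij} \le D_\infty$ so $M_\infty = D_\infty$.

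Finally I would take expectations in the bound from Step 1, substitute the Step 2 inequality (with expectations), and apply \cref{lm: convex EZ} to get $\mathbb{E}[\sum_i(\sum_j \gamma_j d_{ij})^2] \le \frac{134 D_1 D_\infty}{t+1}$ at iteration $t$. Plugging in and collapsing the sum $\sum_{t=1}^k t/(t+1) \le k$ gives $\mathbb{E}[\eps_k] \le \frac{2LD_2^2}{n(k+1)} + \frac{134\hat L D_1 D_\infty^2}{n(k+1)}$, which is \cref{eq: thm: convergence rate convex}. The only nontrivial bookkeeping is in Step 3, where the correct choice of $d_{ij}$ and the separate appearances of $D_1$ (through $M_1$) and $D_\infty$ (one factor from the $|w_i^\top(u-v)|$ bound and another from $M_\infty$) must be tracked cleanly; everything else is a direct reprise of the proof in \cref{proof-convex-generalerm}.
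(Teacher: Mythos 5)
Your proposal is correct and follows essentially the same route as the paper: the paper's own proof proceeds via exactly the three ingredients you describe — an $\erml$ smoothness bound giving $LD_2^2/2n$ in place of $LD^2/2$ (\cref{lm: basic ineq 1} feeding into \cref{lm: convergence rate convex}), the scalar-factored gradient-error bound with one factor of $D_\infty$ extracted from $|w_i^\top(u-v)|$ (\cref{lm: 3rd term upper bound}), and an application of \cref{lm: convex EZ} with $d_{ij}:=|w_i^\top(s^j-x^j)|$, $M_1:=D_1$, $M_\infty:=D_\infty$. The bookkeeping you flag in Step 3 is handled identically in the paper.
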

\begin{theorem}
	\label{thm: convergence rate non-convex}
	Suppose that \cref{assump: ERM with linear prediction} holds and $F$ is not necessarily convex, and \cref{alg: TUFW in general finite sum} with $\rtwo$ is applied to the problem \cref{pro: ERM with linear prediction} with step-sizes defined by $\gamma_k  := \gamma := {1}/{\sqrt{K+1}}$ for all $k \ge 0$, where $K \ge 1$ is given. Then:
	\begin{equation}\label{eq: thm: convergence rate non-convex}
		\mathbb{E}[\min_{k \in \{0, \ldots, K\}}\calG(x^k)] \le   \sum_{k=0}^{K}\frac{\mathbb{E}[\calG(x^k)]}{K+1} \le \frac{F(x^0) - F(x^*)}{\sqrt{K+1}} + \frac{3\hat{L}D_1D_\infty^2 + LD_2^2}{2n\sqrt{K+1}} \ . 
	\end{equation}
\end{theorem}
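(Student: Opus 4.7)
My plan is to follow the proof of \cref{thm: general convergence rate non-convex} step by step, but at the two places where the general $\erm$ smoothness constants $LD^2$ and $\hat L D^3$ enter, use the linear-prediction structure $f_i(x) = l_i(w_i^\top x)$ to obtain sharper bounds that scale like $LD_2^2/n$ and $\hat L D_1 D_\infty^2/n$ respectively. The stochastic machinery in \cref{lm:  non-convex EZ} will then apply as before, just with a different choice of the series $\{d_{ij}\}$.

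For the descent inequality used in \cref{lm: general convergence rate nonconvex}, condition (iii) of \cref{assump: ERM with linear prediction} gives the one-dimensional quadratic bound $l_i(\theta) - l_i(\theta') - l_i'(\theta')(\theta-\theta') \le \tfrac{L}{2}(\theta-\theta')^2$ on $w_i(\calC)$. Applying this with $\theta = w_i^\top x^{k+1}$ and $\theta' = w_i^\top x^k$, then averaging over $i$, yields
\[
F(x^{k+1}) - F(x^k) - \gamma_k \nabla F(x^k)^\top (s^k-x^k) \le \tfrac{L\gamma_k^2}{2n}\|W^\top(s^k - x^k)\|_2^2 \le \tfrac{L\gamma_k^2 D_2^2}{2n} \ .
\]
Tracing through the proof of \cref{lm: general convergence rate nonconvex} with this sharper one-step bound simply replaces $LD^2$ by $LD_2^2/n$ in \cref{eq lm: general convergence rate nonconvex}.

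For the gradient-error bound in \cref{lm: general 3rd term upper bound}, the linear-prediction structure gives $\nabla f_i(x) = l_i'(w_i^\top x) w_i$ and $\nabla^2 f_i(x) = l_i''(w_i^\top x) w_i w_i^\top$, so the Taylor residual reduces to a scalar remainder multiplied by $w_i$. Condition (iv) of \cref{assump: ERM with linear prediction} yields $|l_i'(w_i^\top x^k) - l_i'(w_i^\top x^{\tau^k_i}) - l_i''(w_i^\top x^{\tau^k_i}) w_i^\top(x^k - x^{\tau^k_i})| \le \tfrac{\hat L}{2}(w_i^\top(x^k - x^{\tau^k_i}))^2$. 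Pulling $|w_i^\top(u-v)| \le D_\infty$ out of the Taylor-remainder sum and telescoping $w_i^\top(x^k - x^{\tau^k_i}) = \sum_{j=\tau^k_i}^{k-1}\gamma_j w_i^\top(s^j - x^j)$ via the triangle inequality gives the $\erml$-analog of \cref{lm: general 3rd term upper bound}:
\[
(\nabla F(x^k) - g^k)^\top(u-v) \le \tfrac{\hat L D_\infty}{2n}\sum_{i=1}^n \Big(\sum_{j=\tau^k_i}^{k-1}\gamma_j |w_i^\top(s^j - x^j)|\Big)^2 \ .
\]
Setting $d_{ij} := |w_i^\top(s^j - x^j)|$, the definitions of $D_1$ and $D_\infty$ give the almost-sure bounds $\sum_{i=1}^n d_{ij} \le D_1$ and $d_{ij} \le D_\infty$, so applying the second inequality of \cref{lm:  non-convex EZ} with $M_1 = D_1$ and $M_\infty = D_\infty$ produces $\mathbb{E}[\sum_{i=1}^n(\sum_{j=\tau^k_i}^{k-1}\gamma_j d_{ij})^2] \le 3 D_1 D_\infty / \sqrt{K+1}$, and hence $\mathbb{E}[(\nabla F(x^k) - g^k)^\top(s^k - \bar s^k)] \le 3\hat L D_1 D_\infty^2/(2n\sqrt{K+1})$.

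Plugging these two sharpened ingredients into the sharpened descent inequality, then taking expectations and summing over $k = 0, \ldots, K$, yields $\sum_{k=0}^K \mathbb{E}[\calG(x^k)]/(K+1) \le \eps_0/\sqrt{K+1} + (3\hat L D_1 D_\infty^2 + LD_2^2)/(2n\sqrt{K+1})$, which is \cref{eq: thm: convergence rate non-convex}; the outer minimum is bounded by the average as usual. The main obstacle is the gradient-error step: getting the $D_1$ and $D_\infty$ norms to split cleanly so that the hypotheses of \cref{lm:  non-convex EZ} are satisfied with $M_1 M_\infty = D_1 D_\infty$, rather than some weaker combination. Peeling a single factor of $D_\infty$ out of $|w_i^\top(u-v)|$ is the maneuver that accomplishes this. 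A minor technical point is that here the series $d_{ij}$ is random (it depends on past iterates), but the conditional recursion in the proof of \cref{lm:  non-convex EZ} goes through unchanged provided the row-sum and entry-wise bounds hold almost surely, which they do.
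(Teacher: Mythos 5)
Your proposal is correct and follows essentially the same route as the paper: the sharpened descent inequality is the paper's \cref{lm: basic ineq 1} (feeding into \cref{lm: convergence rate non-convex}), the gradient-error bound is exactly \cref{lm: 3rd term upper bound}, and the final step applies \cref{lm:  non-convex EZ} with the same choices $d_{ij}=|w_i^\top(s^j-x^j)|$, $M_1=D_1$, $M_\infty=D_\infty$. Your closing remark about the $d_{ij}$ being random is a point the paper glosses over, but your resolution (the recursion only needs the almost-sure row-sum and entrywise bounds) is the right one.
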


Using \cref{karljunior} and \cref{proposition: complextity of K iterations}, we can then bound the total number of flops as follows.

\begin{corollary} \label{lori}
	Under the hypotheses of \cref{thm: convergence rate convex}, it holds that $\mathbb{E}[F(x^{k}) - F(x^\star) ] \le \varepsilon$ after at most $ \frac{2L D_2^2 + 134 \hat{L}D_1D_{\infty}^2}{n \varepsilon} $
	iterations, and the total number of flops required is at most
	\begin{equation*} O\left( (\mathrm{fLMO} + p^2) \left[\frac{C}{n \varepsilon} \right] + np^2
		\left[\frac{\sqrt{C}}{\sqrt{n} \sqrt{\varepsilon}} \right] 
		\right) \ ,  \end{equation*}
	where $C:=L D_2^2 + \hat{L}D_1D_{\infty}^2$.
\end{corollary}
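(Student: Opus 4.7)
The plan is to chain together the convergence rate from \cref{thm: convergence rate convex} with the flop-count bound from \cref{karljunior}, then invert the rate to express everything in terms of the target tolerance $\varepsilon$. This is a purely bookkeeping corollary, so I would not expect any new ideas to be required.

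First I would use the bound
$$\mathbb{E}[F(x^k)-F(x^\star)] \le \frac{2LD_2^2 + 134\hat{L}D_1D_\infty^2}{n(k+1)}$$
supplied by \cref{thm: convergence rate convex}, and ask for the smallest $k$ that makes the right-hand side at most $\varepsilon$. Solving for $k+1$ gives
$$k+1 \;\ge\; \frac{2LD_2^2 + 134\hat{L}D_1D_\infty^2}{n\varepsilon},$$
which is exactly the claimed iteration count. Writing $C := LD_2^2 + \hat{L}D_1D_\infty^2$, this is $k = O(C/(n\varepsilon))$.

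Next I would substitute this value of $k$ into the expected flop-count bound $O(k(\mathrm{fLMO}+p^2) + \sqrt{k}\,np^2)$ supplied by \cref{karljunior}, observing that \cref{karljunior} transfers verbatim to the $\erml$ setting because its proof only counts per-iteration work in terms of $|\calB_k|$, $p$, and $\mathrm{fLMO}$ and never invokes the Lipschitz constants. The first term then becomes $(\mathrm{fLMO}+p^2)\cdot C/(n\varepsilon)$, while the second becomes
$$np^2\cdot\sqrt{C/(n\varepsilon)} \;=\; np^2\cdot\frac{\sqrt{C}}{\sqrt{n}\sqrt{\varepsilon}},$$
matching the claimed bound up to absolute constants (the factors $2$ and $134$ being absorbed into $C$ via the $O(\cdot)$).

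There is essentially no main obstacle: the only point worth flagging is that \cref{karljunior} was stated for general $\erm$ under $\rone$, and one should note that the proof never used \cref{assump: ERM} beyond what is needed to define the algorithm, so it applies in the $\erml$ regime of \cref{assump: ERM with linear prediction} without modification. The remainder is a direct substitution.
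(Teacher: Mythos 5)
Your proposal is correct and matches the paper's argument exactly: the paper derives \cref{lori} by combining the rate in \cref{thm: convergence rate convex} with the flop count of \cref{karljunior}, which is precisely your substitution $k = O(C/(n\varepsilon))$ into $O\bigl(k(\mathrm{fLMO}+p^2) + \sqrt{k}\,np^2\bigr)$. Your remark that \cref{karljunior} carries over to the $\erml$ setting because it only counts per-iteration work is a valid and worthwhile clarification, consistent with how the paper uses it.
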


\begin{corollary}\label{corollary: n and eps non-convex} Under the hypotheses of \cref{thm: convergence rate non-convex}, the bound on the number of flops required to obtain $\mathbb{E}_{\hat k \sim \calU([K])} \mathbb{E}[\calG(x^{\hat k})] \le \eps$ is 
		\begin{equation*}
			O\left( (\mathrm{fLMO} + p^2) \left[
			\frac{C^2}{n^2\eps^2 } 
			\right]+ 
			np^2 
			\left[
			\frac{C^{3/2}}{n^{3/2}\eps^{3/2} } 
			\right]
			\right) \ ,
		\end{equation*}
where $C := n(F(x^0) - F(x^*))   +  L D_2^2 + \hat{L} D_1 D_\infty^2$.
\end{corollary}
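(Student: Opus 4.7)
The plan is to combine the non-convex convergence bound in \cref{thm: convergence rate non-convex} with the per-iteration flop accounting in \cref{proposition: complextity of K iterations}. The proof will closely parallel the derivation of \cref{corollary: general n and eps non-convex} from \cref{thm: general convergence rate non-convex}, the only difference being that the $\erml$-specific constants $D_1$, $D_2$, $D_\infty$, and the extra factor of $1/n$ in the convergence bound must be tracked carefully through the algebra.

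First, I would invert the convergence bound. \cref{thm: convergence rate non-convex} gives
\begin{equation*}
  \mathbb{E}_{\hat k\sim\calU([K])}\mathbb{E}[\calG(x^{\hat k})] = \sum_{k=0}^{K}\frac{\mathbb{E}[\calG(x^k)]}{K+1} \le \frac{F(x^0)-F(x^\star)}{\sqrt{K+1}} + \frac{3\hat{L}D_1D_\infty^2 + LD_2^2}{2n\sqrt{K+1}} \, .
\end{equation*}
Pulling $1/(n\sqrt{K+1})$ out of both terms on the right-hand side yields an upper bound of $C'/(n\sqrt{K+1})$ where $C' := n(F(x^0)-F(x^\star)) + \tfrac{1}{2}(3\hat{L}D_1D_\infty^2 + LD_2^2)$, and $C' = \Theta(C)$ for the constant $C$ defined in the corollary. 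Requiring the right-hand side to be at most $\eps$ gives $\sqrt{K+1} \ge C'/(n\eps)$, i.e.\ $K+1 = O(C^2/(n^2\eps^2))$ suffices.

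Second, I would plug this iteration count into the flop bound from \cref{proposition: complextity of K iterations}, which asserts that running \cref{alg: TUFW in general finite sum} under $\rtwo$ for $K$ iterations costs $O\bigl(K(\mathrm{fLMO}+p^2) + K^{3/4}\,np^2\bigr)$ flops in expectation. Substituting $K = O(C^2/(n^2\eps^2))$ gives
\begin{equation*}
  O\!\left( (\mathrm{fLMO}+p^2)\cdot \frac{C^2}{n^2\eps^2} + np^2\cdot \frac{C^{3/2}}{n^{3/2}\eps^{3/2}} \right) ,
\end{equation*}
which is exactly the stated bound.

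There is no real obstacle here — the argument is essentially a direct substitution — but the one thing to be careful about is that in \cref{thm: convergence rate non-convex} the factor $1/n$ appears only in front of the Lipschitz-type term and not in front of $F(x^0)-F(x^\star)$, so when consolidating into a single constant one must multiply through by $n$ before squaring in order to obtain the grouping $C = n(F(x^0)-F(x^\star)) + LD_2^2 + \hat{L}D_1D_\infty^2$ with the correct $n$-dependence in the final flop bound.
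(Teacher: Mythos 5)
Your proposal is correct and matches the paper's (implicit) argument exactly: the paper derives \cref{corollary: n and eps non-convex} by combining \cref{thm: convergence rate non-convex} with the flop count in \cref{proposition: complextity of K iterations}, which is precisely your substitution $K = O(C^2/(n^2\eps^2))$ into $O(K(\mathrm{fLMO}+p^2) + K^{3/4}np^2)$. Your remark about multiplying through by $n$ before consolidating the constant is the right (and only) point of care here.
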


When the boundness assumption on $w_i $  in \cref{lm transfer assumption} holds, then $D_2 \le \sqrt{n} D_\infty \le  \sqrt{n} M  D$ and $D_1 \le n D_\infty \le  n M  D$, and  \cref{lori} directly yields \cref{cor general convex erm} for $\erml$, which implies that these results on $\erml$ problems are potentially stronger than directly applying the results for $\erm$ problems to $\erml$ problems, due to the different measurement of the diameter of $\calC$.

\subsection{Proofs of results}\label{subsection proof erml}

We will prove \cref{thm: convergence rate convex} and \cref{thm: convergence rate non-convex} as a consequence of the following sequence of lemmas.

\begin{lemma}\label{lm: basic ineq 1}
	Under \cref{assump: ERM with linear prediction} for problem $\erml$, 
	\begin{equation}\label{ineq: general convergence rate with I(s)}
		F(x^{k+1}) \le  F(x^k)+\gamma_{k}\left\langle\nabla F(x^k),  s^{k}- x^k\right\rangle+\gamma_{k}^{2} {L D_{2}^2}/{2n}
	\end{equation}
	holds for iteration $k$ of \cref{alg: TUFW in general finite sum} for all $k \ge 0$.
\end{lemma}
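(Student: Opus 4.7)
The goal is a Frank-Wolfe style descent lemma, but with the constant $L D_2^2/(2n)$ in place of the usual smoothness-based constant. The plan is to exploit the linear prediction structure of $\erml$ so that the error term naturally collects through the linear operator $W^{\top}$, rather than through $\nabla^2 F$.

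First I would write $F(x^{k+1}) = \tfrac{1}{n}\sum_{i=1}^{n} l_i(w_i^{\top} x^{k+1})$ and apply the univariate smoothness of each $l_i$. Specifically, since $l_i'$ is $L$-Lipschitz on $w_i(\calC)$ by \cref{assump: ERM with linear prediction}(iii), the standard quadratic upper bound
\begin{equation*}
l_i(\bar\theta) \le l_i(\hat\theta) + l_i'(\hat\theta)(\bar\theta - \hat\theta) + \tfrac{L}{2}(\bar\theta - \hat\theta)^2
\end{equation*}
applied at $\hat\theta = w_i^{\top} x^k$ and $\bar\theta = w_i^{\top} x^{k+1}$ (both in $w_i(\calC)$ since $\calC$ is convex and $x^{k+1}$ lies on the segment between $x^k$ and $s^k$) gives a pointwise inequality for each $i$.

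Next I would sum the $n$ inequalities and divide by $n$. The first-order terms assemble into $\langle \nabla F(x^k), x^{k+1}-x^k\rangle$ because $\nabla F(x^k) = \tfrac{1}{n}\sum_i l_i'(w_i^{\top} x^k) w_i$. The quadratic terms assemble as $\tfrac{L}{2n}\sum_{i=1}^n (w_i^{\top}(x^{k+1}-x^k))^2 = \tfrac{L}{2n}\|W^{\top}(x^{k+1}-x^k)\|_2^2$. Then I would substitute the update rule $x^{k+1}-x^k = \gamma_k(s^k - x^k)$, pull $\gamma_k^2$ out, and apply the definition $D_2 := \max_{x,y\in\calC}\|W^{\top}(x-y)\|_2$ to bound $\|W^{\top}(s^k - x^k)\|_2^2 \le D_2^2$. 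Combining these pieces gives exactly \cref{ineq: general convergence rate with I(s)}.

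There is no real obstacle here; the only subtlety to flag is that the bound uses the per-observation Lipschitz constant $L$ (of each univariate $l_i'$) together with the linear-prediction geometry, which is why the constant $L D_2^2/(2n)$ is sharper than what one would get by treating $F$ as a generic smooth function on $\calC$ with its own Lipschitz constant. This sharpening, via $D_2$ in place of the Euclidean diameter $D$, is precisely what yields the improved dependence on $n$ in \cref{thm: convergence rate convex} and \cref{thm: convergence rate non-convex}.
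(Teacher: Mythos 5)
Your proposal is correct and follows essentially the same route as the paper's proof: apply the univariate quadratic upper bound from the $L$-Lipschitz continuity of $l_i'$ at $\hat\theta=w_i^\top x^k$, $\bar\theta=w_i^\top x^{k+1}$, average over $i$ so the quadratic terms collect into $\tfrac{L}{2n}\|W^\top(x^{k+1}-x^k)\|_2^2$, and then use the update rule together with the definition of $D_2$. No gaps.
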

\begin{proof}[Proof of \cref{lm: basic ineq 1}]
	
	For iteration $k$ it follows from \cref{assump: eq: ERM structured l'} that
	\begin{small}
		\begin{equation*}
			l_i(w_i^\top x^{k+1})\le  l_i(w_i^\top x^{k}) +  l_i'(w_i^\top x^{k})(w_i^\top x^{k+1} - w_i^\top x^{k}) + {L}(w_i^\top x^{k+1} - w_i^\top x^{k})^2/2 \ .
		\end{equation*}
	\end{small}Then since $F(x^{k+1})  =\frac{1}{n}\sum_{i=1}^{n} l_i(w_i^\top x^{k+1})$, we have:
	\begin{small}\begin{equation*}
		\begin{aligned}
			F(x^{k+1})\le ~ & F(x^k)+\gamma_{k}\langle \nabla F(x^k),  s^{k}- x^k\rangle+\gamma_{k}^{2} L\big\|W^\top s^{k}-W^\top x^k\big\|_{2}^{2}/2n \\
			\le ~ & F(x^k)+\gamma_{k}\langle \nabla F(x^k),  s^{k}- x^k\rangle+\gamma_{k}^{2} L D_2^{2}/2n \ , 
		\end{aligned}
	\end{equation*}\end{small}where the last inequality above uses \cref{def: D_q}. \end{proof}

\begin{lemma}\label{lm: 3rd term upper bound}
	Under \cref{assump: ERM with linear prediction} for the problem $\erml$, for any $u,v \in\calC$,
	\begin{equation}\label{eq: lm: 3rd term upper bound}
		\left(
		\nabla F(x^k)   - g^k
		\right)^\top (u - v) \le \frac{\hat{L}D_{\infty}}{2n}\sum_{i=1}^{n}\Big( \sum_{j = \tau^k_i}^{k-1}  \gamma_j\left|w_{i}^\top (s^{j} -  x^{j})\right|\Big)^2 \ .
	\end{equation}
	holds for iteration $k$ of \cref{alg: TUFW in general finite sum} for all $k \ge 1$.
\end{lemma}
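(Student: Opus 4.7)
The plan is to mimic the structure of the proof of the general $\erm$ version (Lemma~\ref{lm: general 3rd term upper bound}), but to exploit the univariate/linear-prediction structure $f_i(x)=l_i(w_i^\top x)$ so as to pull the norm bound inside using $D_\infty$ rather than the global diameter. The key identities I will use repeatedly are $\nabla f_i(x) = l_i'(w_i^\top x)\,w_i$ and $\nabla^2 f_i(x) = l_i''(w_i^\top x)\,w_iw_i^\top$, together with the fact that at iteration $k$ the Taylor point for observation $i$ is $b_i=x^{\tau_i^k}$.

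First I would rewrite the $i$-th coordinate of the gradient error:
\begin{equation*}
\nabla f_i(x^k) - g_i^k \;=\; \bigl[\,l_i'(w_i^\top x^k) - l_i'(w_i^\top x^{\tau_i^k}) - l_i''(w_i^\top x^{\tau_i^k})\, w_i^\top(x^k - x^{\tau_i^k})\,\bigr]\, w_i.
\end{equation*}
Applying the Lipschitz bound on $l_i''$ from \cref{assump: ERM with linear prediction}(iv) in the standard univariate quadratic-Taylor form gives
\begin{equation*}
\bigl|\,l_i'(w_i^\top x^k) - l_i'(w_i^\top x^{\tau_i^k}) - l_i''(w_i^\top x^{\tau_i^k})\, w_i^\top(x^k - x^{\tau_i^k})\,\bigr| \;\le\; \tfrac{\hat{L}}{2}\,\bigl|w_i^\top(x^k - x^{\tau_i^k})\bigr|^2.
\end{equation*}

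Next I would expand the inner product and separate the ``observation factor'' from the ``test direction $u-v$'' factor:
\begin{equation*}
\bigl(\nabla F(x^k) - g^k\bigr)^\top (u-v) \;=\; \tfrac{1}{n}\sum_{i=1}^n \bigl[\,l_i'(w_i^\top x^k) - l_i'(w_i^\top x^{\tau_i^k}) - l_i''(w_i^\top x^{\tau_i^k})\, w_i^\top(x^k - x^{\tau_i^k})\,\bigr]\, w_i^\top(u-v),
\end{equation*}
and then bound in absolute value using the previous display together with $|w_i^\top(u-v)| \le \|W^\top(u-v)\|_\infty \le D_\infty$ by the definition \cref{def: D_q}. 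This yields
\begin{equation*}
\bigl(\nabla F(x^k) - g^k\bigr)^\top (u-v) \;\le\; \frac{\hat{L}\,D_\infty}{2n}\sum_{i=1}^n \bigl|w_i^\top(x^k-x^{\tau_i^k})\bigr|^2.
\end{equation*}

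Finally I would telescope $x^k - x^{\tau_i^k}$ using the update rule $x^{j+1} = x^j + \gamma_j(s^j-x^j)$ from \cref{hotel} of \cref{alg: TUFW in general finite sum}, obtaining $w_i^\top(x^k-x^{\tau_i^k}) = \sum_{j=\tau_i^k}^{k-1} \gamma_j\, w_i^\top(s^j-x^j)$, and then apply the triangle inequality before squaring to get
\begin{equation*}
\bigl|w_i^\top(x^k-x^{\tau_i^k})\bigr|^2 \;\le\; \Bigl(\,\sum_{j=\tau_i^k}^{k-1}\gamma_j\,|w_i^\top(s^j-x^j)|\,\Bigr)^2.
\end{equation*}
Substituting this back gives the claimed bound. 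There is no real obstacle here: the only place the linear-prediction structure genuinely matters is in localizing the error to the scalars $w_i^\top(\cdot)$ so that we may use $D_\infty$ in place of $D$, and this is exactly what the identity $\nabla f_i(x)=l_i'(w_i^\top x)w_i$ affords.
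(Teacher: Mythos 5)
Your proposal is correct and follows essentially the same route as the paper's proof: rewrite the error via $\nabla f_i(x)=l_i'(w_i^\top x)w_i$ and $\nabla^2 f_i(x)=l_i''(w_i^\top x)w_iw_i^\top$, apply the univariate Taylor remainder bound from \cref{assump: eq: ERM structured l''}, bound $|w_i^\top(u-v)|\le D_\infty$, and telescope with the triangle inequality. The only (immaterial) difference is that the paper defers the $D_\infty$ bound to the final step.
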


\begin{proof}[Proof of \cref{lm: 3rd term upper bound}]
	It follows from the definition of $\tau^k_i$ in \cref{def: tau} that 
	\begin{small}
		\begin{equation*}\label{ineq: etimate gradient 1}
			\begin{array}{l}
				\left(
				\nabla F(x^k)   - g^k
				\right)^\top (u - v) \\ 
				= \frac{1}{n}\sum_{i=1}^{n}\big(l_{i}'\big(w_{i}^\top x^k \big) -  l_{i}'\big(w_{i}^\top x^{\tau^{k}_i}\big) -   l_{i}''\big(w_{i}^\top x^{\tau^{k}_i}\big)\big(w_i^\top x^k  - w_i^\top x^{\tau^{k}_i} \big)    \big)\cdot w_{i}^\top (u - v) \\
				\le  \frac{\hat{L}}{2n}\sum_{i=1}^{n} |w_{i}^\top x^k - w_{i}^\top x^{\tau^k_i} |^2 \cdot |w_{i}^\top(u - v)| \\
				=  \frac{\hat{L}}{2n}\sum_{i=1}^{n} \big( \sum_{j = \tau^k_i}^{k-1} w_{i}^\top (x^{j+1} -  x^{j}) \big)^2 \cdot |w_{i}^\top(u - v)| \\
				\le  \frac{\hat{L}}{2n}\sum_{i=1}^{n} \big( \sum_{j = \tau^k_i}^{k-1} \left|w_{i}^\top (x^{j+1} -  x^{j})\right| \big)^2 \cdot |w_{i}^\top(u - v)| \\
				=  \frac{\hat{L}}{2n}\sum_{i=1}^{n} \big( \sum_{j = \tau^k_i}^{k-1} \left| \gamma_jw_{i}^\top (s^{j} -  x^{j})\right| \big)^2 \cdot |w_{i}^\top(u - v)|
			\end{array}
		\end{equation*}
	\end{small}where the first inequality uses \cref{assump: eq: ERM structured l''}. 
	We then use $|w_{i}^\top(u - v)|  \le D_\infty$ to obtain \eqref{eq: lm: 3rd term upper bound}.
\end{proof}


\begin{lemma}\label{lm: convergence rate convex}
Under \cref{assump: ERM with linear prediction} for problem $\erml$, suppose that 
$F$ is convex, and let the step-sizes be given by $\gamma_k = {2}/(k+2)$ for all $k \ge 0$.  Then for all $k \ge 1$ we have 
\begin{equation}\label{eq: lm: convergence rate convex}
	\eps_k  \le  \frac{2L D_2^2}{n (k+1)}
	+  \frac{2}{k(k+1)}\sum_{t=1}^{k} t \left(
	\nabla F(x^{t-1})   - g^{t-1}
	\right)^\top (s^{t-1} - x^\star) \ . 
\end{equation}
\end{lemma}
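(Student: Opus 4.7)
The proof plan mirrors the proof of \cref{lm: general convergence rate convex} almost step for step, the only change being the source of the quadratic smoothness term: instead of invoking \cref{prop: general basic ineq} directly to bound $F(x^{k+1})$, we now invoke \cref{lm: basic ineq 1}, which has already been established for the $\erml$ setting and gives the improved constant $LD_2^2/n$ in place of $LD^2$.

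Concretely, I would proceed as follows. First, starting from \cref{ineq: general convergence rate with I(s)} in \cref{lm: basic ineq 1}, write
\begin{equation*}
F(x^{k+1}) \le F(x^k) + \gamma_k \langle \nabla F(x^k), s^k - x^k\rangle + \gamma_k^2 LD_2^2/(2n),
\end{equation*}
then add and subtract $\gamma_k \langle g^k, s^k - x^k\rangle$ to split the linear term into $\gamma_k\langle \nabla F(x^k) - g^k, s^k - x^k\rangle + \gamma_k\langle g^k, s^k - x^k\rangle$. Second, exploit the LMO optimality $s^k \in \arg\min_{s\in\calC}\langle g^k,s\rangle$ to replace $s^k$ by $x^\star$ in the $g^k$-term (since this can only decrease the inner product), which after regrouping yields
\begin{equation*}
F(x^{k+1}) \le F(x^k) + \gamma_k \langle \nabla F(x^k) - g^k, s^k - x^\star\rangle + \gamma_k \langle \nabla F(x^k), x^\star - x^k\rangle + \gamma_k^2 LD_2^2/(2n).
\end{equation*}
Third, apply the gradient inequality from convexity of $F$ to bound $\langle \nabla F(x^k), x^\star - x^k\rangle \le F(x^\star) - F(x^k)$. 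Subtracting $F(x^\star)$ from both sides produces the one-step recursion
\begin{equation*}
\eps_{k+1} \le (1-\gamma_k)\eps_k + \gamma_k (\nabla F(x^k) - g^k)^\top(s^k - x^\star) + \gamma_k^2 LD_2^2/(2n).
\end{equation*}

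The final step is the standard Frank--Wolfe telescoping trick with the choice $\gamma_k = 2/(k+2)$. Multiplying both sides by $(k+1)(k+2)$, the coefficient $(1-\gamma_k)(k+1)(k+2) = k(k+1)$ makes the recursion telescope, and summing from $k=0$ to $k-1$ leaves $k(k+1)\eps_k$ on the left and $2(k+1)LD_2^2/n + 2\sum_{t=0}^{k-1}(t+1)(\nabla F(x^t) - g^t)^\top(s^t - x^\star)$ on the right. Dividing through by $k(k+1)$ and re-indexing $t \to t-1$ delivers \cref{eq: lm: convergence rate convex}.

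There is essentially no hard step here: the argument is a direct translation of the $\erm$ proof into the linear-prediction setting, and the only substitution is replacing the smoothness-based quadratic term $\gamma_k^2 LD^2/2$ with the sharper $\gamma_k^2 LD_2^2/(2n)$ from \cref{lm: basic ineq 1}. The noise term $(\nabla F(x^k) - g^k)^\top(s^k - x^\star)$ is left untouched at this stage; its expected magnitude will be controlled later via \cref{lm: 3rd term upper bound} (the $\erml$-specific analogue of \cref{lm: general 3rd term upper bound}) combined with the moment bounds of \cref{lm: convex EZ} when the theorem is assembled.
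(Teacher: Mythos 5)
Your proposal is correct and follows essentially the same route as the paper: the paper's own proof of this lemma simply states that it is nearly identical to that of \cref{lm: general convergence rate convex}, obtained by replacing \cref{prop: general basic ineq} with \cref{lm: basic ineq 1} and $D^2$ with $D_2^2/n$, which is exactly the substitution you carry out. The one-step recursion, the use of LMO optimality and convexity, and the $(k+1)(k+2)$ telescoping are all the same as in the paper's argument.
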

\begin{proof}[Proof of \cref{lm: convergence rate convex}] The proof is nearly identical to that of \cref{lm: general convergence rate convex}, and follows by replacing \cref{prop: general basic ineq} with  \cref{lm: basic ineq 1} and replacing $D^2$ with $D_2^2/n$. 
\end{proof}

\begin{proof}[Proof of \cref{thm: convergence rate convex}]
Applying \cref{lm: 3rd term upper bound} to \cref{eq: lm: convergence rate convex}, it follows that: 
\begin{small}
	\begin{equation*}
		\eps_k \le  \frac{2L D_2^2}{n (k+1)}
		+  \frac{\hat{L}D_{\infty}}{nk(k+1)}\sum_{t=1}^{k}t\sum_{i=1}^{n}\Big( \sum_{j = \tau^{t-1}_i}^{t-2}  \gamma_j\left|w_{i}^\top (s^{j} -  x^{j})\right|\Big)^2 \ .
	\end{equation*}
\end{small}Taking expectations on both sides then yields
\begin{small}
	\begin{equation}\label{ineq: convex thm 1}
		\mathbb{E} [\eps_k]  \le  \frac{2L D_2^2}{n (k+1)}
		+  \frac{\hat{L}D_{\infty}}{nk(k+1)}\sum_{t=1}^{k}t~\mathbb{E}\Big[
		\sum_{i=1}^{n}\Big( \sum_{j = \tau^{t-1}_i}^{t-2}  \gamma_j\left|w_{i}^\top (s^{j} -  x^{j})\right|\Big)^2
		\Big] \ .
	\end{equation}
\end{small}Now define $d_{ij} := \left|w_{i}^\top (s^{j} -  x^{j})\right|$ and $M_1 := D_1$ and $M_\infty := D_\infty$.  Then since $\sup_{j}   \sum_{i=1}^{n} d_{ij} = \sup_{j}   \sum_{i=1}^{n}  \left|w_{i}^\top (s^{j} -  x^{j})\right|\le D_1 = M_1$  and  $\sup_{i,j} d_{ij}$ is equal to $\sup_{i,j} \left|w_{i}^\top (s^{j} -  x^{j})\right|$ which is bounded by $ D_\infty = M_\infty$, we can apply \cref{lm: convex EZ} to \cref{ineq: convex thm 1}  and obtain
\begin{small}\begin{equation}\label{ineq: convex thm 2}
		\mathbb{E} [\eps_k]   \le  \frac{2L D_2^2}{n (k+1)}
		+  \frac{\hat{L}D_{\infty}}{nk(k+1)}\sum_{t=1}^{k}t~\frac{134 D_1D_\infty}{t+1} \le \frac{2L D_2^2}{n (k+1)}
		+  \frac{134 \hat{L}D_1D_{\infty}^2}{n(k+1)} \ , 
	\end{equation}
\end{small}which demonstrates \cref{eq: thm: convergence rate convex}.
\end{proof}

Before proving \cref{thm: convergence rate non-convex}, we present the following lemma. 
\begin{lemma}\label{lm: convergence rate non-convex}
	Under \cref{assump: ERM with linear prediction} for problem  \cref{pro: ERM with linear prediction}, if we use the step-size $\gamma_k  := \gamma := {1}/{\sqrt{K+1}}$ in \cref{alg: TUFW in general finite sum}, and let $\bar{s}^{k} \in\arg\min_{s\in\calC} (\nabla F(x^k)^\top s$, then
	\begin{equation}\label{eq: lm: convergence rate non-convex}
		\sum_{k=0}^{K} \frac{\calG(x^k)}{K+1} \le \frac{\eps_0}{\sqrt{K+1}} + \frac{1}{K+1}\sum_{k=0}^{K} (\nabla F(x^k)-g^k)^\top ( s^{k}-\bar{s}^{k}) + \frac{LD_2^2}{2n\sqrt{K+1}} \ .
	\end{equation}
\end{lemma}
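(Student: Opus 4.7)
The plan is to mirror the proof of \cref{lm: general convergence rate nonconvex} essentially verbatim, swapping in the linear-prediction smoothness bound from \cref{lm: basic ineq 1} for the general ERM smoothness bound from \cref{prop: general basic ineq}. Concretely, the only place in which the particular Lipschitz inequality enters the argument for \cref{lm: general convergence rate nonconvex} is the opening step bounding $F(x^{k+1})$ in terms of $F(x^k)$, the directional term $\gamma_k\langle \nabla F(x^k), s^k - x^k\rangle$, and a quadratic-in-$\gamma_k$ error. For the $\erml$ setting, \cref{lm: basic ineq 1} provides exactly such a bound, with the error term $\gamma_k^2 L D_2^2/(2n)$ in place of $\gamma_k^2 L D^2/2$.

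From this starting inequality, I would write $\nabla F(x^k) = g^k + (\nabla F(x^k) - g^k)$ to split the linear term, use the fact that $s^k \in \arg\min_{s \in \calC} \langle g^k, s\rangle$ (so $\langle g^k, s^k - x^k\rangle \le \langle g^k, \bar s^k - x^k\rangle$), and then rewrite $\langle g^k, \bar s^k - x^k\rangle = \langle \nabla F(x^k), \bar s^k - x^k\rangle + \langle g^k - \nabla F(x^k), \bar s^k - x^k\rangle$. The first piece equals $-\calG(x^k)$ by the definition of $\bar s^k$ and of the Frank-Wolfe gap in \cref{def: FW gap}, and combining with the other $(\nabla F(x^k)-g^k)$ terms yields
\begin{equation*}
F(x^{k+1}) \le F(x^k) - \gamma_k \calG(x^k) + \gamma_k (g^k - \nabla F(x^k))^\top(\bar s^k - s^k) + \gamma_k^2 L D_2^2/(2n) \ .
\end{equation*}

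Plugging in the constant step-size $\gamma_k = \gamma = 1/\sqrt{K+1}$ and rearranging gives $\calG(x^k)/\sqrt{K+1}$ bounded by $F(x^k) - F(x^{k+1})$ plus $(1/\sqrt{K+1})(\nabla F(x^k) - g^k)^\top(s^k - \bar s^k)$ plus $L D_2^2/(2n(K+1))$. I would then sum this inequality over $k = 0, 1, \dots, K$, telescope the function-value differences via $F(x^0) - F(x^{K+1}) \le F(x^0) - F(x^\star) = \eps_0$, and divide through by $\sqrt{K+1}$, which produces exactly \cref{eq: lm: convergence rate non-convex}.

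There is no serious obstacle here, since \cref{lm: basic ineq 1} is what does all the problem-specific work; the remainder is a purely algebraic adaptation of the earlier proof, and the only bookkeeping difference is the replacement $LD^2 \leftrightarrow LD_2^2/n$ in the quadratic error term, which propagates directly into the final $LD_2^2/(2n\sqrt{K+1})$ contribution in \cref{eq: lm: convergence rate non-convex}.
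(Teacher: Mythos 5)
Your proposal is correct and follows exactly the paper's route: the paper's own proof states that the argument is nearly identical to that of \cref{lm: general convergence rate nonconvex}, obtained by replacing \cref{prop: general basic ineq} with \cref{lm: basic ineq 1} and $D^2$ with $D_2^2/n$, which is precisely the substitution and subsequent algebra (splitting the gradient, invoking LMO optimality of $s^k$, identifying $-\calG(x^k)$, summing and telescoping) that you carry out.
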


\begin{proof}[Proof of \cref{lm: convergence rate non-convex}] The proof is nearly identical to that of \cref{lm: general convergence rate nonconvex}, and follows by replacing \cref{prop: general basic ineq} with  \cref{lm: basic ineq 1} and replacing $D^2$ with $D_2^2/n$.
\end{proof}


\begin{proof}[Proof of \cref{thm: convergence rate non-convex}]
	The first inequality of \cref{eq: thm: convergence rate non-convex} directly follows from \cref{thm: general convergence rate non-convex}.
	Applying \cref{lm: 3rd term upper bound} to \cref{eq: lm: convergence rate non-convex} of  \cref{lm: convergence rate non-convex} we obtain:
	\begin{small}
		\begin{equation}
			\sum_{k=0}^{K}\frac{ \calG(x^k)}{K+1}  \le \frac{2n\eps_0 + LD_2^2}{2n\sqrt{K+1}} + \frac{\hat{L}D_\infty}{2n(K+1)}\sum_{k=0}^{K}  \sum_{i=1}^{n}\Big( \sum_{j = \tau^k_i}^{k-1}  \gamma_j\left|w_{i}^\top (s^{j} -  x^{j})\right|\Big)^2.
	\end{equation}\end{small}Taking expectations on both sides, it follows that:
	\begin{small}
		\begin{equation}\label{ineq: non-convex thm 1}
			\sum_{k=0}^{K}\frac{ \mathbb{E}[\calG(x^k)]}{K+1}  \le \frac{2n\eps_0 + LD_2^2}{2n\sqrt{K+1}} + \frac{\hat{L}D_\infty}{2n(K+1)} \sum_{k=0}^{K}   \mathbb{E}\bigg[  \sum_{i=1}^{n}\Big( \sum_{j = \tau^k_i}^{k-1}  \gamma_j\left|w_{i}^\top (s^{j} -  x^{j})\right|\Big)^2  \bigg]. 
	\end{equation}\end{small}Now define $d_{ij} := \left|w_{i}^\top (s^{j} -  x^{j})\right|$ and $M_1 := D_1$ and $M_\infty := D_\infty$.  Then since $\sup_{j}   \sum_{i=1}^{n} d_{ij}$ is equal to $\sup_{j}   \sum_{i=1}^{n}  \left|w_{i}^\top (s^{j} -  x^{j})\right|$ which is at most $D_1 = M_1$,  and  $\sup_{i,j} d_{ij}$ is equal to $\sup_{i,j} \left|w_{i}^\top (s^{j} -  x^{j})\right|$ which is at most $D_\infty = M_\infty$, we can apply the second inequality of \cref{lm:  non-convex EZ} to \cref{ineq: non-convex thm 1} and we obtain the second inequality of \cref{eq: thm: convergence rate non-convex}.  \end{proof}

\section{Adaptive Step-size Rules}\label{sec: Extensions}

Note that the results in the previous two sections are based on non-adaptive step-size rules, in part because the methods only work with approximate gradients and so objective function improvement is more difficult to prove.  However, because of the extra smoothness of the second derivatives of the loss functions $l_i$, it turns out that one can build an accurate-enough local quadratic model of each $l_i$ which will make an adaptive step-size work well. Consider the following adaptive step-size for TUFW:
\begin{equation}\label{eq: adaptive step size}
	\tilde{\gamma}_k := 
	\left\{
	\begin{array}{ll}
		\min\left\{\gamma_k,    \frac{(g^k)^\top (x^k - s^k)}{   (s^k -  x^{k})^\top {H_k}(s^k -  x^{k})}  \right\} & \text{ if } (s^k -  x^{k})^\top {H_k}(s^k -  x^{k}) > 0  \ , \\
		\gamma_k & \text{ if } (s^k -  x^{k})^\top {H_k}(s^k -  x^{k}) \le 0 \ ,  
	\end{array}\right.
\end{equation}
where $H_k$ is defined in \cref{eq: intermedia variables} and $\gamma_k$ is the standard step-size. Our computational results in \cref{exper} show the huge advantage of \cref{eq: adaptive step size} over the comparable non-adaptive step-size, and computational guarantees for \cref{eq: adaptive step size} are presented in the technical report \cite{punt}.

\section{Computational Experiments}\label{exper}

In this section we present the results of computational experiments where we compare Algorithm TUFW (with the four different batch construction Rules developed earlier) and five other methods, namely the standard Frank-Wolfe method \cite{frank1956algorithm,jaggi2013revisiting} plus related methods in the recent literature, as follows:
\begin{itemize}
	\item (FW) -- the standard Frank-Wolfe method, see \cite{frank1956algorithm,jaggi2013revisiting}
	\item (FW-ada) -- the standard Frank-Wolfe method with adaptive step-size \cite{dem1967minimization}, which is defined by $\gamma_k := \min\{1,\calG(x^k)/(L \|x^k - s^k\|^2) \}$, where $L$ is the Lipschitz constant
	\item (SPIDER-FW) -- the Frank-Wolfe method with stochastic path-integrated differential estimator technique developed in \cite{yurtsever2019conditional}
	\item (CSFW) -- the constant batch-size stochastic Frank-Wolfe method developed in \cite{negiar2020stochastic}, and 
	\item (CASPIDERG) -- curvature-aided stochastic path-integrated differential
	estimator gradient developed in \cite{shen2019complexities}.
\end{itemize}

We tested these methods on both convex and non-convex instances of $\erm$.   We first conducted computational tests on problems with convex losses, using instances of the $\ell_1$-constrained logistic regression problem (LR): 
\begin{equation}\label{paul}
	   \mbox{LR:}  \ \ \  \min_{x}~  F(x):=\tfrac{1}{n} \sum_{i=1}^{n} \ln(1+\exp( - y_i w_i^\top x)) ~~\text{ s.t. } \|x\|_1\le \lambda \ , 
\end{equation}
where the $\ell_1$-ball constraint is designed to induce sparsity and/or regularization.  We chose instances of LR from LIBSVM \cite{chang2011libsvm} for which the number of observations $n \gg 0$ but not so large as to render the problem intractable. This yielded $12$ instances from LIBSVM, namely {\tt a1a}, {\tt a2a}, {\tt a8a}, {\tt a9a}, {\tt w1a}, {\tt w2a}, {\tt w7a}, {\tt w8a}, {\tt svmguide3}, {\tt phishing}, {\tt ijcnn1}, and {\tt covtype}, whose dimensions $n$ and $p$ are displayed in the third and forth columns of Table \ref{tbl: convex results CV tau}.  The value of $\lambda$ was specified by $5$-fold cross-validation of each instance.  However, we used the same cross-validated $\lambda$ for {\tt a1a}, {\tt a2a}, {\tt a8a}, and {\tt a9a}, and similarly for {\tt w1a}, {\tt w2a}, {\tt w7a}, for {\tt w8a}.  (Separately, we also tested the methods on larger feasible regions by enlarging each instance's feasbile region using $\lambda' :=100 \lambda$.) All methods were implemented and run in Python on the MIT Engaging Cluster. To ensure fairness, our implementations did not use sparse linear algebra for any methods, nor did we implement the techniques discussed in the last paragraph of section \ref{tufw}.

We measured the performance of methods on problems with convex losses using the Frank-Wolfe gap $\calG(x^k)$ defined in \cref{def: FW gap}, since 
$\calG(x^k)$ is a (conservative) upper bound on the optimality gap, namely $F(x^k) - F^* \le \calG(x^k)$ (see \cite{jaggi2013revisiting}), and the optimal value $F^*$ is not known.  We computed $\calG(x^k)$ separately and offline after-the-fact.

Note that $\rone$ and $\rtwo$ for TUFW require uniform sampling of $\bar\beta_k := |\calB_k|$ indices from $[n]$ without replacement.  These indices are then used to perform matrix and vector operations on the corresponding indexed columns of $W$.  Because matrix and vector operations involving random indices are inherently inefficient in practice, we instead implemented $\rone$ and $\rtwo$ by uniformly sampling a starting index $\hat i \sim \calU([n])$ and then assigning the other $|\bar\beta_k | - 1$ indices successively to construct $\calB_k := \{ \hat i, \hat i +1, \ldots, \hat i +  \bar\beta_k -1 \}$ (modulo $n$). This modification does not affect any of the  theoretical convergence guarantees because \cref{ncl} and \cref{nclz} still hold and thus \cref{lm: convex EZ} is still true, and similarly  \cref{ncl2} and \cref{ncl3} still hold and thus \cref{lm:  non-convex EZ} is still true.

\cref{fig:compare TUFW} shows the performance of TUFW with the rules $\rone$ and $\rthree$, and the standard Frank-Wolfe method, on the dataset {\tt a9a} (which we found to be representative of typical performance).  The methods were run both with and without adaptive step-sizes.  Notice that the TUFW methods significantly outperform the Frank-Wolfe method.  $\rthree$ using adaptive step-sizes significantly outperforms the other methods in terms of overall runtime and has similar LMO dependence as $\rone$. The second-best performance is from TUFW with $\rone$ with adaptive step-sizes.  As these results are typical, we will focus on $\rone$ and $\rthree$ with adaptive step-sizes in further comparisons with other Frank-Wolfe methods.
\begin{figure}[htbp]
	\centering
	\includegraphics[width=0.7\linewidth]{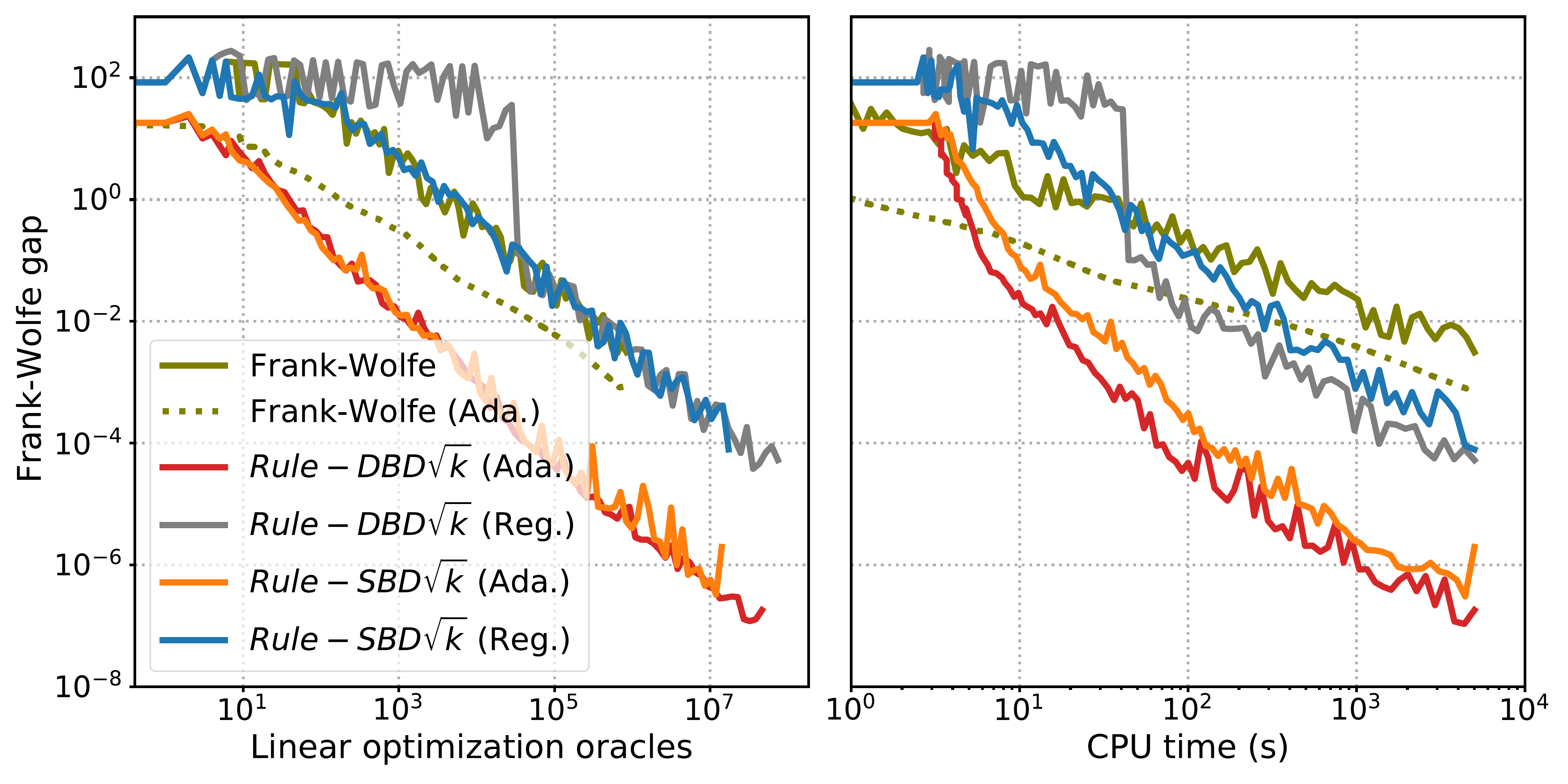}
	\caption{Performance of TUFW with $\rone$ and $\rthree$ and the standard Frank-Wolfe method, with and without adaptive step-sizes, on the logistic regression problem \cref{paul}, on the dataset {\tt a9a}.}
	\label{fig:compare TUFW}
\end{figure}

In \cref{tbl: convex results CV tau} we compare the CPU runtimes (in seconds) of the TUFW methods using $\rone$ and $\rthree$, with FW, FW-ada, SPIDER-FW, and CSFW. (We did not include CASPIDERG in these tests as it is not designed for convex objective functions.) The batch-size of CSFW was set to $n/100$, which is exactly as in the implementation in \cite{negiar2020stochastic}. For each method and each dataset we ran $10$ trials, and we report the average of these trials.  We only report average runtimes that did not exceed 5000 seconds, otherwise the entry is blank. In each row of the table the bold number highlights the best runtime among the six methods.  The last column of the table reports the speed-up of TUFW (the best of $\rone$ and $\rthree$) over the best of the other four methods.  Speed-up values larger than $1$ are shown in bold.  We observe from \cref{tbl: convex results CV tau} that the TUFW method -- using either $\rone$ or $\rthree$, significantly outperforms the other methods on almost all problem instances and almost all target optimality tolerances.  Furthermore, as the optimality tolerance $\eps$ decreases and/or as the number of observations $n$ increase, the advantage of TUFW methods over the other methods becomes even more evident. Additional computational experiments with TUFW are presented in \cite{punt}.

\begin{table}[htbp]
	\centering
	\caption{Comparison of average CPU runtimes (in seconds) required to achieve $\calG(x^k) \le \eps$ for methods on the logistic regression problem \cref{paul} with $\lambda$ chosen by cross-validation.  (A blank indicates the method used more than 5000 seconds.) }\label{tbl: convex results CV tau}
	\begin{adjustbox}{width=1\columnwidth,center}
		\begin{tabular}{|llll|llllll|l|}
			\hline
			$\eps$ & dataset & $n$ & $p$ & $\rone$ & $\rthree$ & FW & FW-ada & SPIDER-FW & CSFW & Speed-up \\ 
			\hline
			1e-1&  a1a & 1605 & 123&   0.73&   \textbf{0.37  }&   2.02&   1.18&   348.58&   5.03 & \textbf{3.20}\\
			1e-3&  a1a & 1605 & 123&   14.40&   \textbf{5.14  }&   138.87&   380.10&     &   160.02 & \textbf{27.02}\\
			1e-5&  a1a & 1605 & 123&   3029.61&   \textbf{1034.66  }&   2956.50&     &     &      & \textbf{2.86}\\
			\hline
			1e-1&  a2a & 2265 & 123&   1.02&   \textbf{0.52  }&   3.03&   1.55&   358.72&   5.47 & \textbf{2.97}\\
			1e-3&  a2a & 2265 & 123&   21.24&   \textbf{8.16  }&   197.80&   472.72&     &   167.66 & \textbf{20.54}\\
			1e-5&  a2a & 2265 & 123&   2039.75&   \textbf{631.23  }&     &     &     &      &  \\
			\hline
			1e-1&  a8a & 22696 & 123&   7.75&   \textbf{4.66  }&   78.18&   12.83&   443.85&   15.22 & \textbf{2.75}\\
			1e-3&  a8a & 22696 & 123&   51.29&   \textbf{23.71  }&     &   3178.02&     &   1250.12 & \textbf{52.72}\\
			1e-5&  a8a & 22696 & 123&   420.64&   \textbf{174.87  }&     &     &     &      &  \\
			\hline
			1e-1&  a9a & 32561 & 123&   11.26&   \textbf{6.80  }&   94.50&   21.00&   440.73&   19.82 & \textbf{2.91}\\
			1e-3&  a9a & 32561 & 123&   65.25&   \textbf{31.22  }&     &   3844.13&     &   1660.13 & \textbf{53.17}\\
			1e-5&  a9a & 32561 & 123&   490.36&   \textbf{197.15  }&     &     &     &      &  \\
			\hline
			1e-1&  w1a & 2477 & 300&   10.45&   \textbf{5.71  }&   17.51&   331.22&   3566.83&   44.39 & \textbf{3.07}\\
			1e-3&  w1a & 2477 & 300&   51.51&   \textbf{21.93  }&   327.38&     &     &   1219.03 & \textbf{14.93}\\
			1e-5&  w1a & 2477 & 300&   1743.08&   \textbf{541.71  }&     &     &     &      &  \\
			\hline
			1e-1&  w2a & 3470 & 300&   13.11&   \textbf{6.80  }&   37.34&   574.42&     &   54.06 & \textbf{5.50}\\
			1e-3&  w2a & 3470 & 300&   128.14&   \textbf{50.77  }&   369.82&     &     &   1207.01 & \textbf{7.28}\\
			1e-5&  w2a & 3470 & 300&     &   \textbf{1422.93  }&     &     &     &      &  \\
			\hline
			1e-1&  w7a & 24692 & 300&   88.86&   \textbf{55.43  }&   541.10&     &     &   129.19 & \textbf{2.33}\\
			1e-3&  w7a & 24692 & 300&   320.17&   \textbf{178.87  }&     &     &     &   4413.77 & \textbf{24.68}\\
			1e-5&  w7a & 24692 & 300&   3106.98&   \textbf{1516.83  }&     &     &     &      &  \\
			\hline
			1e-1&  w8a & 49749 & 300&   174.89&   \textbf{114.66  }&   1198.35&     &     &   212.80 & \textbf{1.86}\\
			1e-3&  w8a & 49749 & 300&   553.94&   \textbf{355.23  }&     &     &     &      &  \\
			1e-5&  w8a & 49749 & 300&     &   \textbf{2707.92  }&     &     &     &      &  \\
			\hline
			1e-1&  svmguide3 & 1243 & 22&   0.09&   \textbf{0.02  }&   1.35&   0.31&   21.46&   2.06 & \textbf{13.43}\\
			1e-3&  svmguide3 & 1243 & 22&   8.01&   \textbf{2.64  }&   53.64&   175.49&     &   48.85 & \textbf{18.53}\\
			1e-5&  svmguide3 & 1243 & 22&   1132.50&   \textbf{410.63  }&   629.39&     &     &   1484.23 & \textbf{1.53}\\
			1e-7&  svmguide3 & 1243 & 22&     &     &     &     &     &      &  \\
			\hline
			1e-1&  phishing & 11055 & 68&   0.47&   0.38&   0.01&   0.71&   0.01&   \textbf{0.01  } & 0.02\\
			1e-3&  phishing & 11055 & 68&   2.81&   1.24&   0.67&   201.65&   \textbf{0.32  }&   0.53 & 0.26\\
			1e-5&  phishing & 11055 & 68&   45.36&   \textbf{16.10  }&   47.01&     &   1391.81&   43.70 & \textbf{2.71}\\
			1e-7&  phishing & 11055 & 68&   2478.92&   \textbf{812.38  }&   3370.12&     &     &      & \textbf{4.15}\\
			\hline
			1e-1&  ijcnn1 & 49990 & 22&   0.85&   0.24&   1.21&   9.02&   \textbf{0.22  }&   0.92 & 0.91\\
			1e-3&  ijcnn1 & 49990 & 22&   4.15&   \textbf{0.86  }&   54.72&   565.14&   193.34&   71.65 & \textbf{63.40}\\
			1e-5&  ijcnn1 & 49990 & 22&   7.67&   \textbf{1.66  }&     &   2322.76&     &      & \textbf{1402.31}\\
			1e-7&  ijcnn1 & 49990 & 22&   10.20&   \textbf{2.32  }&     &   3910.71&     &      & \textbf{1688.65}\\
			\hline
			1e-1&  covtype & 581012 & 54&   54.62&   29.76&   143.55&   123.79&   \textbf{5.47  }&   18.91 & 0.18\\
			1e-3&  covtype & 581012 & 54&   552.47&   \textbf{231.70  }&   2964.49&     &   843.60&   690.78 & \textbf{2.98}\\
			1e-5&  covtype & 581012 & 54&     &   \textbf{3777.45  }&     &     &     &      &  \\
			\hline
		\end{tabular}
	\end{adjustbox}
\end{table}

We next tested the methods on non-convex instances of $\erml$ using instances of the $\ell_1$-constrained binary classification problem with least-squares thresholding using the sigmoid function, namely: 
\begin{equation}\label{leslie}
	 \mbox{BCP:} \ \ \ \ \ \    \min_{x} ~ \tfrac{1}{n} \sum_{i=1}^{n} \left(y_i - \frac{1}{1+\exp(-w_i^\top x)}\right)^2 ~~\text{ s.t. } \|x\|_1\le \lambda \ , 
\end{equation}
using the same $12$ datasets from LIBSVM as described earlier.  Recall that the Frank-Wolfe gap $\calG(x)$ is a measure of non-stationarity, see \cite{lacoste2016convergence,reddi2016stochastic,yurtsever2019conditional,negiar2020stochastic}.  In the non-convex setting the design of most Frank-Wolfe methods involves setting the number of iterations $K$ in advance and using a fixed step-size (usually $1/\sqrt{K+1}$), and the convergence rate for the Frank-Wolfe gap is typically measured using the expectation of $\calG(x^{\hat k})$ where $\hat k$ is uniformly sampled over $\{0, \ldots, K \}$.  In our experiments we therefore compute the average Frank-Wolfe gap by computing $\sum_{i=0}^K \calG(x^i)/(K+1)$ and then averaging this value over $5$ independent trials.  In order to save on computation when $K \ge 100n$, we only compute and record the Frank-Wolfe gaps every $200$ iterations.  Just as in the convex case, we computed the Frank-Wolfe gaps separately and offline after-the-fact. 

For each  $K \in   \{10, 20, 10 \times 2^2, \ldots,10\times 2^{20}\}$, we ran $5$ independent trials of the methods on the $12$ datasets. \cref{fig:compare TUFW nonconvex} shows the performance of TUFW with $\rtwo$ and $\rfour$, and the standard Frank-Wolfe method, on the dataset {\tt a9a}, both with and without adaptive step-sizes.  Each dot in the right-most subfigure corresponds to the average Frank-Wolfe gap, and the average runtime (over $5$ trials) using a particular value of $K$ as described above. Similar to the convex case, the TUFW methods significantly outperform the Frank-Wolfe method.  $\rfour$ using adaptive step-sizes exhibits the best performance over the other methods in terms of  overall runtime and has nearly identical LMO dependence as $\rtwo$. The second-best performance is from TUFW with $\rtwo$ with adaptive step-sizes.  We focus on $\rtwo$ and $\rfour$ with adaptive step-sizes in further comparisons with other Frank-Wolfe methods. 

\begin{figure}[htbp]
	\centering
	\includegraphics[width=0.7\linewidth]{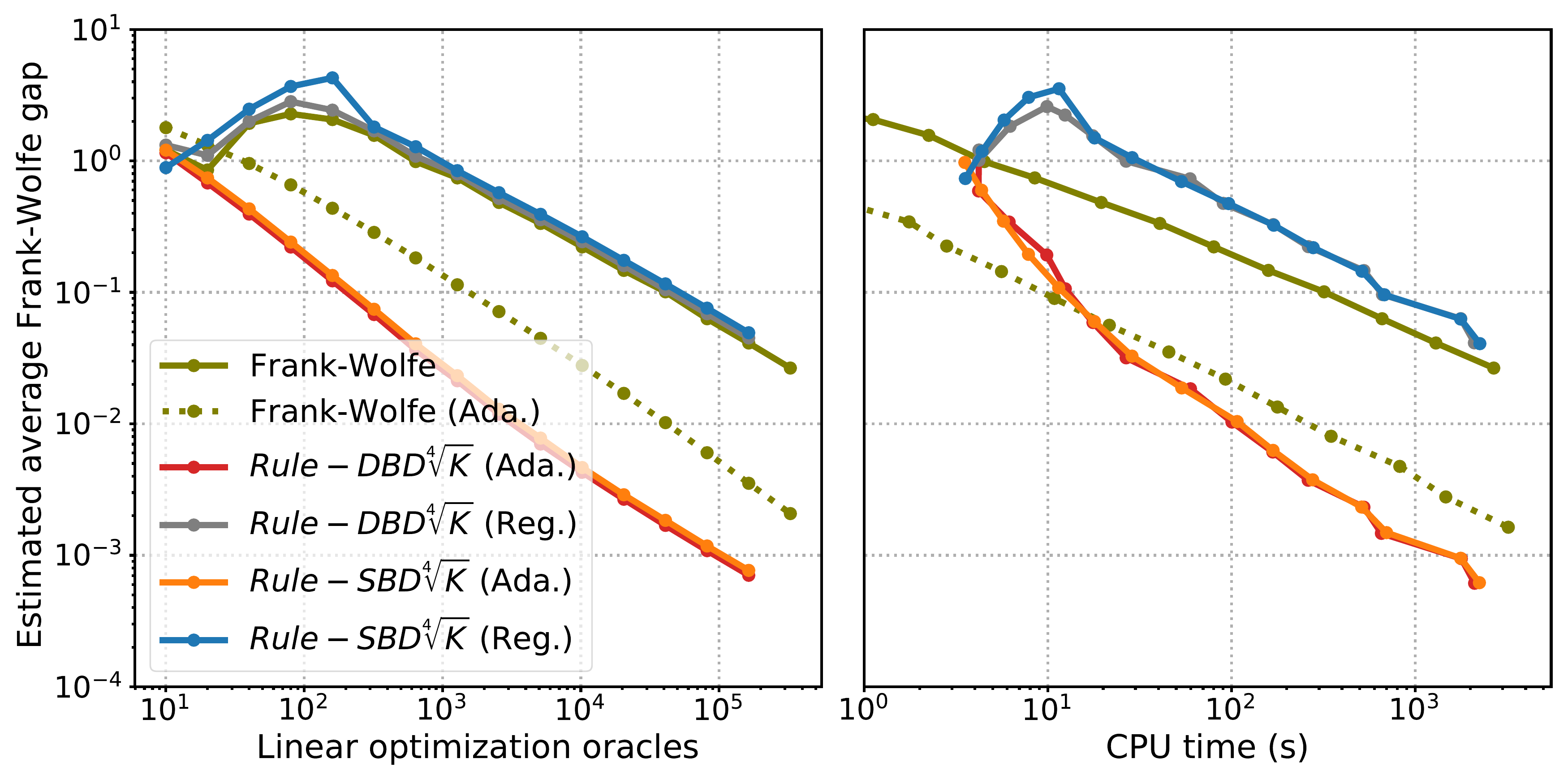}
	\caption{Performance of TUFW with $\rtwo$ and $\rfour$ and the standard Frank-Wolfe method, with and without adaptive step-sizes, on the binary classification problem \cref{leslie}, on the dataset {\tt a9a}.}
	\label{fig:compare TUFW nonconvex}
\end{figure}

In \cref{tbl: nonconvex results CV tau} we compare the CPU runtimes (in seconds) of the TUFW methods using $\rtwo$ and $\rfour$, with FW, FW-ada, SPIDER-FW, and CASPIDERG. CASPIDERG calculates full gradients every $n^{1/4}K^{1/4}$ iterations and its batch-size is set to $\max\{n^{3/4} K^{-1/4},1\}$.  (We did not include CSFW in these tests as it is not designed for non-convex objective functions.) All methods were run for each of the $21$ different values of $K$ and were terminated if they exceeded $5000$ seconds. Further details on these experiments are presented in \cite{punt}. In each row of the table the bold number highlights the best runtime among the six methods.  The last column of the table reports the speed-up of TUFW (the best of $\rtwo$ and $\rfour$) over the best of the other four methods.  Speed-up values larger than $1$ are shown in bold.  Notice from \cref{tbl: nonconvex results CV tau} that the TUFW method -- using either $\rtwo$ or $\rfour$ -- significantly outperforms the other methods on on the datasets {\tt a1a}, {\tt a2a}, {\tt a8a}, and {\tt a9a}, for sufficiently small values of $\eps$.  For the datasets {\tt w1a}, {\tt w2a}, {\tt w7a}, and {\tt w8a}, the TUFW methods are generally dominated by one of the other metods, with FW being the best on these datasets when $\eps$ is smaller.  On the datasets {\tt svmguide3}, {\tt phishing}, {\tt ijcnn1}, and {\tt covtype} the TUFW method outperforms the other methods, all the moreso as the required $\eps$ gets smaller.   

\begin{table}[htbp]
	\centering
	\caption{Comparison of average CPU runtimes (in seconds) required to achieve $\tfrac{1}{K+1}\sum_{k=0}^K\calG(x^k) \le \eps$ for methods on the non-convex binary classification problem \cref{leslie}.  (A blank indicates the method used more than 5000 seconds or $K \ge  10 \times  2^{20}$.)}\label{tbl: nonconvex results CV tau}
	\begin{adjustbox}{width=1\columnwidth,center}
		\begin{tabular}{|llll|llllll|l|}
			\hline
			$\calG(x^k)$ & dataset & $n$ & $p$ & $\rtwo$  & $\rfour$  & FW & FW-ada & SPIDER-FW & CASPIDERG & Speed-up
			\\
			\hline
			1e-2                &  a1a & 1605 & 119&   6.05 &   \textbf{4.44}&   10.46&   9.00&   15.77&                                   & \textbf{2.03}  \\
			1e-3                &  a1a & 1605 & 119&   90.34&   \textbf{65.11}&   818.40&   237.93&                                  &                                   & \textbf{3.65}  \\
			1e-4                &  a1a & 1605 & 119&   2225.61&   \textbf{1453.87}&                                  &   4953.53&                                  &                                   & \textbf{3.41}  \\
			\hline
			1e-2                &  a2a & 2265 & 119&   6.47&   \textbf{4.94}&   12.62&   10.55&   13.54&                                   & \textbf{2.14}  \\
			1e-3                &  a2a & 2265 & 119&   93.17&   \textbf{70.69}&   781.05&   303.67&                                  &                                   & \textbf{4.30}  \\
			1e-4                &  a2a & 2265 & 119&   2168.72&   \textbf{1389.71}&                                  &               &                                  &                                   &    \\
			\hline
			1e-2                &  a8a & 22696 & 123&   46.68&   41.92&   243.81&   140.10&   \textbf{35.70}&                                   & 0.85  \\
			1e-3                &  a8a & 22696 & 123&   668.35&   \textbf{603.88}&                                  &   3317.18&                                  &                                   & \textbf{5.49}  \\
			1e-4                &  a8a & 22696 & 123&                                  &                                  &                                  &                                  &                                  &                                   &    \\
			\hline
			1e-2                &  a9a & 32561 & 123&   83.24&   79.91&   358.08&   240.37&   \textbf{46.05}&                                   & 0.58  \\
			1e-3                &  a9a & 32561 & 123&   1165.35&   \textbf{1106.16}&                                  &           &                                  &                                   &    \\
			1e-4                &  a9a & 32561 & 123&                                  &                                  &                                  &                                  &                                  &                                   &    \\
			\hline
			5e-2                &  w1a                                & 2477 & 300&   8.99&   8.82&   \textbf{0.44}&   12.66&   0.96&   101.83 & 0.05  \\
			1e-2                &  w1a                                & 2477 & 300&   74.94&   102.18&   \textbf{4.69}&   157.90&   19.96&                                   & 0.06  \\
			2e-3                &  w1a                                & 2477 & 300&   1278.96&   4063.80&   \textbf{109.23}&   1768.32&                                  &                                   & 0.09  \\
			\hline
			5e-2                &  w2a                                & 3470 & 300&   12.64&   11.90&   \textbf{0.50}&   17.41&   1.01&   120.51 & 0.04  \\
			1e-2                &  w2a                                & 3470 & 300&   93.47&   122.01&   \textbf{7.44}&   226.70&   21.08&                                   & 0.08  \\
			2e-3                &  w2a                                & 3470 & 300&   900.09&   2545.77&   \textbf{152.15}&   2415.59&                                  &                                   & 0.17  \\
			\hline
			5e-2                &  w7a                                & 24692 & 300&   95.86&   90.12&   7.79&   271.26&   \textbf{2.16}&   595.38 & 0.02  \\
			1e-2                &  w7a                                & 24692 & 300&   544.57&   561.90&   80.96&   3596.39&   \textbf{29.75}&                                   & 0.05  \\
			2e-3                &  w7a                                & 24692 & 300&   4078.15&               &   \textbf{1631.47}&                                  &   2990.71&                                   & 0.40  \\
			\hline
			5e-2                &  w8a                                & 49749 & 300&   222.71&   216.21&   16.41&   534.20&   \textbf{3.25}&   1122.68& 0.02  \\
			1e-2                &  w8a                                & 49749 & 300&   1292.84&   1303.19&   176.70&                                  &   \textbf{41.53}&                                   & 0.03  \\
			2e-3                &  w8a                                & 49749 & 300&                                  &                                  &   \textbf{3268.55}&                                  &                                  &                                   &    \\
			\hline
			1e-1                &  svmguide3 & 1243 & 22&   0.47&   \textbf{0.15}&   2.84&   1.96&                                  &                                   & \textbf{13.39}  \\
			1e-2                &  svmguide3 & 1243 & 22&   7.95&   \textbf{2.41}&                                  &   74.73&                                  &                                   & \textbf{30.98}  \\
			1e-3                &  svmguide3 & 1243 & 22&   122.45&   \textbf{33.68}&                                  &   2946.23&                                  &                                   & \textbf{87.49}  \\
			1e-4                &  svmguide3 & 1243 & 22&   2418.83&   \textbf{804.17}&                                  &                                  &                                  &                                   &    \\
			\hline
			1e-1                &  phishing & 11055 & 68&   1.59&   1.17&   2.36&   102.48&   \textbf{1.17}&                                   & 0.99  \\
			1e-2                &  phishing & 11055 & 68&   17.53&   \textbf{12.18}&   158.49&   2506.03&                                  &                                   & \textbf{13.01}  \\
			1e-3                &  phishing & 11055 & 68&   216.38&   \textbf{154.79}&                                  &                                  &                                  &                                   &    \\
			1e-4                &  phishing & 11055 & 68&   5049.56&   \textbf{3558.37}&                                  &                                  &                                  &                                   &    \\
			\hline
			1e-1                &  ijcnn1 & 49990 & 22&   2.32&   \textbf{0.96}&   10.88&   103.03&   1.58&   368.77 & \textbf{1.64}  \\
			1e-2                &  ijcnn1 & 49990 & 22&   24.26&   \textbf{10.00}&   728.57&   2315.87&                                  &                                   & \textbf{72.85}  \\
			1e-3                &  ijcnn1 & 49990 & 22&   298.45&   \textbf{179.40}&                                  &                                  &                                  &                                   &    \\
			1e-4                &  ijcnn1 & 49990 & 22&                                  &   \textbf{2750.09}&                                  &                                  &                                  &                                   &    \\
			\hline
			5e-2                &  covtype                                & 581012 & 54&   156.41&   \textbf{110.28}&   2152.50&   2894.39&                                  &                                   & \textbf{19.52}  \\
			1e-2                &  covtype                                & 581012 & 54&   785.43&   \textbf{572.97}&                                  &                                  &                                  &                                   &    \\
			2e-3                &  covtype                                & 581012 & 54&   4292.90&   \textbf{3142.24}&                                  &                                  &                                  &                                   &    \\
			\hline
		\end{tabular}
	\end{adjustbox}
\end{table}

\bibliographystyle{siamplain}
\bibliography{references}

\begin{thebibliography}{10}

\bibitem{balasubramanian2018zeroth}
{\sc K.~Balasubramanian and S.~Ghadimi}, {\em Zeroth-order (non)-convex
  stochastic optimization via conditional gradient and gradient updates}, in
  Advances in Neural Information Processing Systems, vol.~31, 2018,
  pp.~3459--3468.

\bibitem{chang2011libsvm}
{\sc C.-C. Chang and C.-J. Lin}, {\em {LIBSVM}: a library for support vector
  machines}, ACM Transactions on Intelligent Systems and Technology (TIST), 2
  (2011), p.~27.

\bibitem{dem1967minimization}
{\sc V.~Demyanov and A.~Rubinov}, {\em The minimization of a smooth convex
  functional on a convex set}, SIAM Journal on Control,  (1967).

\bibitem{fang2018spider}
{\sc C.~Fang, C.~J. Li, Z.~Lin, and T.~Zhang}, {\em Spider: {N}ear-optimal
  non-convex optimization via stochastic path integrated differential
  estimator}, in Advances in Neural Information Processing Systems, vol.~31,
  2018, pp.~687--697.

\bibitem{fazel2002matrix}
{\sc M.~Fazel}, {\em Matrix rank minimization with applications}, PhD thesis,
  PhD thesis, Stanford University, 2002.

\bibitem{frank1956algorithm}
{\sc M.~Frank and P.~Wolfe}, {\em An algorithm for quadratic programming},
  Naval Research Logistics Quarterly, 3 (1956), pp.~95--110.

\bibitem{freund2017}
{\sc R.~M. Freund, P.~Grigas, and R.~Mazumder}, {\em An extended
  {F}rank–{W}olfe method with ``in-face'' directions, and its application to
  low-rank matrix completion}, SIAM Journal on Optimization, 27 (2017),
  pp.~319–--346.

\bibitem{gao2020can}
{\sc H.~Gao and H.~Huang}, {\em Can stochastic zeroth-order {F}rank-{W}olfe
  method converge faster for non-convex problems?}, in Proceedings of the 37th
  International Conference on Machine Learning, vol.~119, PMLR, 2020,
  pp.~3377--3386.

\bibitem{hassani2020stochastic}
{\sc H.~Hassani, A.~Karbasi, A.~Mokhtari, and Z.~Shen}, {\em Stochastic
  conditional gradient++:({N}on) convex minimization and continuous submodular
  maximization}, SIAM Journal on Optimization, 30 (2020), pp.~3315--3344.

\bibitem{hastie2009elements}
{\sc T.~Hastie, R.~Tibshirani, J.~H. Friedman, and J.~H. Friedman}, {\em The
  elements of statistical learning: {D}ata mining, inference, and prediction},
  vol.~2, Springer, 2009.

\bibitem{hazan2016variance}
{\sc E.~Hazan and H.~Luo}, {\em Variance-reduced and projection-free stochastic
  optimization}, in Proceedings of The 33rd International Conference on Machine
  Learning, vol.~48, PMLR, 2016, pp.~1263--1271.

\bibitem{huang2020accelerated}
{\sc F.~Huang, L.~Tao, and S.~Chen}, {\em Accelerated stochastic gradient-free
  and projection-free methods}, in Proceedings of the 37th International
  Conference on Machine Learning, vol.~119, PMLR, 2020, pp.~4519--4530.

\bibitem{jaggi2013revisiting}
{\sc M.~Jaggi}, {\em Revisiting {F}rank-{W}olfe: {P}rojection-free sparse
  convex optimization}, in Proceedings of the 30th International Conference on
  Machine Learning, vol.~28, PMLR, 2013, pp.~427--435.

\bibitem{johnson2013accelerating}
{\sc R.~Johnson and T.~Zhang}, {\em Accelerating stochastic gradient descent
  using predictive variance reduction}, in Advances in Neural Information
  Processing Systems, vol.~26, 2013.

\bibitem{lacoste2016convergence}
{\sc S.~Lacoste-Julien}, {\em Convergence rate of {F}rank-{W}olfe for
  non-convex objectives}, arXiv preprint arXiv:1607.00345,  (2016).

\bibitem{lan2013complexity}
{\sc G.~Lan}, {\em The complexity of large-scale convex programming under a
  linear optimization oracle}, arXiv preprint arXiv:1309.5550,  (2013).

\bibitem{lan2016conditional}
{\sc G.~Lan and Y.~Zhou}, {\em Conditional gradient sliding for convex
  optimization}, SIAM Journal on Optimization, 26 (2016), pp.~1379--1409.

\bibitem{lu2021generalized}
{\sc H.~Lu and R.~M. Freund}, {\em Generalized stochastic {F}rank-{W}olfe
  algorithm with stochastic “substitute” gradient for structured convex
  optimization}, Mathematical Programming, 187 (2021), pp.~317--349.

\bibitem{mairal2011convex}
{\sc J.~Mairal, R.~Jenatton, G.~Obozinski, and F.~Bach}, {\em Convex and
  network flow optimization for structured sparsity}, Journal of Machine
  Learning Research, 12 (2011).

\bibitem{mei2018landscape}
{\sc S.~Mei, Y.~Bai, and A.~Montanari}, {\em The landscape of empirical risk
  for nonconvex losses}, The Annals of Statistics, 46 (2018), pp.~2747--2774.

\bibitem{mokhtari2020stochastic}
{\sc A.~Mokhtari, H.~Hassani, and A.~Karbasi}, {\em Stochastic conditional
  gradient methods: {F}rom convex minimization to submodular maximization},
  Journal of Machine Learning Research,  (2020).

\bibitem{negiar2020stochastic}
{\sc G.~Negiar, G.~Dresdner, A.~Tsai, L.~E. Ghaoui, F.~Locatello, R.~Freund,
  and F.~Pedregosa}, {\em Stochastic {F}rank-{W}olfe for constrained finite-sum
  minimization}, in Proceedings of the 37th International Conference on Machine
  Learning, vol.~119, PMLR, 2020, pp.~7253--7262.

\bibitem{reddi2016stochastic}
{\sc S.~J. Reddi, S.~Sra, B.~P{\'o}czos, and A.~Smola}, {\em Stochastic
  {F}rank-{W}olfe methods for nonconvex optimization}, in 2016 54th Annual
  Allerton Conference on Communication, Control, and Computing (Allerton),
  IEEE, 2016, pp.~1244--1251.

\bibitem{sahu2019towards}
{\sc A.~K. Sahu, M.~Zaheer, and S.~Kar}, {\em Towards gradient free and
  projection free stochastic optimization}, in The 22nd International
  Conference on Artificial Intelligence and Statistics, PMLR, 2019,
  pp.~3468--3477.

\bibitem{shen2019complexities}
{\sc Z.~Shen, C.~Fang, P.~Zhao, J.~Huang, and H.~Qian}, {\em Complexities in
  projection-free stochastic non-convex minimization}, in Proceedings of the
  22nd International Conference on Artificial Intelligence and Statistics,
  vol.~89, PMLR, 2019, pp.~2868--2876.

\bibitem{tibshirani1996regression}
{\sc R.~Tibshirani}, {\em Regression shrinkage and selection via the lasso},
  Journal of the Royal Statistical Society: Series B (Methodological), 58
  (1996), pp.~267--288.

\bibitem{vapnik1991principles}
{\sc V.~Vapnik}, {\em Principles of risk minimization for learning theory}, in
  Advances in Neural Information Processing Systems, vol.~4, 1991,
  pp.~831--838.

\bibitem{punt}
{\sc Z.~Xiong and R.~M. Freund}, {\em Additional results and extensions for the
  paper ``{U}sing {T}aylor-approximated gradients to improve the
  {F}rank-{W}olfe method for empirical risk minimization''},  (2022),
  \url{https://zikaixiong.github.io/papers/fwErmReport.pdf}.

\bibitem{yurtsever2019conditional}
{\sc A.~Yurtsever, S.~Sra, and V.~Cevher}, {\em Conditional gradient methods
  via stochastic path-integrated differential estimator}, in Proceedings of the
  36th International Conference on Machine Learning, vol.~97, PMLR, 2019,
  pp.~7282--7291.

\bibitem{zhang2020one}
{\sc M.~Zhang, Z.~Shen, A.~Mokhtari, H.~Hassani, and A.~Karbasi}, {\em One
  sample stochastic {F}rank-{W}olfe}, in Proceedings of the 23rd International
  Conference on Artificial Intelligence and Statistics, vol.~108, PMLR, 2020,
  pp.~4012--4023.

\end{thebibliography}

\newpage

\end{document}